\documentclass{article}

 \usepackage[preprint]{neurips_2026}

\usepackage[utf8]{inputenc} 
\usepackage[T1]{fontenc}    
\usepackage{hyperref}       
\usepackage{url}            
\usepackage{booktabs}       
\usepackage{amsfonts}       
\usepackage{nicefrac}       
\usepackage{microtype}      
\usepackage{xcolor}         
\usepackage{amsmath}
\usepackage{amssymb}
\usepackage{optidef}
\usepackage{mathtools}
\usepackage{bbm}
\usepackage{amsthm}
\usepackage[most]{tcolorbox}
\usepackage{multirow}
\usepackage{graphicx}
\usepackage{subcaption}
\usepackage[export]{adjustbox}
\usepackage{caption}
\usepackage{enumitem}

\usepackage[capitalize]{cleveref}
    \crefname{section}{Sec.}{Secs.}
    \Crefname{section}{Section}{Sections}
    \Crefname{table}{Table}{Tables}
    \crefname{table}{Tab.}{Tabs.}
    
\newtheorem{theorem}{Theorem}

\newif\ifdraft
\draftfalse

\ifdraft
  \newcommand{\PF}[1]{{\color{red}{\bf PF: #1}}}
  \newcommand{\pf}[1]{{\color{red} #1}}
  \definecolor{fsgreen}{rgb}{0.3, 0.7, 0.3}
  \newcommand{\FS}[1]{{\color{fsgreen}{\bf FS: #1}}}
  \newcommand{\fs}[1]{{\color{fsgreen} #1}}
  \newcommand{\DM}[1]{{\color{blue}{\bf DM: #1}}} 
  \newcommand{\dm}[1]{{\color{blue} #1}}
  \newcommand{\NT}[1]{{\color{violet}{\bf NT: #1}}} 
  \newcommand{\nt}[1]{{\color{violet} #1}}
\else
  \newcommand{\PF}[1]{}
  \newcommand{\pf}[1]{#1}
  \newcommand{\FS}[1]{}
  \newcommand{\fs}[1]{#1}
  \newcommand{\DM}[1]{}
  \newcommand{\dm}[1]{#1}
  \newcommand{\NT}[1]{}
  \newcommand{\nt}[1]{#1}
\fi

\newcommand{\parag}[1]{\vspace{-3mm}\paragraph{#1}}

\DeclareMathOperator{\mint}{min^{(2)}} 
\DeclareMathOperator{\mino}{min^{(1)}} 
\DeclareMathOperator{\argmino}{argmin^{(1)}}

\newcommand{\bp}[0]{\mathbf{p}}
\newcommand{\vp}[0]{\mathbf{p}}
\newcommand{\vq}[0]{\mathbf{q}}
\newcommand{\bu}[0]{\mathbf{u}}
\newcommand{\by}[0]{\mathbf{y}}

\newcommand{\acro}[0]{S2MDF}
\newcommand{\real}[0]{\mathbb{R}}

\title{S2MDF: A Plug-And-Play Layer for Intersection-Free Multi-Object Signed Distance Fields}

\author{
	\textbf{Deniz Sayin Mercadier}\textsuperscript{\textbf{*}}\hspace{1em}\textbf{Federico Stella}\textsuperscript{\textbf{*}}\hspace{1em}\textbf{Aurel Bizeau}\hspace{1em}\textbf{Nicolas Talabot}\hspace{1em}\textbf{Pascal Fua} \\
	CVLab, Ecole Polytechnique Fédérale de Lausanne (EPFL) \\
	Lausanne, Switzerland \\
	\texttt{\{firstname.lastname\}@epfl.ch}
}

\begin{document}

\maketitle

\begingroup
\renewcommand{\thefootnote}{\fnsymbol{footnote}}
\footnotetext[1]{Equal contribution.}
\endgroup


\begin{abstract}

Compositional implicit surface representations model scenes as collections of objects, each encoded by a Signed Distance Field (SDF). A fundamental limitation of this approach is that multiple SDFs can produce geometries that interpenetrate, violating physical plausibility. Existing mitigation strategies rely on soft penalty terms that reduce but do not eliminate intersections, and require careful loss weighting. To truly prevent interpenetration, we propose a hard constraint on vector-valued SDFs and  introduce S2MDF, a lightweight plug-and-play module that enforces the constraint on any object-compositional SDF representation without architectural modifications. It introduces negligible computational overhead and is compatible with linearly-interpolated standard meshing algorithms such as Marching Cubes. It can be applied during training or as a post-processing step. Experiments on multiple state-of-the-art compositional methods show that S2MDF reduces intersections to numerical precision while preserving reconstruction quality, outperforming existing mitigation strategies.

   \end{abstract}

\section{Introduction}
\label{sec:intro}

Over the past few years, implicit neural representations (INRs) have reshaped the landscape of 3D vision and graphics. By encoding geometry or appearance as continuous functions parameterized by neural networks, methods such as DeepSDF~\citep{Park19c}, 3DShape2VecSet~\citep{Zhang23d}, NeRF~\citep{Mildenhall20}, NeuS~\citep{Wang21f} and their many extensions have demonstrated remarkable performance in tasks such as novel view synthesis~\cite{Tancik23}, high-fidelity surface reconstruction~\citep{Ge23}, shape generation~\citep{Lai25a}, manipulation~\citep{Hao20a} and optimization~\citep{Jiang20a}. Unlike older discrete representations such as meshes and voxels, INRs provide resolution-independent modeling, differentiability, and a natural framework for learning complex geometric priors. This paradigm shift has fostered significant progress in fields as diverse as robotics, medical imaging, and digital content creation, where accurate and flexible 3D representations are paramount. 

As these methods mature, a growing body of work has focused on extending INRs beyond single-object descriptions towards more realistic multi-object representations of scenes~\citep{Wu22,Tertikas23,Wu23b} and multi-part models of composite objects~\citep{Deng22b,Petrov23,Talabot25}. These are particularly valuable for downstream tasks such as manipulation, simulation, and design, where reasoning about individual entities or parts is key. For example, when designing the 3D shape of a car to improve its aerodynamics, one has to consider its body, its wheels, and all the accessories that are usually attached to it. Similarly, when modeling a heart from medical imagery, one should consider its four chambers and the blood vessels attached to them. Numerous object-compositional implicit representations~\citep{Zhi21,Zhang22d,Liu23j,Wu23b,Talabot25,Le25a} have been introduced to this end. A common approach is to rely on a set of Signed Distance Fields (SDF)~\citep{Wang21f}, one per object~\citep{Wu22,Wu23b,Talabot25}, and to combine them appropriately. 

While effective and popular, this approach faces a fundamental challenge: Even though physical objects cannot share the same volume in space, multiple SDFs can easily produce configurations that violate this. A number of strategies have been proposed to mitigate this problem. They include minimizing loss terms that penalize overlap~\citep{Talabot25}, visibility-based constraints to handle occlusions~\citep{Wu23b}, and regularization techniques inspired by physical reasoning~\citep{Le25a}. While these approaches reduce the frequency and severity of intersections, they do not guarantee their complete removal. Furthermore, because the intersection loss terms are usually minimized as part of a weighted sum of several other loss terms, careful tuning of their respective weights is often required. 

In this work, we propose an approach that strictly prevents intersections without needing a weighting scheme. To this end, we define a hard-constraint on vector-valued SDFs that produces an intersection-free geometry. We will refer to such a constrained field as a \textit{Multi-object Distance Field (MDF)}. Given this formalism, we implement \textit{S2MDF}, a simple yet effective module that enforces the constraint and can be seamlessly integrated into existing pipelines with a negligible computational overhead. It can be applied either during training or as a post-processing step to enforce the intersection-free constraint on virtually any object-compositional SDF-based representation, without requiring any architectural change. Furthermore, it remains compatible with standard linearly-interpolated meshing techniques such as Marching Cubes~\citep{Lorensen87} and Marching Tetrahedra~\citep{Chan98}. 

To summarize our contributions:
\begin{itemize}[nosep]
    \item We formalize the problem of generating intersection-free multi-object implicit surface representations by constraining vector-valued SDFs to encode intersection-free geometry as a Multi-object Distance Field (MDF); 
    \item We propose S2MDF, a fast, plug-and-play module that projects object-compositional SDF-based representations into an MDF, ensuring intersection-free geometry with negligible computational overhead.
\end{itemize}
We validate our approach on several state-of-the-art SDF-based object-compositional methods, demonstrating that S2MDF, paired with existing linearly-interpolated meshing techniques, reduces intersections to numerical precision while preserving reconstruction quality. Through both qualitative and quantitative evaluations, we show that our method outperforms existing mitigation strategies and provides a robust, generic solution to a longstanding limitation of compositional neural surface representations.

\section{Related work}
\label{sec:related}

\pf{We first put Implicit Neural Representations in the more general context of representing the 3D shape of single objects and then briefly review existing approaches to jointly model several objects.}


\parag{Representing the Shape of 3D Objects.} Many different representations have been proposed over the years. Voxel-based methods represent the space as a grid~\citep{Wu15b,Wu16b,Choy16b,Dai17a}, but suffer from high memory requirements and limited resolution. Point-cloud-based methods represent surfaces as sets of points~\citep{Fan17a,Yang18a,Achlioptas18b,Peng21a,Zeng22}, which improves the memory footprint but ignores connectivity, as they do not explicitly model the surface as a continuous entity. Mesh-based methods represent surfaces as collections of vertices, edges and faces~\citep{Groueix18a,Wang18e,Kanazawa18b,Pan19}, which allows for efficient rendering and manipulation, but generally require a fixed topology that is difficult to modify. Implicit Neural Representations represent a powerful alternative, encoding geometry as continuous functions parameterized by neural networks~\citep{Park19c,Mescheder19,Chen19c}. Among these, Signed Distance Fields (SDFs) and occupancy fields are particularly popular for representing watertight surfaces, which can then be extracted from the field using standard meshing techniques~\citep{Lorensen87,Doi91,Lewiner03,Ju02}. When the surface is not watertight, Unsigned Distance Fields (UDFs) can be used instead~\citep{Chibane20b,Long22,Liu23a}, with specially designed meshing techniques~\citep{Guillard22b,Zhang23b,Stella25}. Differentiability between the explicit surface and the field can be achieved thanks to the implicit function theorem, whose practical viability has been shown for both SDFs~\citep{Remelli20b,Guillard24a} and UDFs~\citep{Guillard22b}, allowing shape optimization tasks such as drag reduction. Recent works extend these approaches using new latent spaces~\citep{Takikawa21,Mittal22,Zhang22b,Zhang23d} and more complex network architectures~\citep{Zhang24a,Chen25b}, allowing for highly detailed generative modeling and shape manipulation.

\parag{Handling Multiple Objects Jointly.}

Object-aware implicit representations have been proposed in the context of NeRFs~\citep{Niemeyer21,Tertikas23}, often involving a separate semantic branch to predict object labels~\citep{Wang22g,Dumery25a}, and in some cases two separate radiance fields, one for the scene and for the objects~\citep{Xie21c}. However these approaches do not directly model geometry. ObjectSDF~\citep{Wu22} and ObjectSDF++~\citep{Wu23b} are among the first works to propose object-compositional implicit surface representations, where multiple SDFs are used to represent different objects. ObjectSDF~\citep{Wu22} defines an approach to directly render the semantic field, coupling semantics and geometry, while ObjectSDF++~\citep{Wu23b} supervises the geometry with semantic information and adds a loss term to penalize object collisions. However, neither approach prevents interpenetrations between objects. \nt{Part-based generative models~\citep{Wu20c,Deng22b,Petrov23} represent individual parts of objects but typically output independent, potentially non-physical, SDFs. The more recent PartSDF~\citep{Talabot25} proposes a single decoder to produce all fields jointly,} with a loss term to penalize intersections, but again without guarantees. Other works in the medical and robotics domains, where physical plausibility is key, also use special losses to reduce penetration between parts~\citep{Karunratanakul20,Zhang22d,Le25a}, while sometimes enforcing contact~\citep{Le25a}, but they either do not guarantee the absence of intersections or they are not based on INRs. Instead, we propose a simple module that can be applied to any object-compositional SDF-based representation, either during training or as a post-processing step, to prevent interpenetration while preserving the quality of the reconstructions.

\newcommand{\vd}{\mathbf{d}}
\newcommand{\vu}{\mathbf{u}}

\section{Method}

Signed Distance Fields (SDFs) naturally represent single, watertight, objects by encoding the positive or negative distance to the closest surface for each point in space, the typical convention being that negative values are inside the object and positive ones outside. For a watertight object $O$ and its boundary $\partial O$, this can be written as
\begin{align}
	SDF_O: \mathbb{R}^3 \to \mathbb{R} \; , \;\;\;
	SDF_O(\mathbf{p}) = \begin{cases}
		-\min_{\mathbf{q} \in \partial O} \| \mathbf{p} - \mathbf{q} \|, & \text{if } \mathbf{p} \in O \\
		\phantom{-}\min_{\mathbf{q} \in \partial O} \| \mathbf{p} - \mathbf{q} \|, & \text{if } \mathbf{p} \notin O
	\end{cases}
	\label{eq:sdf}
\end{align}
In an object-compositional setting, each one of $K$ object is \fs{typically} assigned its own $SDF_k$ for $1 \leq k \leq K$ and we can define the composite $SDF = [SDF_1, \ldots , SDF_K] : \mathbb{R}^3 \to \mathbb{R}^K$. The problem with this \fs{is that each SDF is independent from the others:} Nothing stops two of the $SDF_k$ from returning negative values at the same spatial location, which would mean that this location is inside two different objects, a configuration that should not be allowed. This is what we want to achieve  by appropriately constraining $SDF$. We first formalize this constraint and then discuss various approaches to enforcing it.

\subsection{Multi-Object Signed Distance Fields}

Let $\min^{(n)}(\cdot)$ be the operator that returns the $n$-th smallest element of its input vector. Given a spatial location $\bp$, let $\bu = SDF(\bp)$ be its vector-valued composite SDF. To ensure that $\bp$ is not inside more than one object \dm{it} should satisfy
\begin{equation}
\label{eq:og_constraint} 
\mint(\bu) \ge 0 \; .
\end{equation}
This constraint ensures that the second smallest signed distance value at any point is non-negative, which means that at most one of the $SDF_k$ can be negative at any location. If we could truly enforce it everywhere, this would be enough. But, in practice, we have to sample the space more or less densely and we can only enforce the constraint at those sample points. Furthermore, the deep networks used to implement the $SDF_k$ functions do not guarantee that they satisfy the eikonal property $\|\nabla SDF_k\| = 1$. Taken together, these two limitations mean that SDF \fs{approximations} can satisfy the constraint of \cref{eq:og_constraint} on sample points and yet produce interpenetrations, as shown in \cref{fig:min2_counterexample}. To remedy this, we propose to use an equivalent constraint based on the following theorem 

\begin{figure}[h]
    \centering
    \begin{subfigure}[b]{0.48\textwidth}
        \centering
        \begin{tikzpicture}[>=stealth, font=\Large, scale=0.68, transform shape]
    \path[use as bounding box] (-3, -3) rectangle (7.05, 4);

    \draw[line width=0.3pt, cyan!60!black, fill=cyan!5!white] (0,0) circle (3);


    \coordinate (p0) at (0, 0);
    \node[circle, draw=gray, line width=1pt, fill=white, inner sep=3pt, label=below:{$p$}] at (p0) {};
    \node at (0, -1.0) {$[\vu_1, \vu_2] = [-3, 2]$};
    \coordinate (p1) at (1.61, 1.61);

    
    \draw[line width=1.2pt, dashed, red!80!black] 
        (-1, 4) 
        to[out=45, in=160]  (0.5, 3)       
        to[out=-20, in=135] (1.41, 1.41)   
        to[out=-45, in=135] (4, 1);        
    \node[text=red!80!black] at (4, 0.7) {implied};
    \node[text=red!80!black] at (4, 0.25) {surface};
    \node[circle, draw=gray, line width=1pt, fill=white, inner sep=3pt, label=above right:{$p'$}] at (p1) {};
    \node at (4.71, 2.21) {$[\vu_1', \vu_2'] = [-0.9, -0.1]$};

    
    \filldraw[black] (0, 0) circle (1.5pt) ;
    \filldraw[black] (1.61, 1.61) circle (1.5pt) ;
    \draw[black] (0,0) -- node[above, sloped, text=black, font=\small] {$d=2.1$} (1.61, 1.61);

    \coordinate (v0) at (6,1.5);
    \coordinate (vprime_circle) at (7.5,1.5);


\end{tikzpicture}
        \caption{Given a point $\mathbf{p}$ inside the circle and its SDF $\vu$ satisfying $\mint(\vu) \ge 0$, there is an implication of a nearby point $\mathbf{p'}$ with SDF $\vu'$ having both entries negative, thus $\mint(\vu') \ngeq 0$. This value of $\vu$ would not be allowed if \cref{eq:og_constraint} was satisfied everywhere.}
        \label{fig:min2_counterexample}
    \end{subfigure}
    \hfill
    \begin{subfigure}[b]{0.48\textwidth}
        \centering
        \raisebox{15pt}{\includegraphics[width=\textwidth]{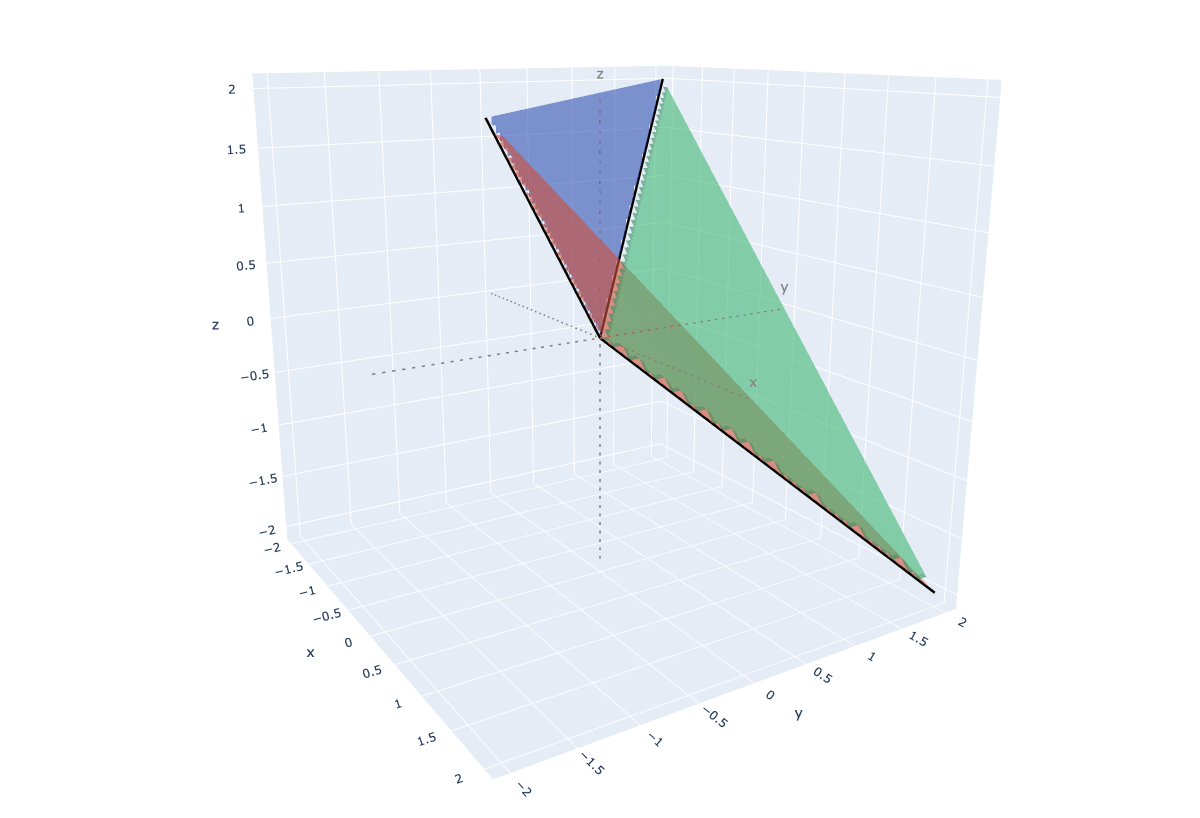}}
        \caption{Illustration of the feasible space for the MDF constraint in Eq. \cref{eq:mdf_constraint} for $K=3$ objects: an intersection of $C(K, 2)$ hyperplanes in $\mathbb{R}^K$ \nt{delimits the feasible region on the right}. Each plane corresponds to the constraint imposed by the sum of one pair of SDFs.}
        \label{fig:mdf_projection_planes}
    \end{subfigure}
    \caption{Illustrations of properties related to the constraints in \cref{eq:og_constraint} and \cref{eq:mdf_constraint}.}
    \label{fig:method_support}
\end{figure}

\fs{
\begin{theorem}
\label{thm:constraint_equivalence}
A vector-valued $SDF = [SDF_1, \ldots , SDF_K] : \mathbb{R}^3 \to \mathbb{R}^K$ satisfies constraint \ref{eq:og_constraint} at every point $\mathbf{p} \in \mathbb{R}^3$, \nt{with $\bu = SDF(\bp)$}, if and only if it satisfies the following constraint at every point $\mathbf{p} \in \mathbb{R}^3$:
\begin{equation}
    \mino(\bu) + \mint(\bu) \ge 0 \; .
    \label{eq:mdf_constraint}
\end{equation}
Proof: Appendix~\ref{app:constraint-equivalence}, using Lipschitz continuity of true SDFs. 
\end{theorem}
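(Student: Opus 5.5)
The reverse implication holds pointwise and is pure algebra. The forward implication uses the fact that every $SDF_k$ is a true signed distance field in the sense of \cref{eq:sdf}, hence $1$-Lipschitz. I would prove the two directions separately.

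\emph{(\cref{eq:mdf_constraint} $\Rightarrow$ \cref{eq:og_constraint}.)} Fix $\bp$ and $\bu = SDF(\bp)$. Suppose $\mint(\bu) < 0$. Since $\mino(\bu) \le \mint(\bu)$, we get $\mino(\bu) + \mint(\bu) \le 2\mint(\bu) < 0$, which contradicts \cref{eq:mdf_constraint}. No Lipschitz property is needed here.

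\emph{(\cref{eq:og_constraint} $\Rightarrow$ \cref{eq:mdf_constraint}.)} Fix $\bp$.
\begin{itemize}
\item If $\mino(\bu) \ge 0$, then $\mint(\bu) \ge 0$ as well, and the sum is non-negative.
\item Otherwise exactly one index $i$ has $\bu_i = -r < 0$; it is unique by \cref{eq:og_constraint}. Then $\mino(\bu) = -r$ and $\mint(\bu) = \min_{j \ne i} \bu_j$. So it suffices to show $\bu_j \ge r$ for every $j \ne i$.
\end{itemize}
The key estimate comes from the $1$-Lipschitz property of $SDF_i$. For any $\mathbf{x}$ with $\|\mathbf{x}-\bp\| < r$,
\begin{equation*}
SDF_i(\mathbf{x}) \le -r + \|\mathbf{x}-\bp\| < 0 .
\end{equation*}
Hence the whole open ball $B(\bp, r)$ lies where $SDF_i$ is strictly negative. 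Applying \cref{eq:og_constraint} at each such $\mathbf{x}$ then forces $SDF_j(\mathbf{x}) \ge 0$ for all $j \ne i$ on $B(\bp,r)$.

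Now suppose, for contradiction, that $\bu_j < r$ for some $j \ne i$. Then $\bp \notin O_j$ and $0 \le \bu_j < r$. By \cref{eq:sdf} there is a point $\mathbf{q} \in \partial O_j$ with $\|\mathbf{q}-\bp\| = \bu_j < r$, so $\mathbf{q}$ lies in the open ball. Since $\mathbf{q} \in \partial O_j$, every neighbourhood of $\mathbf{q}$ contains points of $O_j$. I would use the standard regularity assumption on watertight solids, $O_j = \overline{\mathrm{int}\, O_j}$, to pick $\mathbf{x} \in B(\bp,r)$ that is interior to $O_j$. At such $\mathbf{x}$ we have $SDF_j(\mathbf{x}) < 0$ and also $SDF_i(\mathbf{x}) < 0$, so two entries are negative and $\mint < 0$. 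This contradicts \cref{eq:og_constraint}, hence $\bu_j \ge r$, and summing gives $\mino(\bu) + \mint(\bu) \ge -r + r = 0$.

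\emph{Main obstacle.} The delicate step is the last one: producing a point near $\mathbf{q}$ where $SDF_j$ is \emph{strictly} negative. This needs $O_j$ to have no lower-dimensional pieces, i.e.\ every boundary point is a limit of interior points. I would state this explicitly as part of what ``watertight object'' means in \cref{eq:sdf}. If one preferred to avoid it, the fallback is to read \cref{eq:og_constraint} as also forbidding a zero from one field coinciding with a negative from another on an open set. The former route is cleaner. Everything else is the triangle inequality, i.e.\ the Lipschitz continuity of true SDFs, which is exactly what fails for the sampled, non-eikonal approximations in \cref{fig:min2_counterexample}.
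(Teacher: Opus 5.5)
Your proof is correct and follows essentially the same route as the paper's. One direction is the pointwise inequality $\mino(\bu)\le\mint(\bu)$; for the other, the $1$-Lipschitz property of the negative field places the nearest boundary point of the second object strictly inside the first, which yields a point lying inside both objects. Your explicit regularity assumption $O_j=\overline{\mathrm{int}\,O_j}$ makes rigorous the step the paper only asserts, namely that ``in its vicinity, there must be a point that is both within object $a$ and $b$'', and it is genuinely needed: a degenerate point-like $O_j$ inside $O_i$ satisfies \cref{eq:og_constraint} everywhere but violates \cref{eq:mdf_constraint}.
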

}

\fs{This constraint is visualized in \cref{fig:mdf_projection_planes}.} \fs{Even though \cref{eq:og_constraint} and \cref{eq:mdf_constraint} are equivalent for true SDFs, the second constraint is \textit{stronger} for neural, imperfect SDFs. In fact, the following holds: }
\begin{theorem}
\label{th:linearity}
Given two points $\mathbf{p_1}, \mathbf{p_2} \in \mathbb{R}^3$ and an arbitrary vector-valued function $f : \mathbb{R}^3 \to \mathbb{R}^K$ evaluated at those points, $\bu_1 = f(\mathbf{p_1})$ and $\bu_2 = f(\mathbf{p_2})$ that satisfy constraint \ref{eq:mdf_constraint}, then the constraint is also satisfied for every point along the line segment connecting $\mathbf{p_1}$ and $\mathbf{p_2}$, where intermediate vectors are obtained via linear interpolation as $\bu_{\alpha} = \alpha \bu_1 + (1 - \alpha) \bu_2$ for $\alpha \in [0, 1]$. Proof: Appendix~\ref{app:linterp-proof}.
\end{theorem}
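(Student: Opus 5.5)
The plan is to show that the function $g(\bu) = \mino(\bu) + \mint(\bu)$ is concave on $\real^K$. Then the feasible set $\{\bu : g(\bu) \ge 0\}$ is a superlevel set of a concave function and hence convex. A convex set contains every convex combination $\alpha \bu_1 + (1-\alpha)\bu_2$ of two of its points, which is exactly the claim of \cref{th:linearity}. Note that the function $f$ and the positions $\mathbf{p_1}, \mathbf{p_2}$ play no role: the statement is purely about the vectors $\bu_1, \bu_2 \in \real^K$ and their linear interpolation. That is why it holds for arbitrary (non-eikonal) neural fields.

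The key step is to rewrite the sum of the two smallest entries as a minimum of linear functions:
\begin{equation*}
\mino(\bu) + \mint(\bu) = \min_{1 \le i < j \le K} \left( u_i + u_j \right).
\end{equation*}
For the inequality $\le$: for any pair $i<j$, the two values $u_i, u_j$ are each at least the corresponding order statistics, so $u_i + u_j \ge \mino(\bu) + \mint(\bu)$. For $\ge$: choosing $i,j$ as the indices attaining the smallest and second smallest entries (with ties broken arbitrarily, taking distinct indices) gives equality. This needs $K \ge 2$, which is implicit in the definition of $\mint$. The resulting expression is a pointwise minimum of $\binom{K}{2}$ linear functions, so it is concave. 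Equivalently, the feasible set is the intersection of the $\binom{K}{2}$ half-spaces $u_i + u_j \ge 0$, matching the picture in \cref{fig:mdf_projection_planes}.

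To conclude directly without invoking concavity abstractly, take any pair $i<j$ and use linearity:
\begin{equation*}
(\bu_\alpha)_i + (\bu_\alpha)_j = \alpha\left( (\bu_1)_i + (\bu_1)_j \right) + (1-\alpha)\left( (\bu_2)_i + (\bu_2)_j \right) \ge 0,
\end{equation*}
since both bracketed terms are nonnegative by hypothesis and $\alpha, 1-\alpha \ge 0$. Taking the minimum over pairs gives $g(\bu_\alpha) \ge 0$.

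The only real obstacle is the order-statistics identity. There the care is in handling ties and in checking that the two minimizing indices are distinct. Everything after that is a one-line convexity argument.
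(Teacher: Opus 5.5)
Your proof is correct and follows essentially the same route as the paper's: both establish that $\mino(\bu) + \mint(\bu)$ is concave and then conclude from the concavity inequality, equivalently from convexity of its superlevel set. The only difference is that the paper cites Boyd and Vandenberghe (Example~3.6) for convexity of the sum of the $r$ largest entries and negates it, whereas you prove the needed case directly via $\mino(\bu) + \mint(\bu) = \min_{i<j}(u_i+u_j)$; this is exactly the textbook's argument specialized to $r=2$, and it coincides with the paper's pairwise-sum lemma in Appendix~\ref{app:pairwise-proof}.
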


\fs{
In other words, a linear interpolation of an imperfect SDF that satisfies constraint \ref{eq:mdf_constraint} on a finite set of points, also satisfies the constraint everywhere. The same is not true for constraint \ref{eq:og_constraint}, as shown in Appendix~\ref{app:og_constraint_weaker} and hinted in \cref{fig:min2_counterexample}. This is important because, as mentioned above, SDF parametrizations are not perfect, and are often evaluated at a finite number of points and intepolated in between, for example by iso-surface extraction algorithms.} Thus the formulation of~\cref{thm:constraint_equivalence} is the one we use in practice and we define our Multi-object Distance Fields as follows:



%
%


\parag{Definition 1.} A Multi-object Distance Field (MDF) is a vector-valued SDF $\mathbb{R}^3 \to \mathbb{R}^K$ that maps each point in space to a vector of $K$ signed distance values, one for each of $K$ objects, and satisfies the constraint of \cref{eq:mdf_constraint}.


\subsection{Enforcing the MDF Constraint}
\fs{Given a vector-valued SDF, independently from its parametrization, }we want to ensure that the $K$-dimensional vector $\bu=SDF(\bp)$ meets the constraint of ~\cref{eq:mdf_constraint} for all $\bp$ it is evaluated at. The natural way to do this is to perform a projection, that is, replace $\bu$ by the vector $\vd$ that is closest to $\bu$ and meets the constraints. We write this as
\begin{equation}
{\rm minimize} \| \vd - \vu \|^2  \quad \mbox{subject to} \quad {\mino (\vd) + \mint (\vd)}{\ge 0 } \; . \label{eq:proj}
\end{equation}
\fs{\cref{th:linearity} ensures that the linear interpolation of those samples \dm{satisfies} the MDF constraint}. Thus ~\cref{eq:proj} gives us a separate problem to solve for each sample point.
As we normally perform computations on batches of points, the solution must be regular enough to be easy to parallelize for a fast computation. When $K=2$, ~\cref{eq:proj}  has a simple analytical solution and this is no issue. However, for $K>2$, the non-linearity of $\mino$ and  $\mint$ becomes one and there are no standard solvers that can be used for our purposes. We implemented two distinct solutions to this problem. 

\subsubsection{Quadratic Programming}

The minimization of \cref{eq:proj} is subject to a non-linear constraint but we can rewrite it as
\begin{equation}
{\rm minimize} \| \vd - \vu \|^2  \quad \mbox{subject to} \quad d_i + d_j{\ge 0, \quad 1 \le i < j \le K} \; . \label{eq:qp}
\end{equation}
We prove in Appendix~\ref{app:pairwise-proof} that the \dm{constraints} of \cref{eq:proj,eq:qp} are {\it strictly} the same, \dm{meaning the problem and by extension its solutions are the same}. While the formulation of \cref{eq:qp} may seem more complex, it has the advantage to be a standard Quadratic Program (QP) for which there are  mature solvers~\citep{Amos17,Bambade24} \pf{whose solution can be readily differentiated}\NT{needs a ref or is accepted?}. Thus, they can be directly integrated inside neural networks for an acceptably small runtime cost \pf{as long as $K$ is not too large}. In our implementation, we embed one in the final layer of the neural network parameterizing all the signed-distance values, effectively projecting SDF values onto the manifold of intersection-free MDFs.


\subsubsection{Analytical Solution}

The drawback \dm{of our QP formulation is that the number of constraints, and by extension the runtime cost of solving the problems}, scales quadratically with the number of objects $K$, which can become prohibitive, especially during training when they are invoked at every iteration. This can be addressed using the following heuristic. When $K=2$, the analytical solution to \cref{eq:proj} is to add $-(\mino (\vu) + \mint (\vu)) / 2$ to each $\vu$, \fs{see Appendix~\ref{app:analytical-2obj-proof} for the proof}. When $K>2$, we can still replace each $\vu$ by
\begin{equation}
\text{Shift-All}(\vu) = 
\begin{dcases}
\vu & \text{if } \mino (\vu) + \mint (\vu) \ge 0 \; ,\\
\vu - \frac{\mino (\vu) + \mint (\vu)}{2} & \text{otherwise.}
\end{dcases}
\end{equation}
This is no longer the optimal solution to the problem of \cref{eq:proj}\fs{, but it is a feasible one}, and we show in Appendix~\ref{app:analytical-qpmod-proof} that it is the solution to the problem of \cref{eq:proj} under the additional constraint that the pairwise differences between the different SDF values be preserved, a reasonable constraint to avoid disrupting the ordering of the original fields during the projection. It has the added benefit to be much simpler and faster to compute than solving the QP problem. 

\nt{This mapping is continuous, differentiable almost everwhere, and the non-differentiable points are typically not on the object surfaces.}
In other words, it does not create issues either for meshing or training, unlike the non-differentiability of Unsigned Distance Fields on the surface that is well known to create problems~\cite{Guillard22b,Zhang23b,Stella25}. Moreover, being an optimal solution to problem \ref{eq:proj} does not necessarily mean that the QP solution is better in practice, as optimality is defined in terms of the $L_2$ distance to the original vector $\vu$, which is an arbitrary metric. In practice, we show in the results section that this heuristic performs equally well as the QP solution, and even better in some cases, \pf{at a much lower computational cost}. 

\subsection{Meshing}

If explicit representations are required by downstream applications, our MDF needs to be turned into multi-object 3D meshes that preserve their non-intersecting nature. Doing this for standard SDFs usually relies on isosurface extraction algorithms such as Marching Cubes~\citep{Lorensen87,Lewiner03}, Marching Tetrahedra~\citep{Chan98,Treece99}, or Dual Contouring~\citep{Ju02}. When dealing with multiple SDFs, these algorithms are often applied independently to each SDF, and the resulting meshes are interpreted as lying in the same physical space, with potential intersections. A partial exception to this is Dual Contouring, for which a multi-object-aware version exists~\citep{Rashid23}, but we do not know of any readily accessible implementation of this algorithm. 

Fortunately, this is not an issue thanks to \cref{th:linearity} that states that the linear interpolation of an MDF is still an MDF. This guarantees that, when the MDF constraint is satisfied at the grid points, it is possible to extract intersection-free meshes using linearly interpolated meshing algorithms, including their adaptive variations~\citep{Ren24}, on each SDF independently, \fs{as we prove in \cref{app:crossing-proof}}. In other words, we can use \fs{some of the} existing meshing methods as they are, \fs{on each MDF component independently}.

\section{Experiments}
\label{sec:exps}
\begin{table}[ht]
  \centering
  \resizebox{\textwidth}{!}{%
  \begin{tabular}{cc|ccccc}
    Data & Model & MCD $(\times 10^{-4})$ $\downarrow$ & MNC $\uparrow$ & F1 (0.01) $\uparrow$ & IoU $\uparrow$ & IV $\downarrow$ \\
    \midrule
    \multirow{5}{*}{\rotatebox[origin=c]{90}{\shortstack{TS-Lung \\ (Both)}}} & Vanilla MedTet & $5.462 \pm 0.118$ & $96.729 \pm 0.059$ & $99.601 \pm 0.008$ & $93.737 \pm 0.038$ & $2.755 \pm 0.767 \ (\times 10^{-4})$ \\
    \cmidrule(lr){2-7}
     & \acro{} (QP) - PP & $5.800 \pm 0.314$ & $96.766 \pm 0.004$ & $99.575 \pm 0.035$ & $94.011 \pm 0.850$ & $1.190 \pm 0.407 \ (\times 10^{-6})$ \\
     & \acro{} (QP) - Train & $5.357 \pm 0.309$ & $96.791 \pm 0.019$ & $99.594 \pm 0.007$ & $94.255 \pm 0.033$ & $1.907 \pm 0.444 \ (\times 10^{-7})$ \\
    \cmidrule(lr){2-7}
     & \acro{} (Shift-All) - PP & $5.600 \pm 0.289$ & $96.708 \pm 0.023$ & $99.579 \pm 0.036$ & $94.000 \pm 0.207$ & $1.132 \pm 0.245 \ (\times 10^{-6})$ \\
     & \acro{} (Shift-All) - Train & $5.394 \pm 0.056$ & $96.788 \pm 0.055$ & $99.620 \pm 0.010$ & $93.863 \pm 0.397$ & $1.391 \pm 0.934 \ (\times 10^{-7})$ \\
    \specialrule{\heavyrulewidth}{\aboverulesep}{\belowrulesep}
    \multirow{5}{*}{\rotatebox[origin=c]{90}{\shortstack{MMWHS \\ (Heart)}}} & Vanilla MedTet & $23.863 \pm 1.330$ & $94.333 \pm 0.157$ & $97.470 \pm 0.142$ & $83.580 \pm 0.473$ & $3.445 \pm 1.390 \ (\times 10^{-4})$ \\
    \cmidrule(lr){2-7}
     & \acro{} (QP) - PP & $29.420 \pm 4.430$ & $94.293 \pm 0.062$ & $97.367 \pm 0.073$ & $83.591 \pm 0.081$ & $8.446 \pm 3.149 \ (\times 10^{-11})$ \\
     & \acro{} (QP) - Train & $24.796 \pm 0.670$ & $94.386 \pm 0.106$ & $97.465 \pm 0.308$ & $83.544 \pm 0.481$ & $2.674 \pm 3.486 \ (\times 10^{-9})$ \\
    \cmidrule(lr){2-7}
     & \acro{} (Shift-All) - PP & $31.099 \pm 7.072$ & $94.316 \pm 0.151$ & $97.297 \pm 0.229$ & $83.694 \pm 0.375$ & $9.888 \pm 1.793 \ (\times 10^{-12})$ \\
     & \acro{} (Shift-All) - Train & $23.094 \pm 0.331$ & $94.407 \pm 0.089$ & $97.556 \pm 0.173$ & $83.901 \pm 0.039$ & $4.635 \pm 5.606 \ (\times 10^{-11})$ \\
  \end{tabular}%
  }
  \caption{Experimental results on MedTet. Means are across all organ components and samples, averaged over runs with 3 different random seeds.}
  \label{tab:medtet}
\end{table}

\begin{table}[ht]
\centering
\resizebox{\textwidth}{!}{%
  \begin{tabular}{c|cccc}
     & MCD ($\times 10^{-2}$) $\downarrow$ & NC $\uparrow$ & F1 (0.01) $\uparrow$ & IV $\downarrow$ \\
    \hline
    Vanilla ObjectSDF++ & $10.677 \pm 5.817$ & $77.953 \pm 0.331$ & $66.946 \pm 0.097$ & $(1.391 \pm 0.124) \times 10^{-3}$ \\
    \hline
    \acro{} (QP) - PP & $4.145 \pm 1.234$ & $78.023 \pm 0.221$ & $68.275 \pm 0.153$ & $(1.834 \pm 0.584) \times 10^{-7}$ \\
    \acro{} (QP) - Train & - & - & - & - \\
    \hline
    \acro{} (Shift-All) - PP & $3.345 \pm 0.315$ & $78.058 \pm 0.207$ & $68.325 \pm 0.131$ & $(1.264 \pm 0.237) \times 10^{-7}$ \\
    \acro{} (Shift-All) - Train & $6.869 \pm 0.633$ & $76.411 \pm 0.223$ & $66.646 \pm 0.666$ & $(2.278 \pm 0.199) \times 10^{-7}$ \\
    \acro{} (Shift-All Skip-Conn.) - Train & $3.942 \pm 1.000$ & $78.430 \pm 0.528$ & $68.623 \pm 0.585$ & $(1.492 \pm 1.925) \times 10^{-6}$ \\
  \end{tabular}%
}
\caption{ObjectSDF++~\cite{Wu23b} trained for object-compositional 3D scene reconstruction on 8 Replica~\cite{Straub19} scenes. Mean and standard deviation across 3 runs. QP could not be run for the training setting due to the time complexity of the optimization for a large number of objects.}
\label{tab:objectsdfplus_scenes_all}%
\end{table}
\newcommand{\incr}[1]{\includegraphics[width=0.185\textwidth, keepaspectratio, valign=c]{#1}}

\begin{figure}[h]
\centering
\renewcommand{\arraystretch}{1.0}
\makeatletter
\renewcommand\small{\@setfontsize\small{6}{8.5}}
\makeatother
\setlength{\tabcolsep}{1pt}

\begin{tabular}{c c c c c c}
    & \small Shift-All (PP)
    & \small QP (PP)
    & \small Shift-All (Train)
    & \small Shift-All Skip-Conn. (Train)
    & \small Vanilla ObjectSDF++ \\

    \raisebox{-0.5\height}{\rotatebox{90}{\small }} &
    \incr{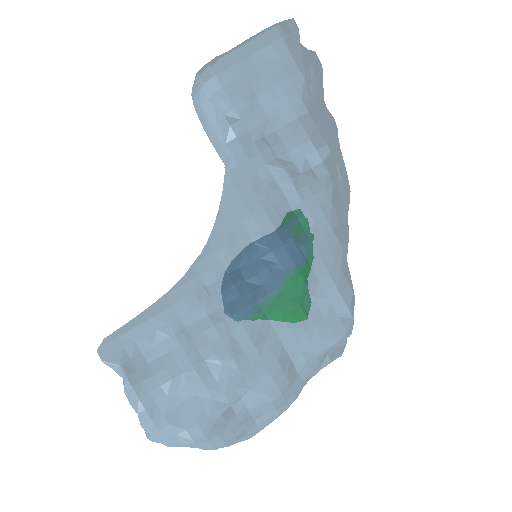} &
    \incr{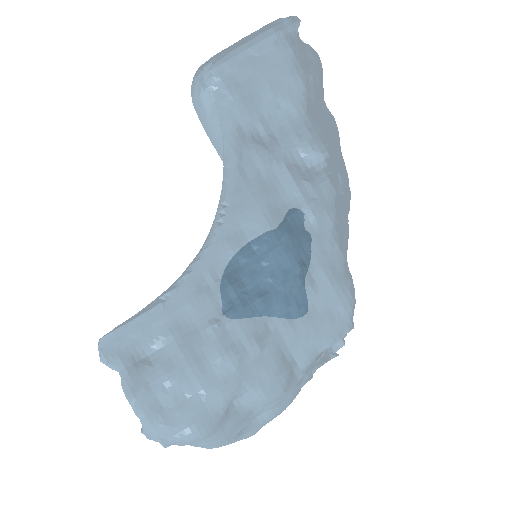} &
    \incr{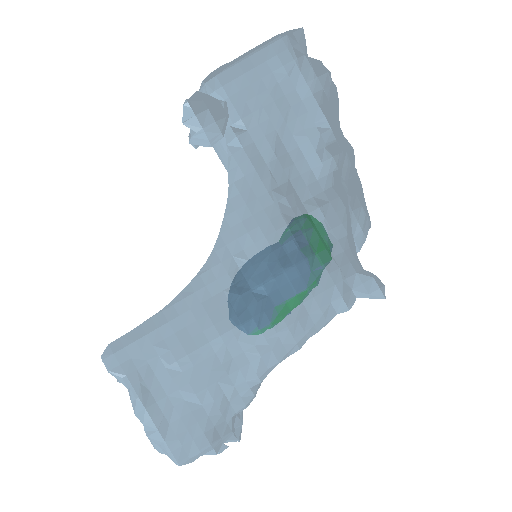} &
    \incr{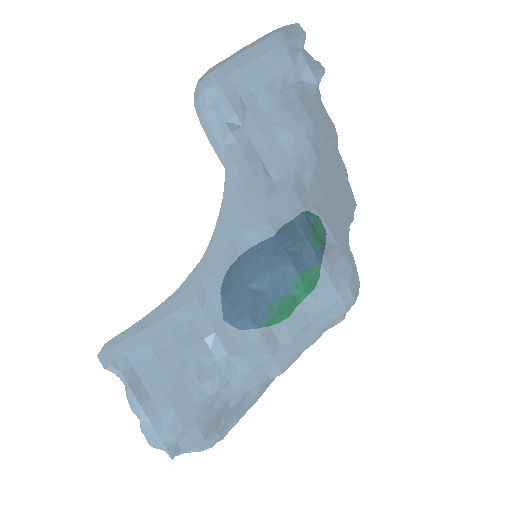} &
    \incr{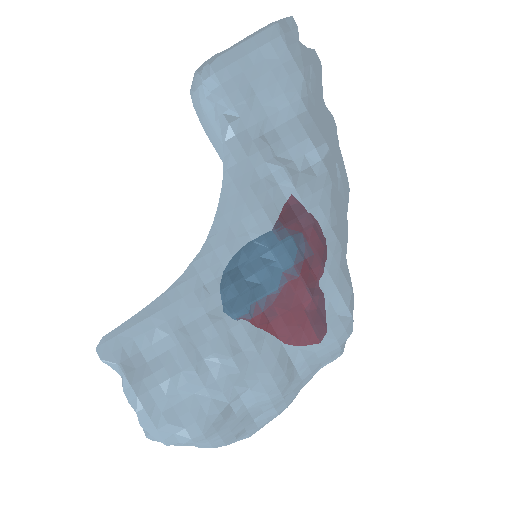} \\

    \raisebox{-0.5\height}{\rotatebox{90}{\small}} &
    \incr{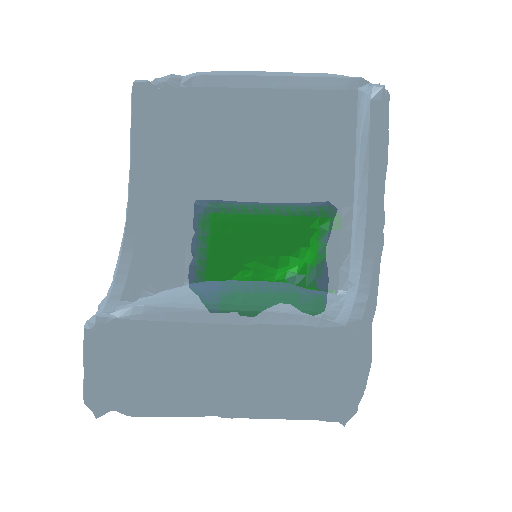} &
    \incr{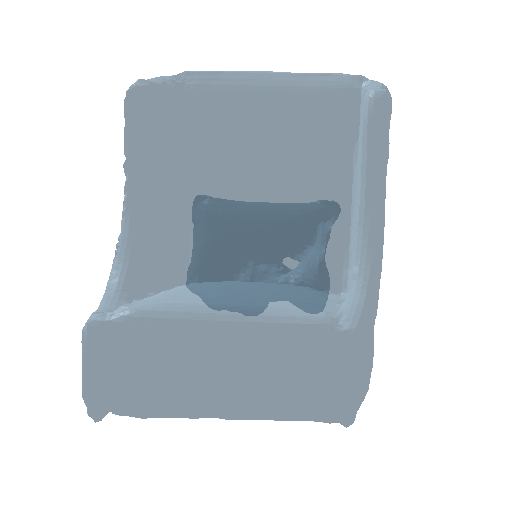} &
    \incr{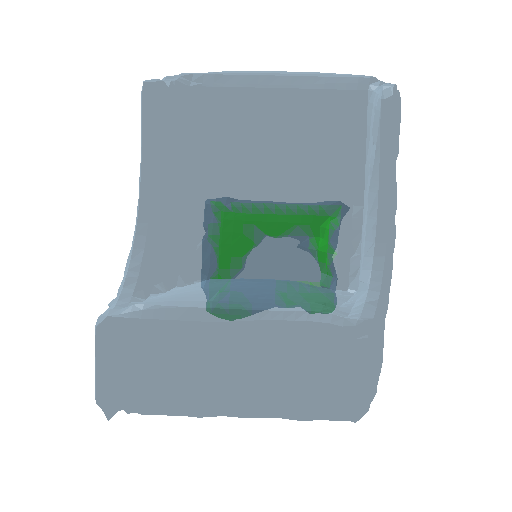} &
    \incr{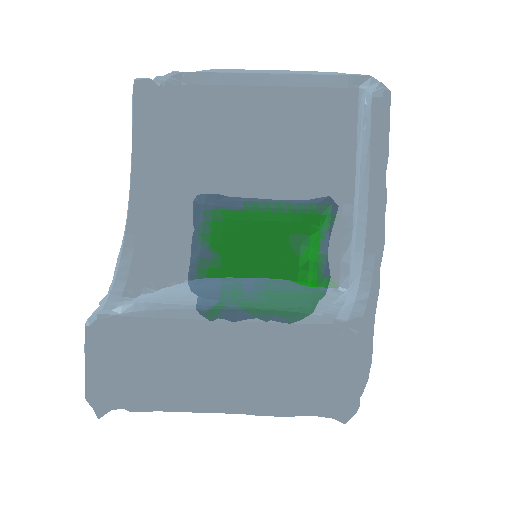} &
    \incr{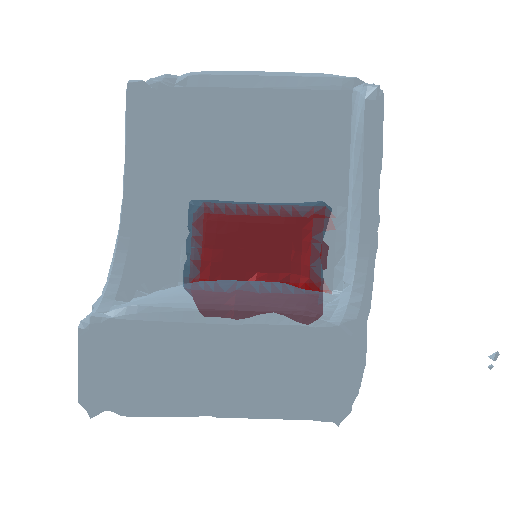} \\
\end{tabular}

\caption{Transversal (top) and frontal (bottom) views of a chair (light grey) and a pillow (dark grey) from Room 0 of the Replica dataset, reconstructed under five scenarios for ObjectSDF++~\cite{Wu23b}, illustrating inter-object intersections in red and contacts in green. The vanilla version has significant intersections, while the constrained versions do not.}
\label{fig:objectsdf_plus_render_comparisons}
\end{figure}

\begin{table}[ht]
  \centering
  \resizebox{\textwidth}{!}{%
  \begin{tabular}{cc|ccccc}
    Data & Model & MCD $(\times 10^{-4})$ $\downarrow$ & MNC $\uparrow$ & F1 (0.01) $\uparrow$ & IoU $\uparrow$& IV $\downarrow$ \\
    \midrule
    \multirow{5}{*}{\rotatebox[origin=c]{90}{\shortstack{Chairs}}} & Vanilla PartSDF & $4.331 \pm 0.690$ & $95.119 \pm 0.015$ & $96.731 \pm 0.044$ & $85.630 \pm 0.101$ & $(1.876 \pm 0.062) \times 10^{-3}$ \\
    \cmidrule(lr){2-7}
     & \acro{} (QP) - PP & $2.126 \pm 0.165$ & $94.773 \pm 0.014$ & $97.902 \pm 0.042$ & $89.151 \pm 0.107$ & $(6.157 \pm 1.394) \times 10^{-9}$ \\
     & \acro{} (QP) - Train & $2.189 \pm 0.364$ & $94.885 \pm 0.078$ & $98.047 \pm 0.031$ & $89.289 \pm 0.126$ & $(7.000 \pm 0.816) \times 10^{-9}$ \\
    \cmidrule(lr){2-7}
     & \acro{} (Shift-All) - PP & $1.973 \pm 0.166$ & $94.824 \pm 0.008$ & $98.005 \pm 0.022$ & $89.173 \pm 0.104$ & $(5.194 \pm 1.040) \times 10^{-9}$ \\
     & \acro{} (Shift-All) - Train & $2.120 \pm 0.117$ & $94.823 \pm 0.032$ & $98.039 \pm 0.017$ & $89.297 \pm 0.262$ & $(5.665 \pm 2.370) \times 10^{-9}$ \\
    \specialrule{\heavyrulewidth}{\aboverulesep}{\belowrulesep}
    \multirow{5}{*}{\rotatebox[origin=c]{90}{\shortstack{Mixers}}} & Vanilla PartSDF & $1.386 \pm 0.090$ & $88.013 \pm 0.074$ & $98.441 \pm 0.107$ & $76.667 \pm 0.168$ & $(4.317 \pm 0.221) \times 10^{-5}$ \\
    \cmidrule(lr){2-7}
     & \acro{} (QP) - PP & $1.362 \pm 0.102$ & $88.000 \pm 0.135$ & $98.441 \pm 0.107$ & $75.769 \pm 0.892$ & $(1.018 \pm 0.349) \times 10^{-7}$ \\
     & \acro{} (QP) - Train & $1.247 \pm 0.032$ & $88.147 \pm 0.090$ & $98.595 \pm 0.026$ & $76.648 \pm 0.170$ & $(1.239 \pm 0.831) \times 10^{-8}$ \\
    \cmidrule(lr){2-7}
     & \acro{} (Shift-All) - PP & $1.362 \pm 0.102$ & $88.000 \pm 0.135$ & $98.441 \pm 0.107$ & $75.774 \pm 0.887$ & $(1.020 \pm 0.349) \times 10^{-7}$ \\
     & \acro{} (Shift-All) - Train & $1.234 \pm 0.137$ & $88.175 \pm 0.045$ & $98.593 \pm 0.153$ & $76.595 \pm 0.201$ & $(1.680 \pm 0.851) \times 10^{-8}$ \\
   \end{tabular}%
  }
  \caption{Shape reconstruction metrics for PartSDF on 100 chairs and mixers from the test set. Mean and standard deviation across 3 runs.}
  \label{tab:partsdf}
\end{table}

\newcommand{\chairincrender}[1]{\includegraphics[width=0.15\textwidth,trim=280pt 80pt 260pt 70pt,clip]{#1}}
\newcommand{\chairincinter}[1]{\includegraphics[width=0.15\textwidth,trim=400pt 0pt 550pt 0pt,clip]{#1}}
\newcommand{\chairgt}[1]{%
  \begin{tabular}{@{}c@{}}
    \vphantom{\chairpair{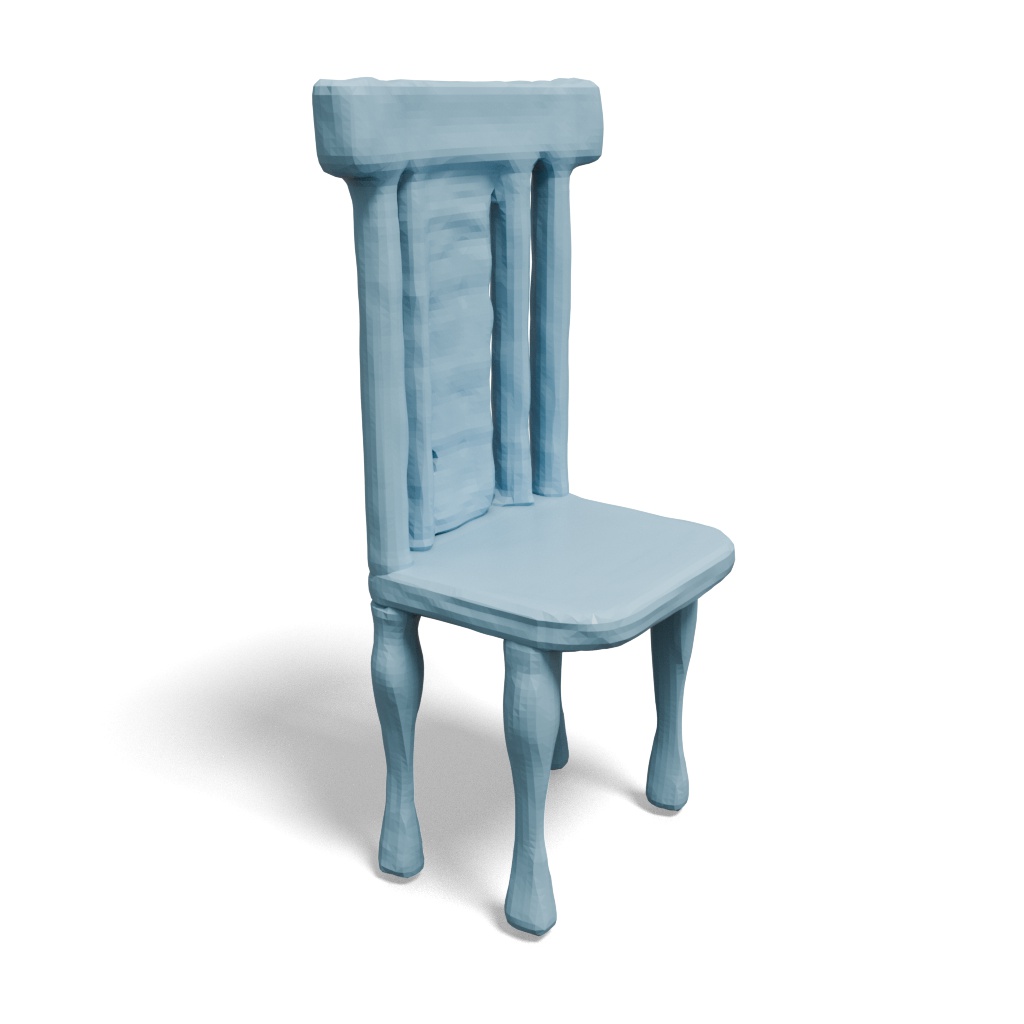}{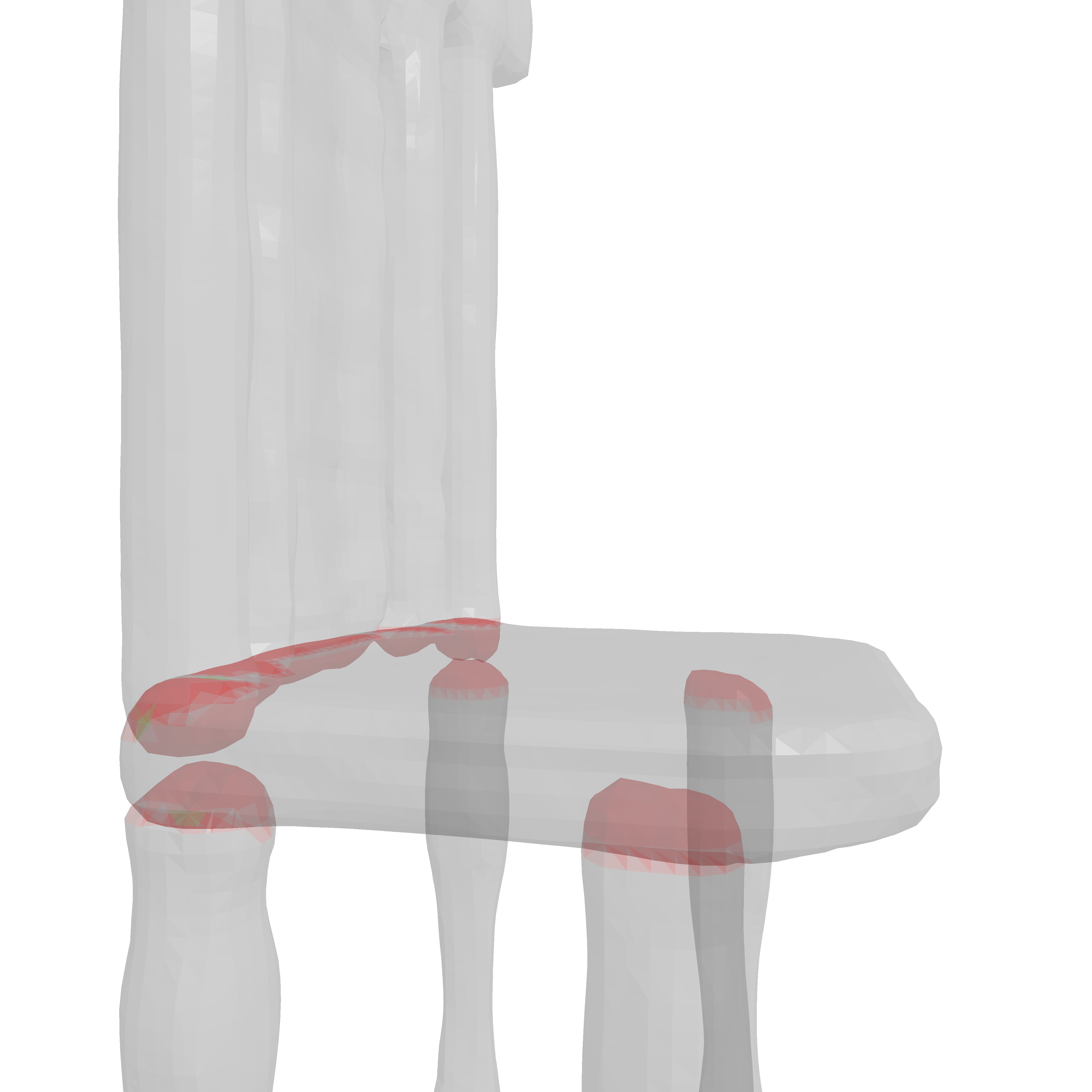}}\\[-1pt]
    \chairincrender{#1}\\[-1pt]
    \vphantom{\chairpair{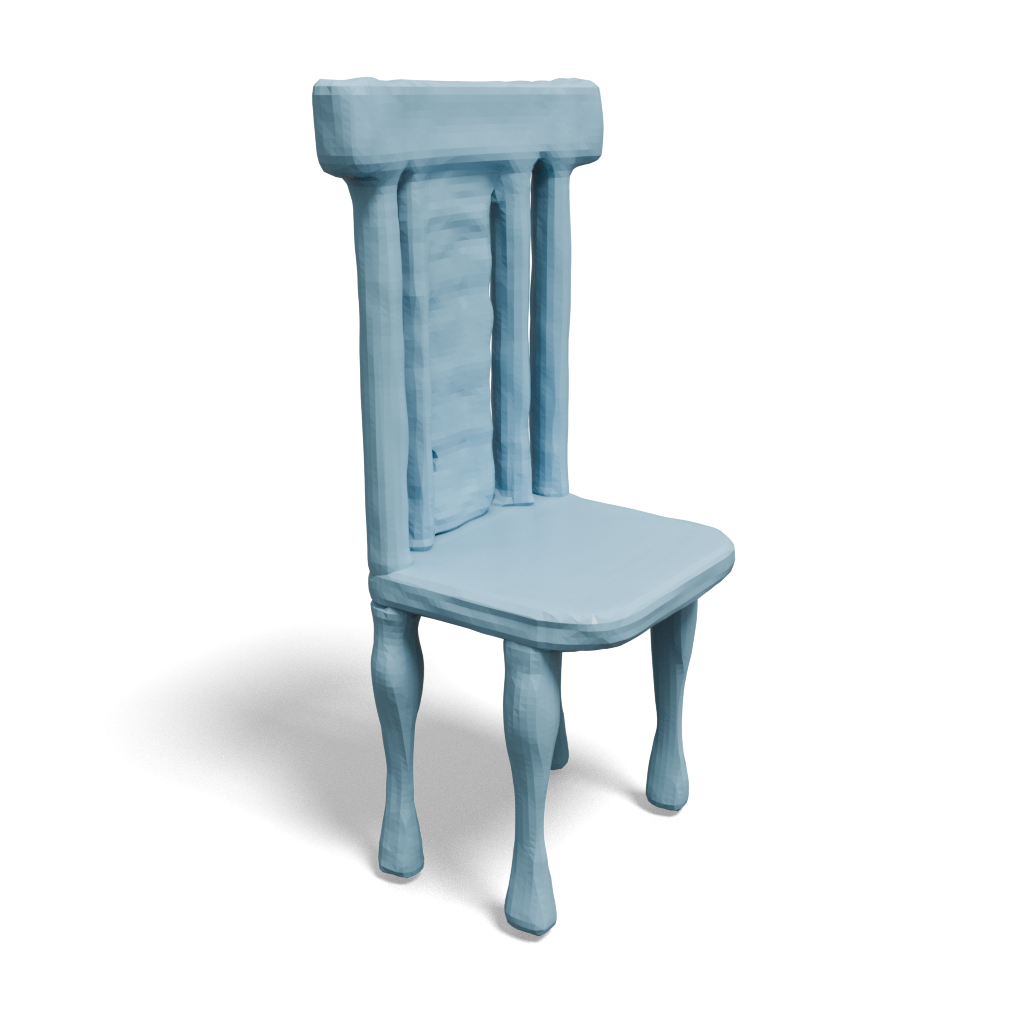}{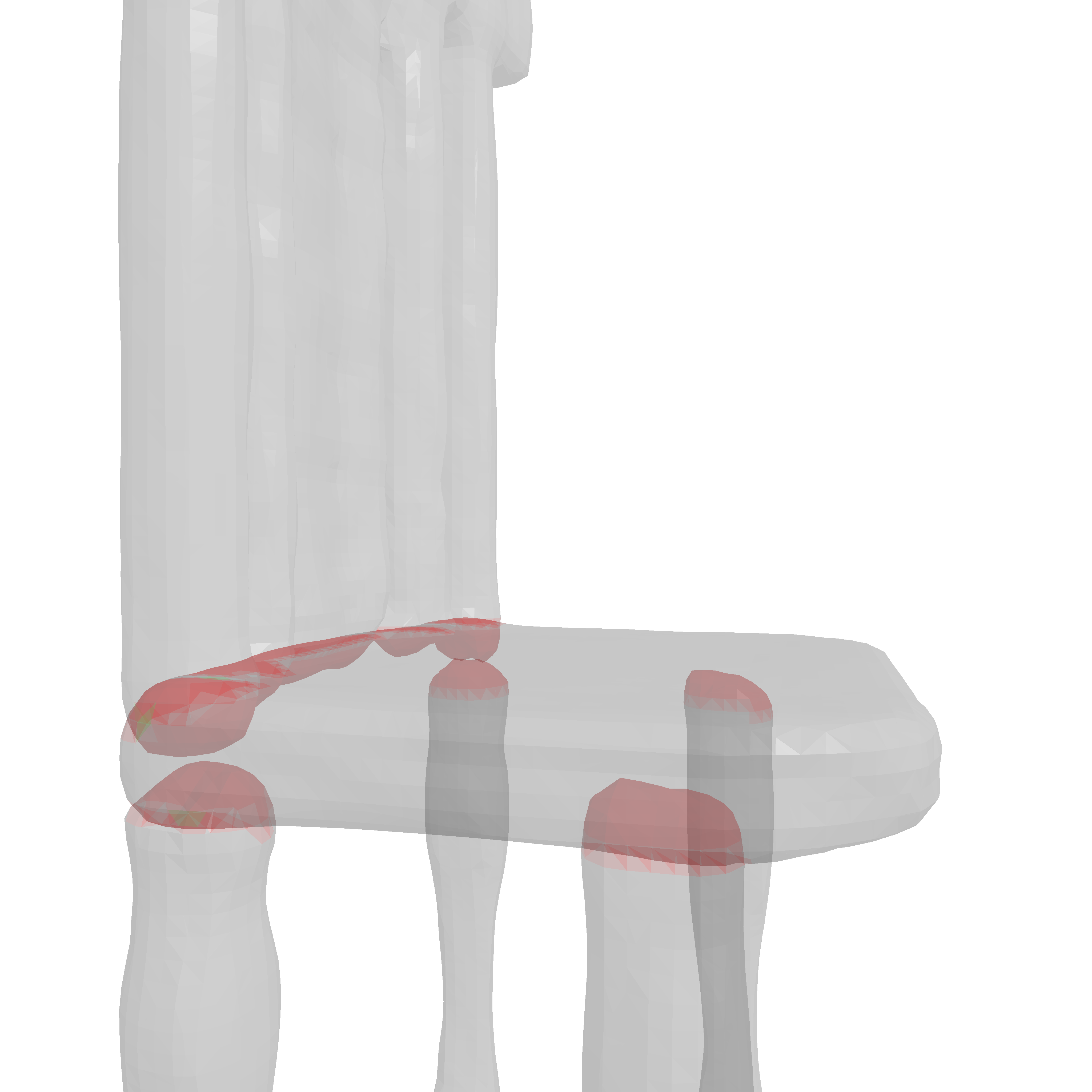}}
  \end{tabular}%
}
\newcommand{\chairpair}[2]{%
  \begin{tabular}{@{}c@{}}
    \chairincrender{#1}\\[-1pt]
    \chairincinter{#2}
  \end{tabular}%
}

\begin{figure*}[t]
  \centering
  \setlength{\tabcolsep}{0pt}
  \begin{tabular*}{\textwidth}{@{\extracolsep{\fill}}@{}c@{}c@{}c@{}c@{}c@{}c@{}c@{}}
    & \shortstack{Ground Truth} & \shortstack{Vanilla} & \shortstack{QP (PP)} & \shortstack{QP (Train)} & \shortstack{Shift-All (PP)} & \shortstack{Shift-All (Train)} \\[-145pt]
    & \chairgt{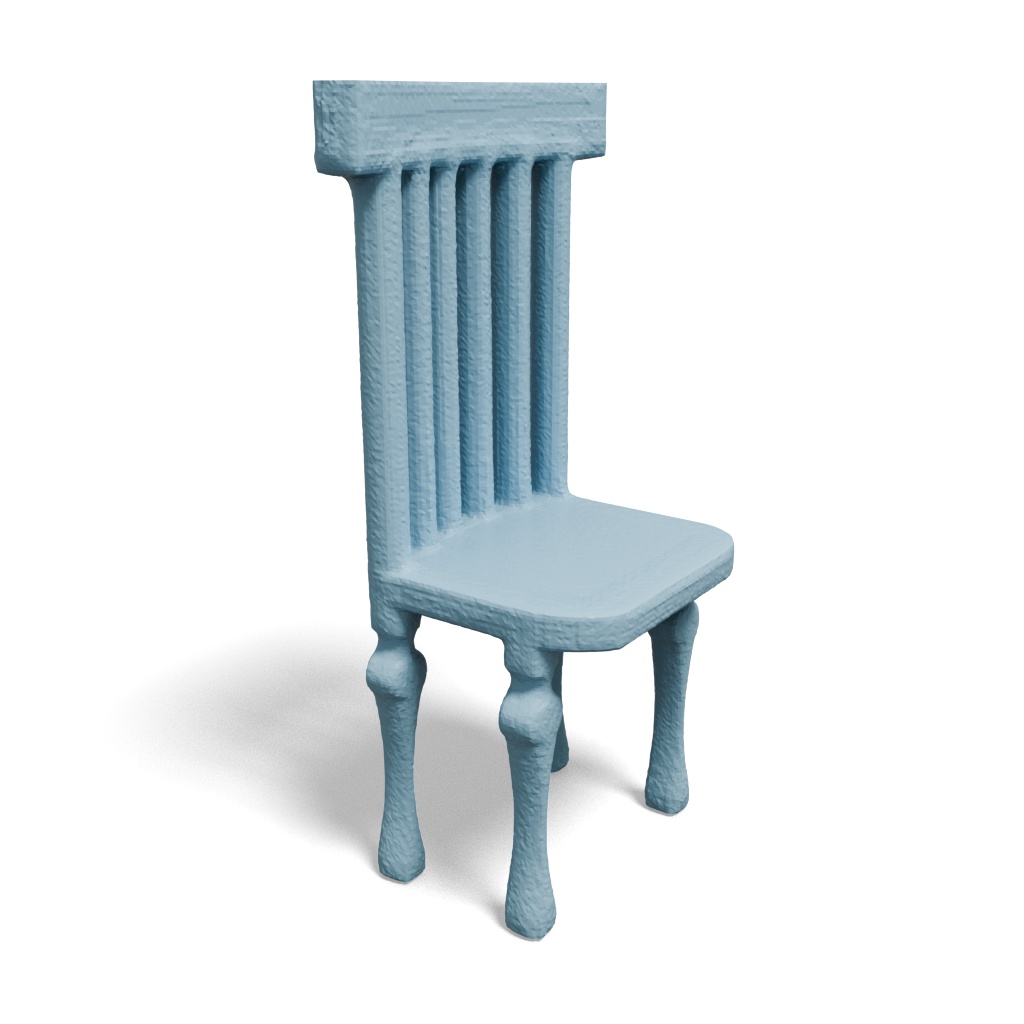} &
    \chairpair{figs/chairs/renders/vanilla.jpg}{figs/chairs/intersections/vanilla.jpg} &
    \chairpair{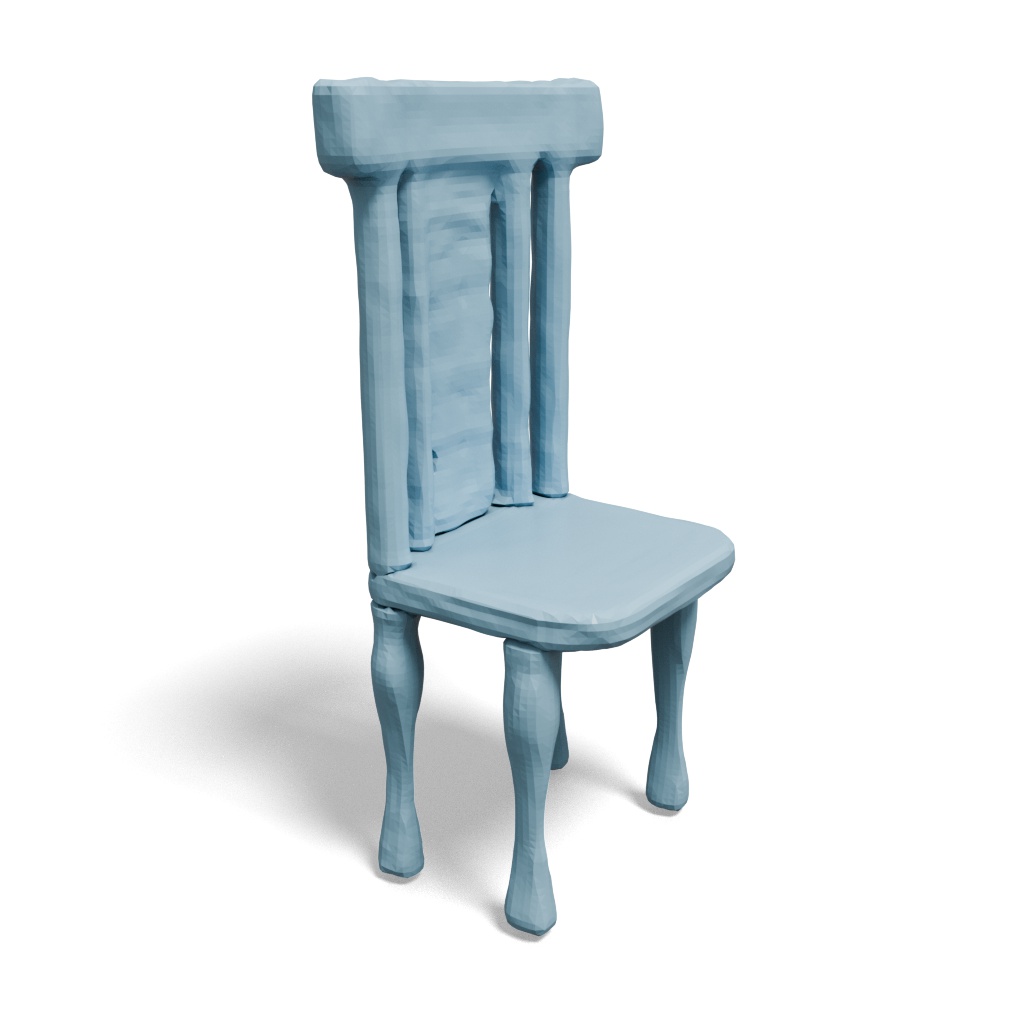}{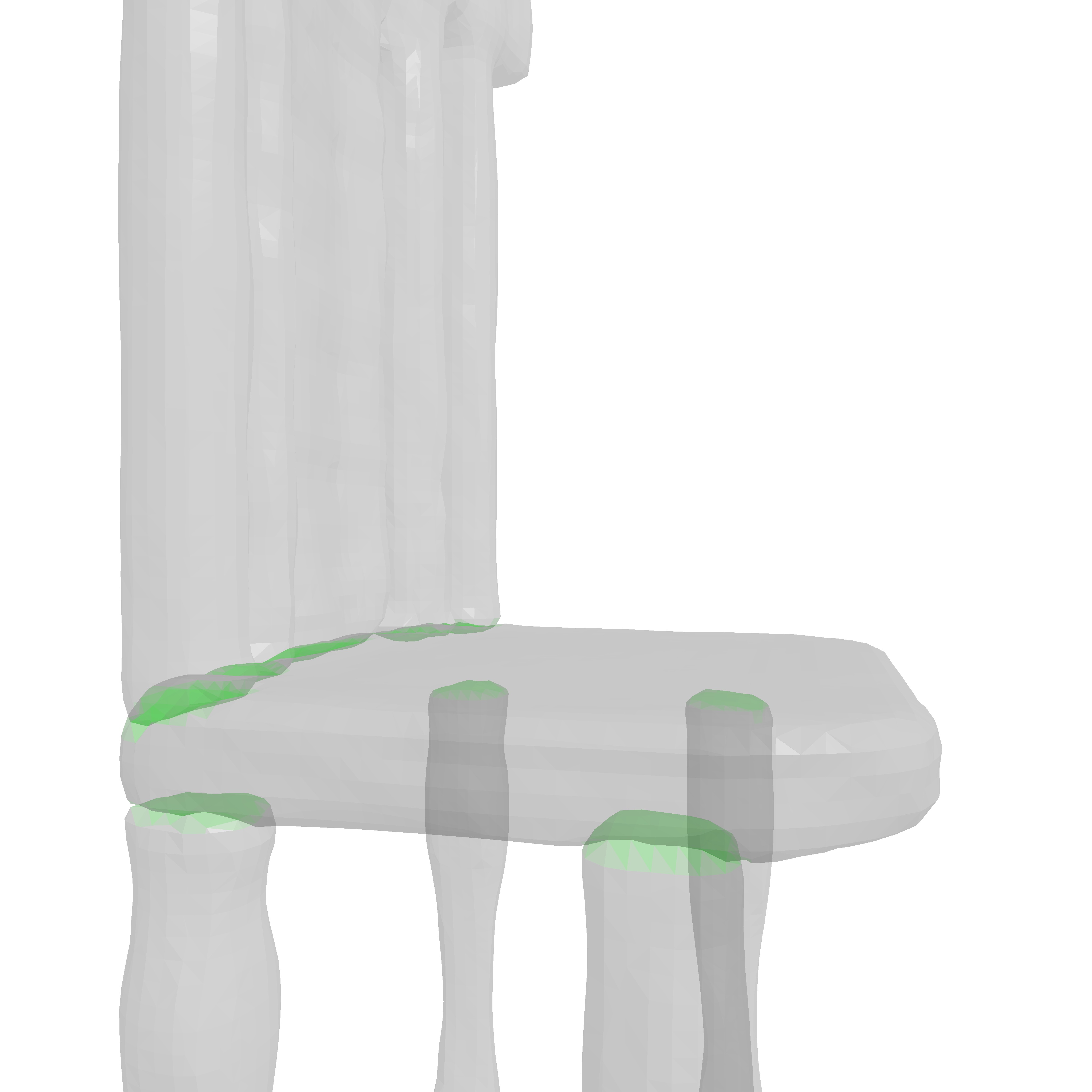} &
    \chairpair{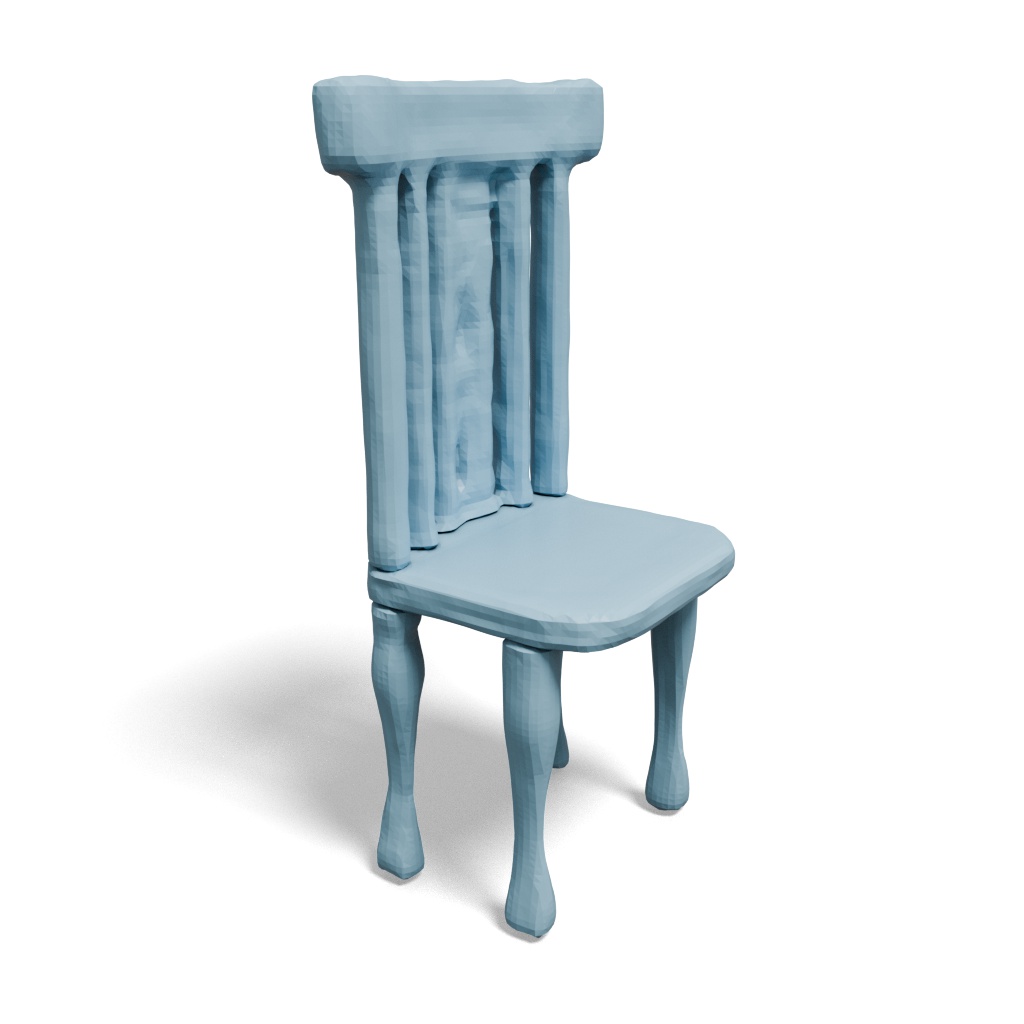}{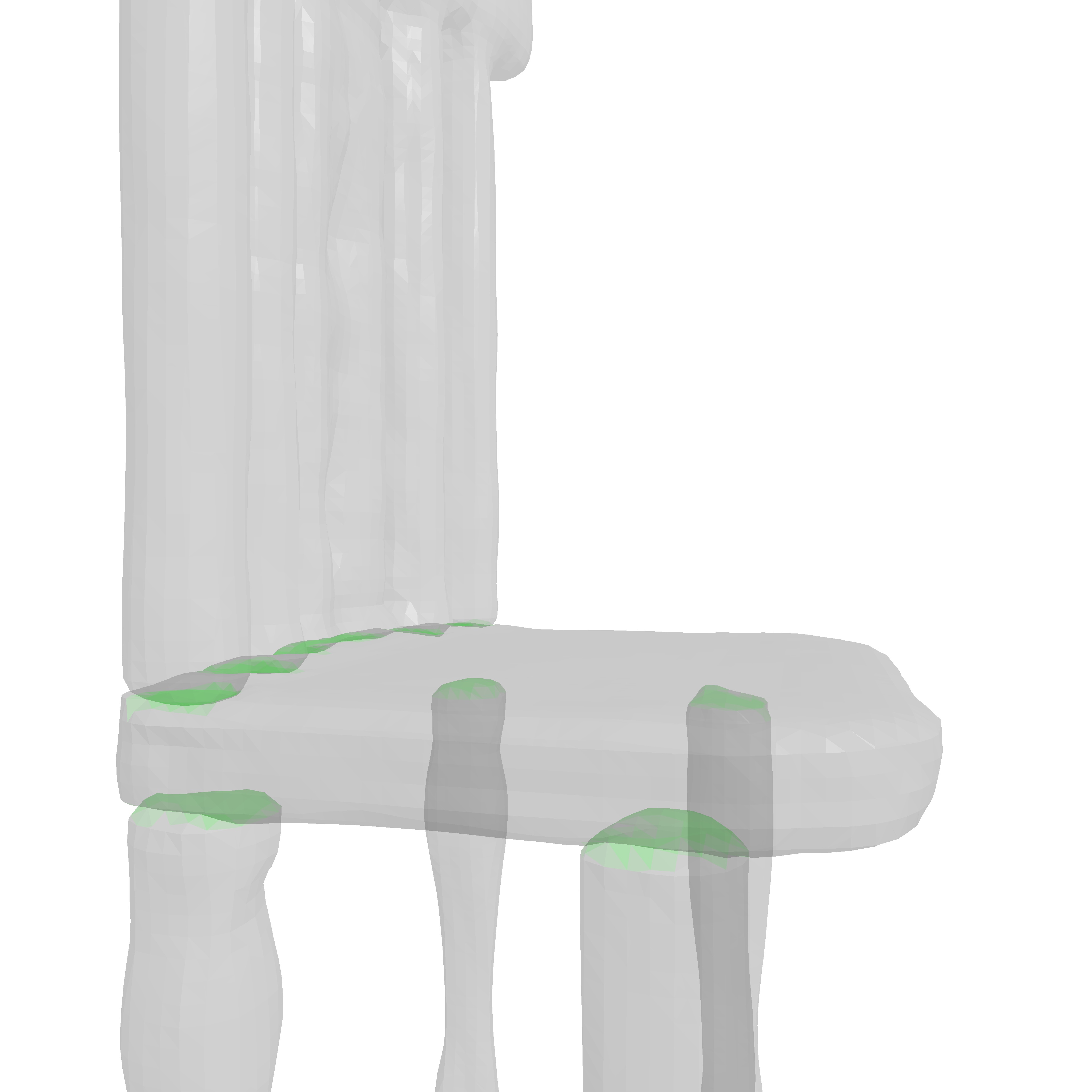} &
    \chairpair{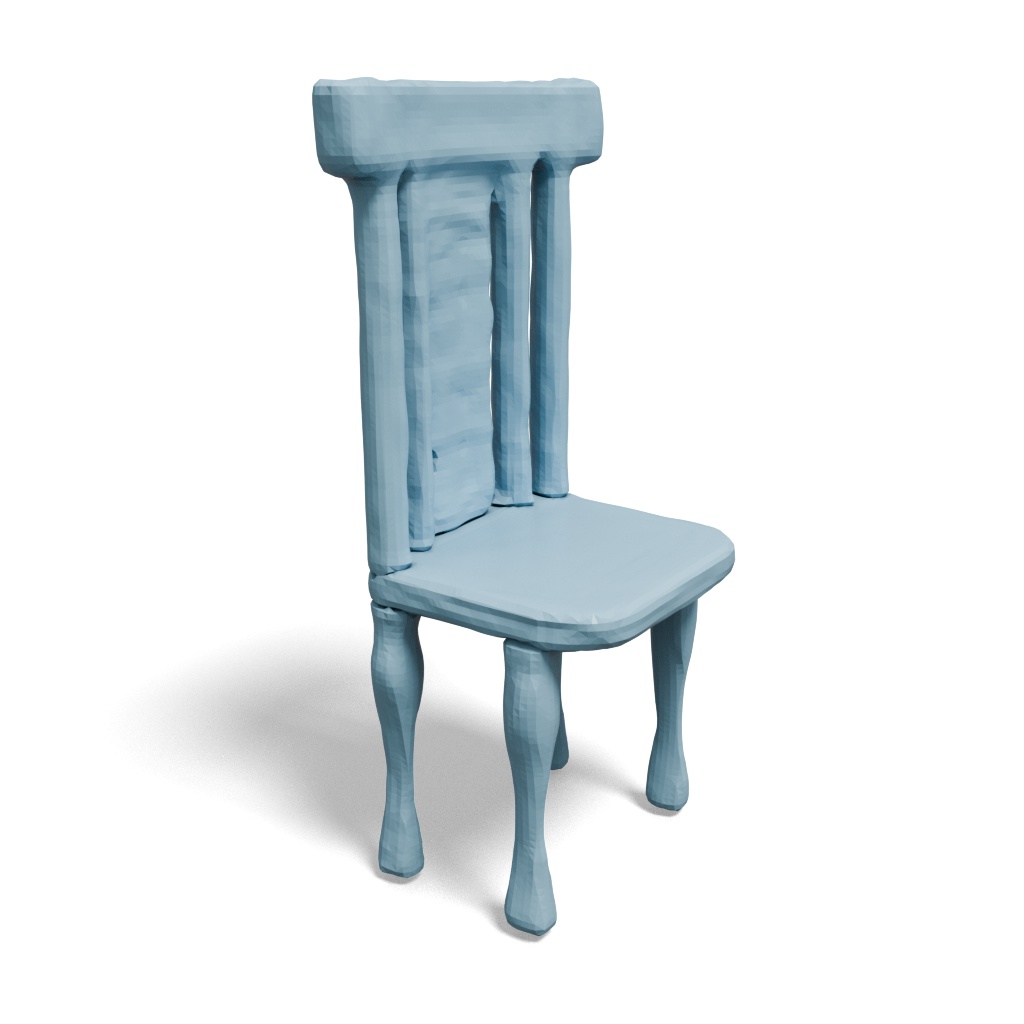}{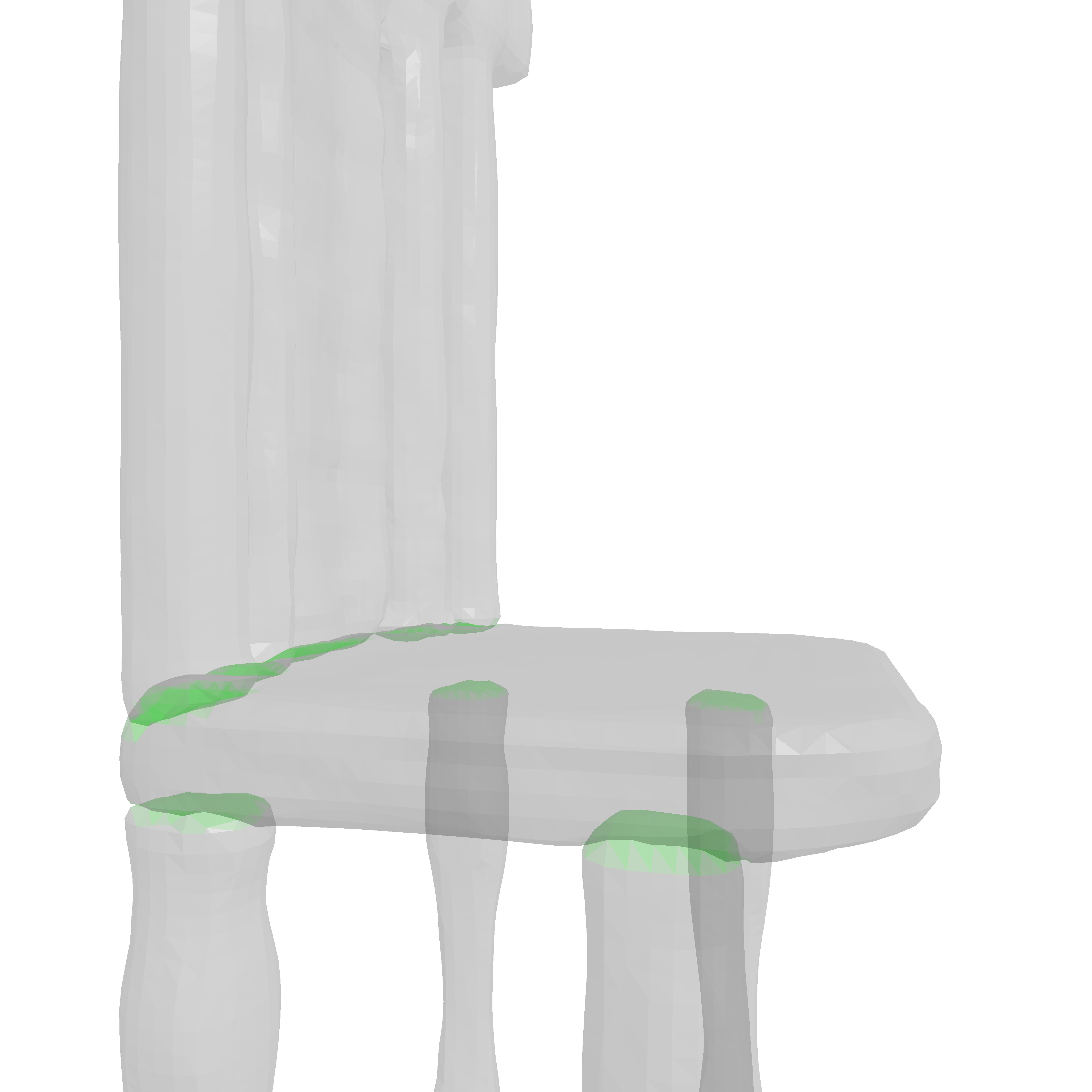} &
    \chairpair{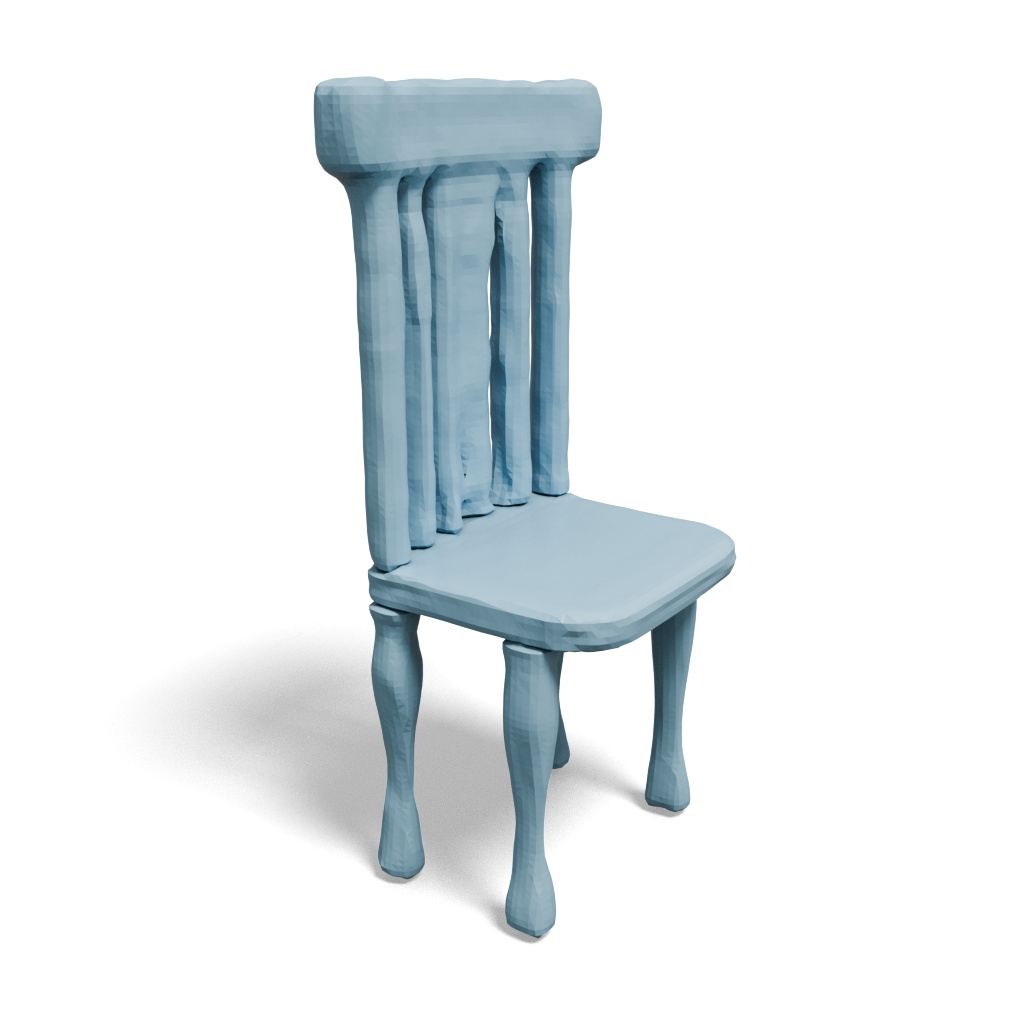}{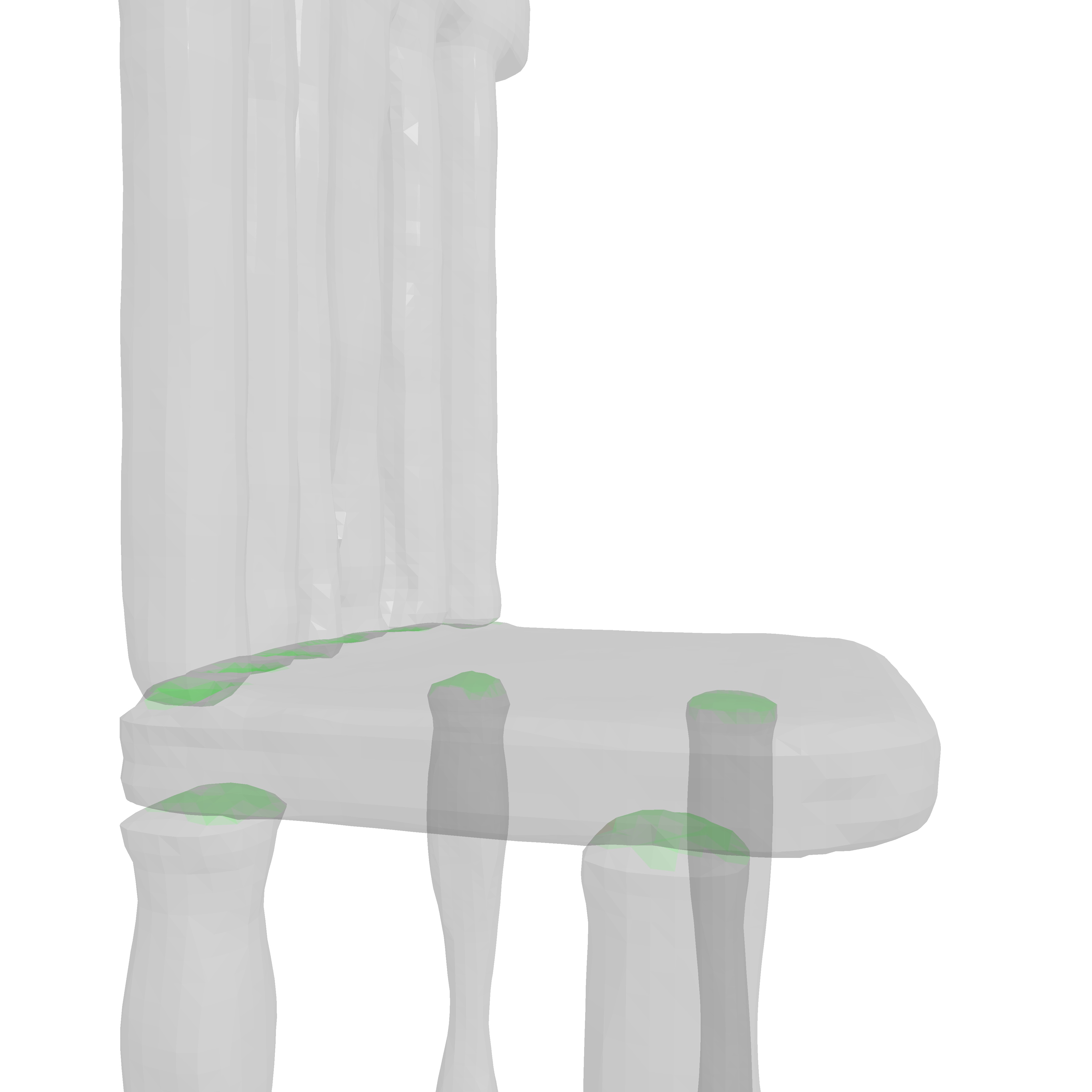} \\[-155pt]
  \end{tabular*}
  \caption{PartSDF~\cite{Talabot25} on PartNet~\cite{Mo19} chairs: the top row shows ray-traced renderings, the bottom row visualizes intersections. Intersecting surfaces are in red, while touching surfaces are in green.}
  \label{fig:chairs}
\end{figure*}

\newcommand{\inci}[1]{\includegraphics[width=0.13\textwidth, valign=c]{#1}}
\newcommand{\inciCrop}[1]{\inci{#1}}
\begin{figure*}[t]
    \centering
    \setlength{\tabcolsep}{1pt} 
    
    \begin{tabular}{c c c c c c c c}
        & Ground Truth & \multicolumn{2}{c}{Vanilla} & \multicolumn{2}{c}{QP (PP)} & \multicolumn{2}{c}{QP (Train)} \\
        & & \small Render & \small Intersections & \small Render & \small Intersections & \small Render & \small Intersections \\

        \raisebox{-0.5\height}{\rotatebox{90}{\quad \small TS-Lung}} &
        \inci{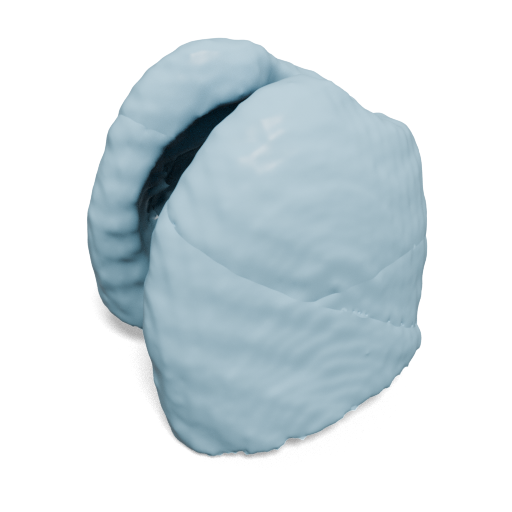} &
        \inci{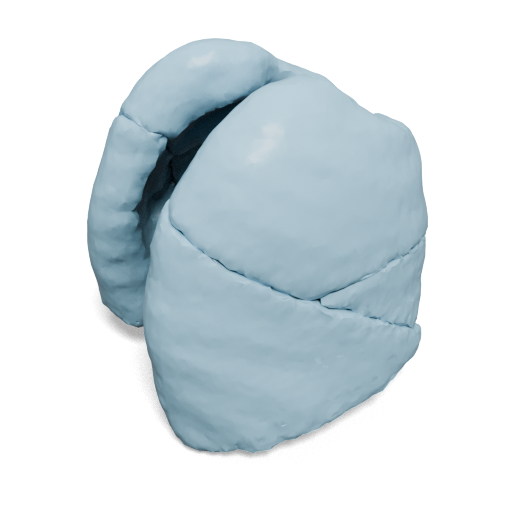} &
        \inciCrop{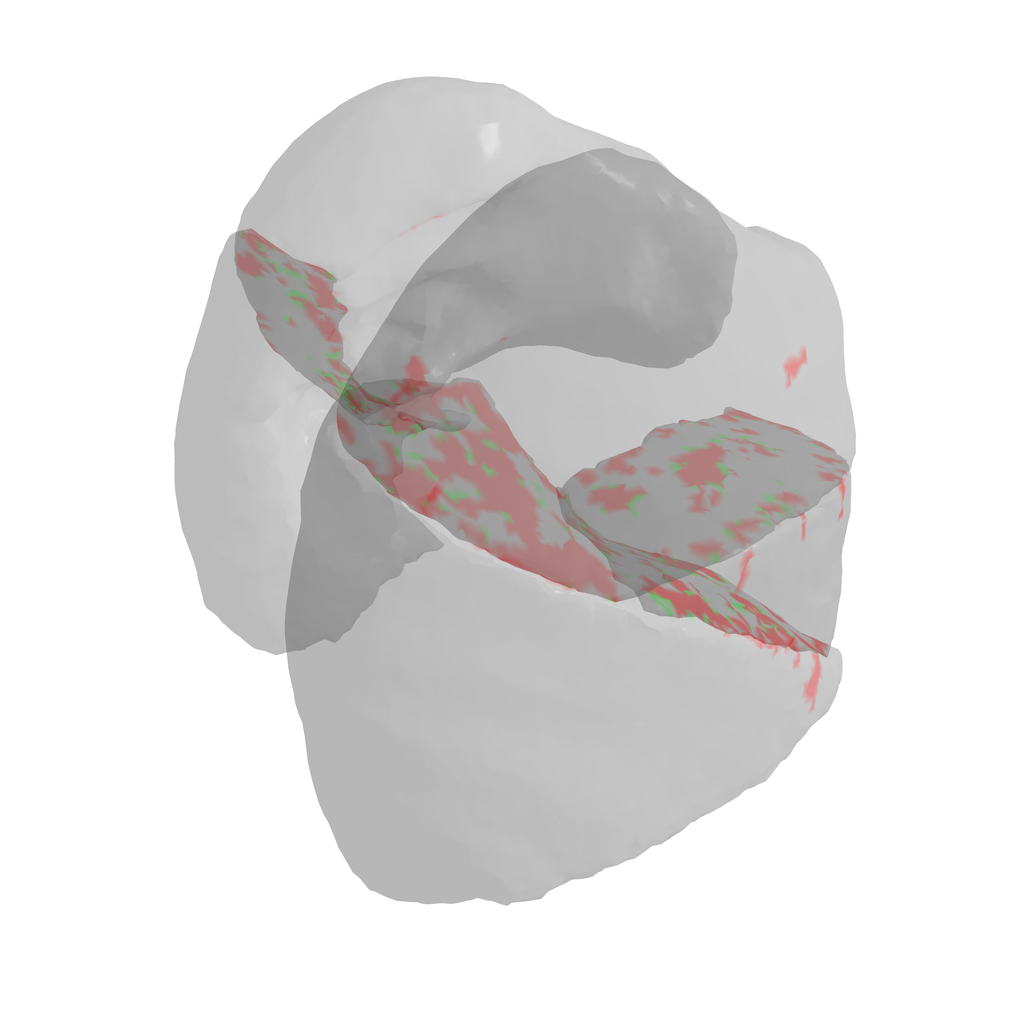} &
        \inci{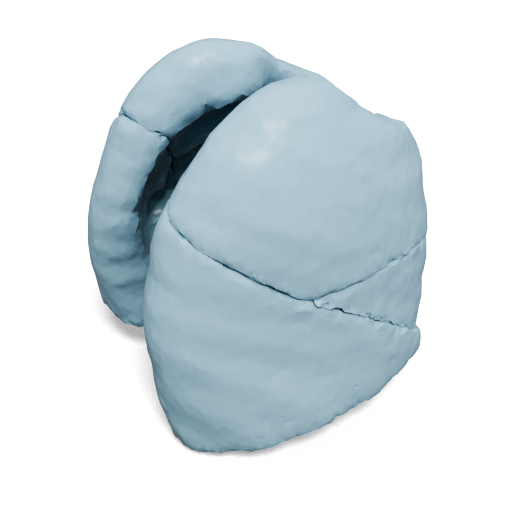} &
        \inciCrop{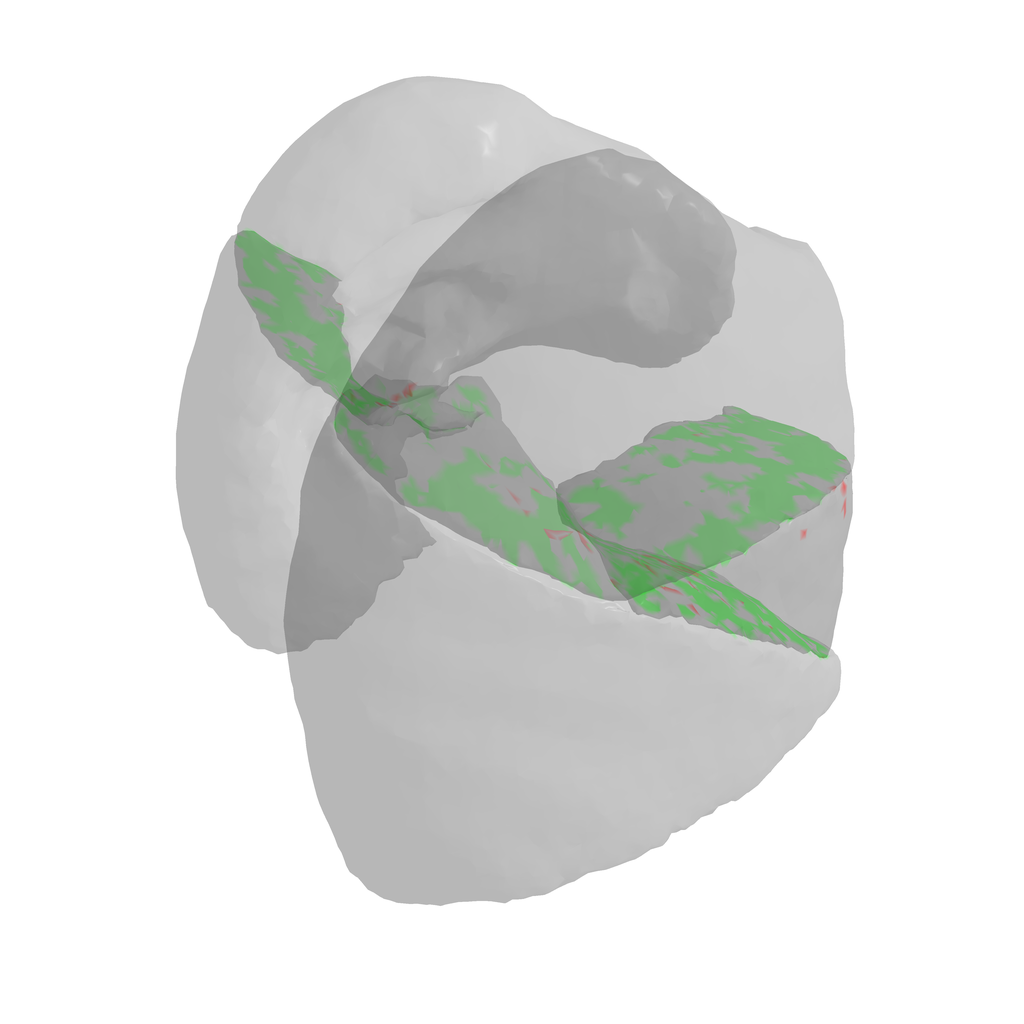} &
        \inci{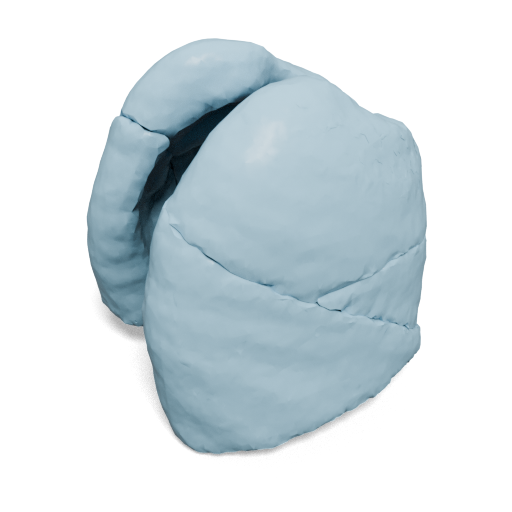} &
        \inciCrop{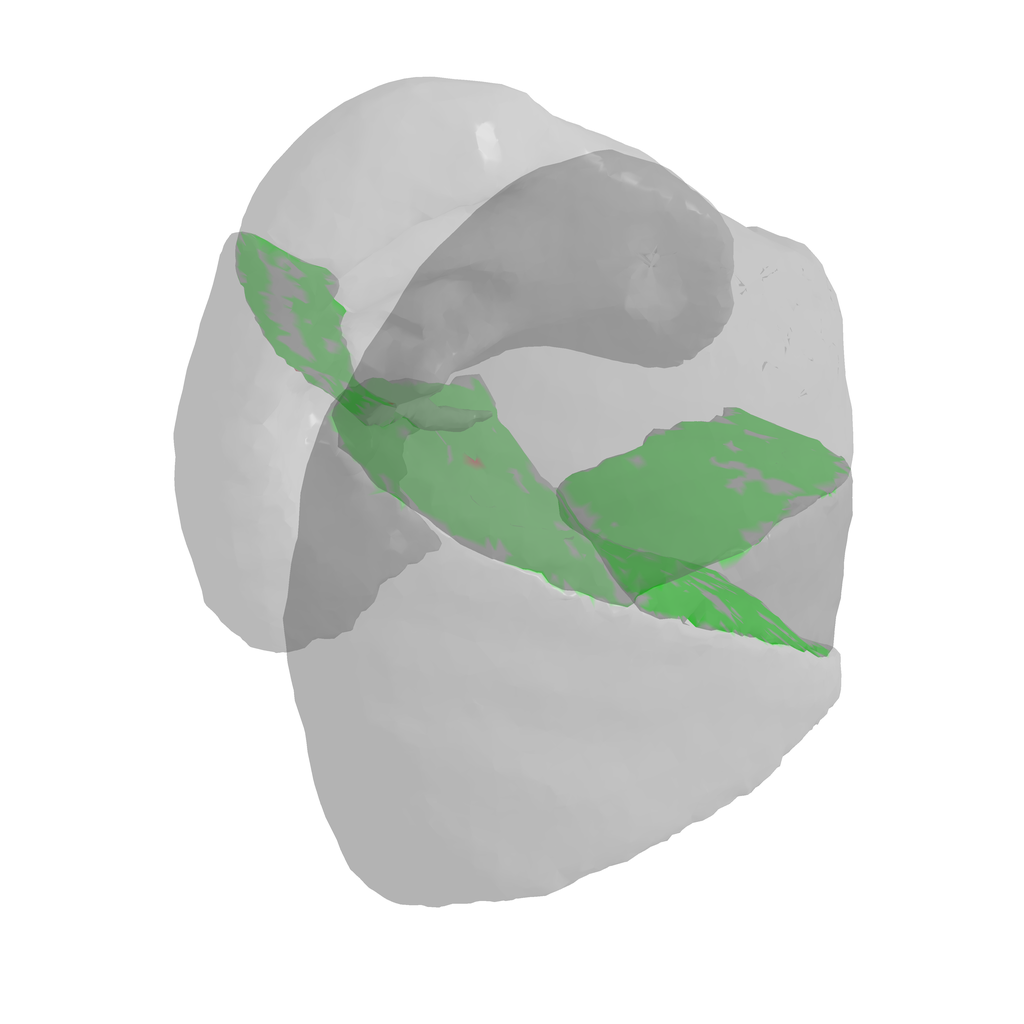} \\

        \raisebox{-0.5\height}{\rotatebox{90}{\quad \small MMWHS}} &
        \inci{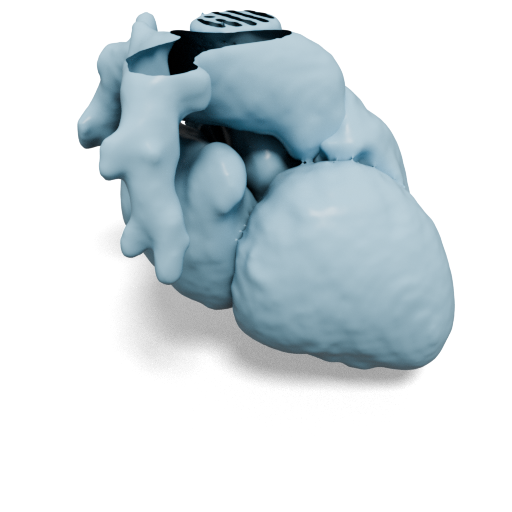} &
        \inci{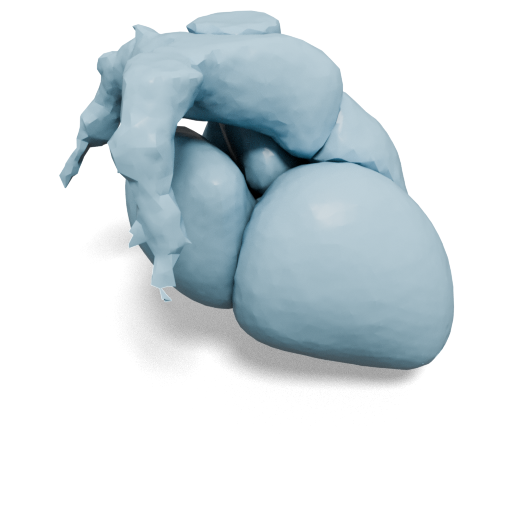} &
        \inciCrop{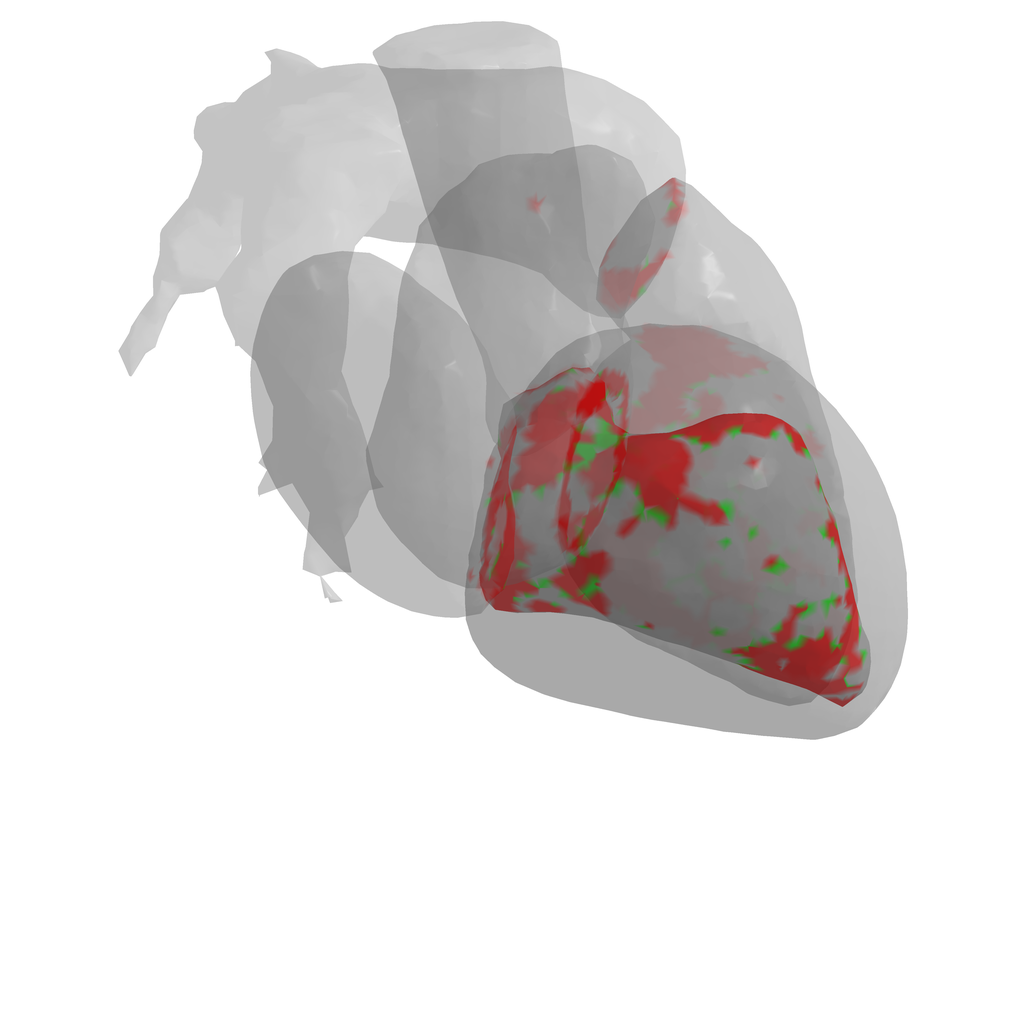} &
        \inci{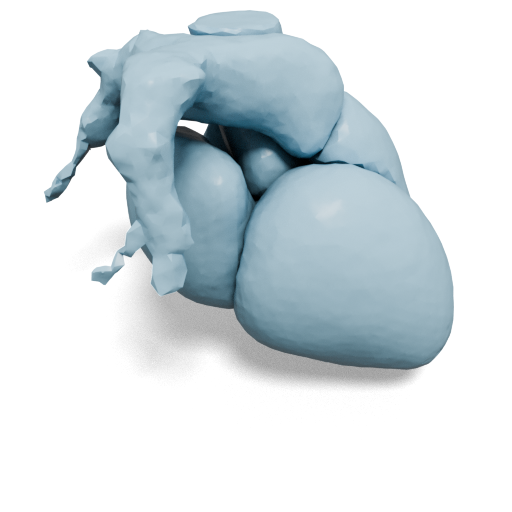} &
        \inciCrop{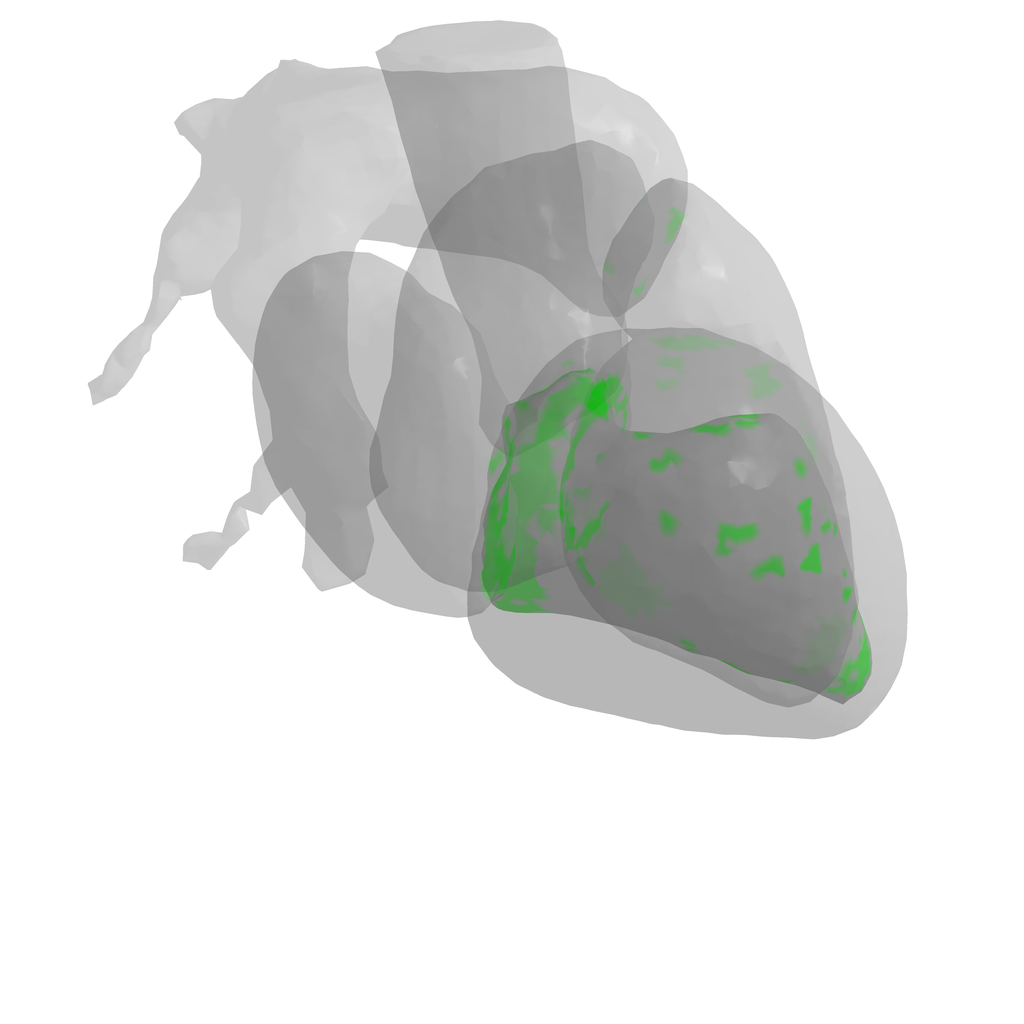} &
        \inci{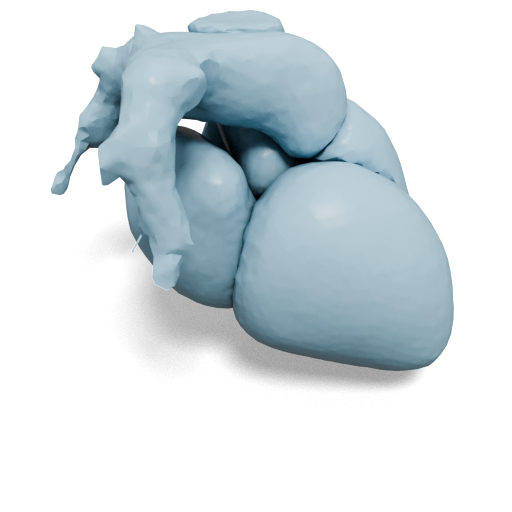} &
        \inciCrop{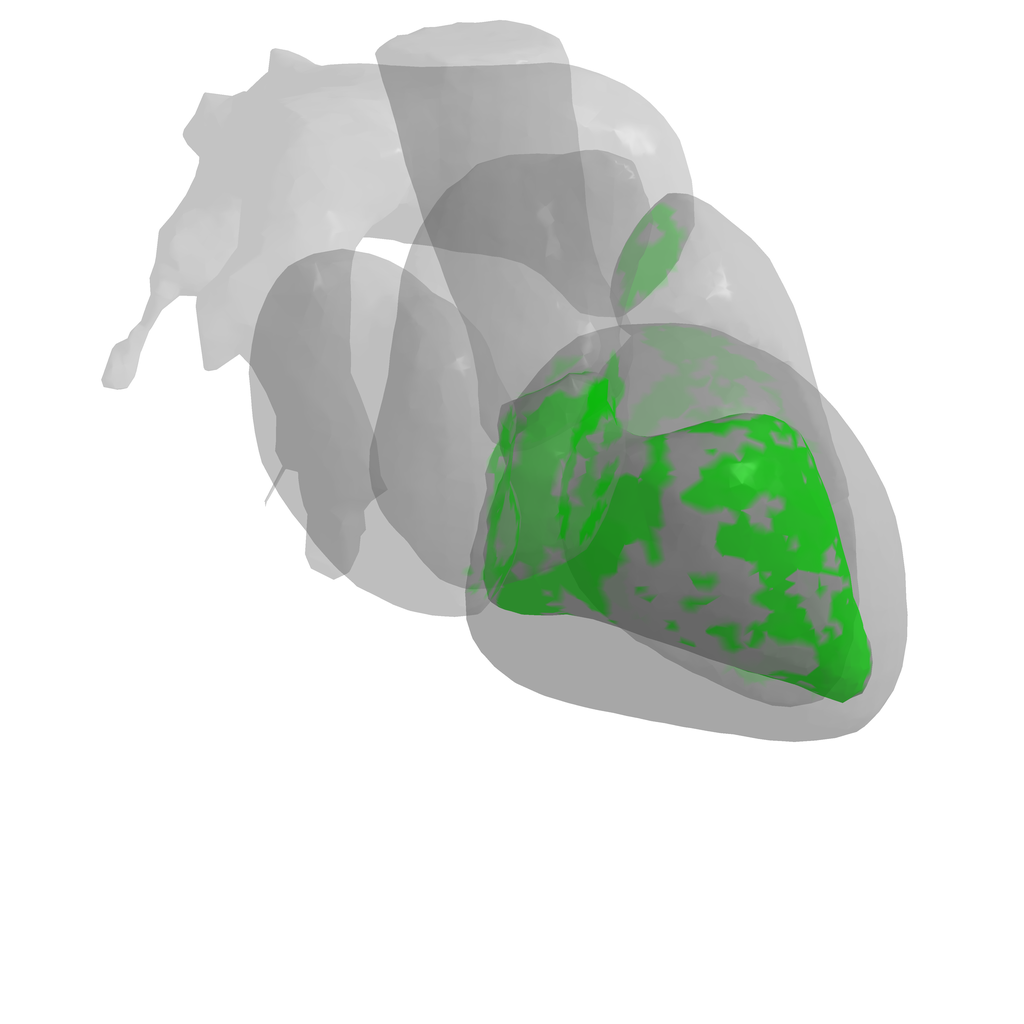}
        
    \end{tabular}
    \caption{\textbf{Qualitative Comparisons.} We compare the outputs of the vanilla approach to our QP variants on MedTet. The vanilla variants have significant intersections (red), while the constrained ones have only contact regions (green).}
    \label{fig:medtet}
\end{figure*}

\renewcommand{\inci}[1]{\includegraphics[width=0.24\textwidth, valign=c]{#1}}
\renewcommand{\inciCrop}[1]{\inci{#1}}
\begin{figure}[htbp]
  \centering
  
  \begin{minipage}[b]{0.49\textwidth}
    \centering
    \setlength{\tabcolsep}{0pt} 
    \begin{tabular}{c c c c}
        Vanilla & Loss & QP & Shift-All \\

        \inciCrop{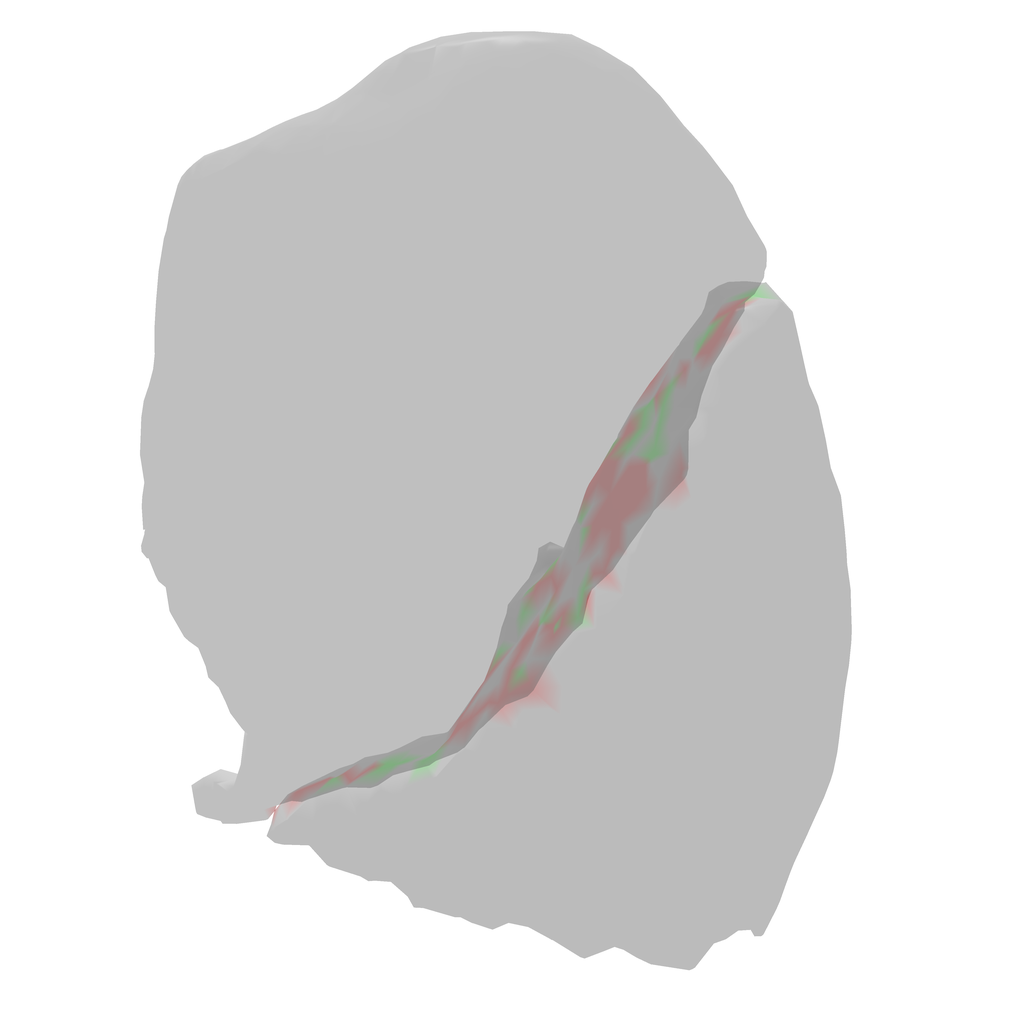} &
        \inciCrop{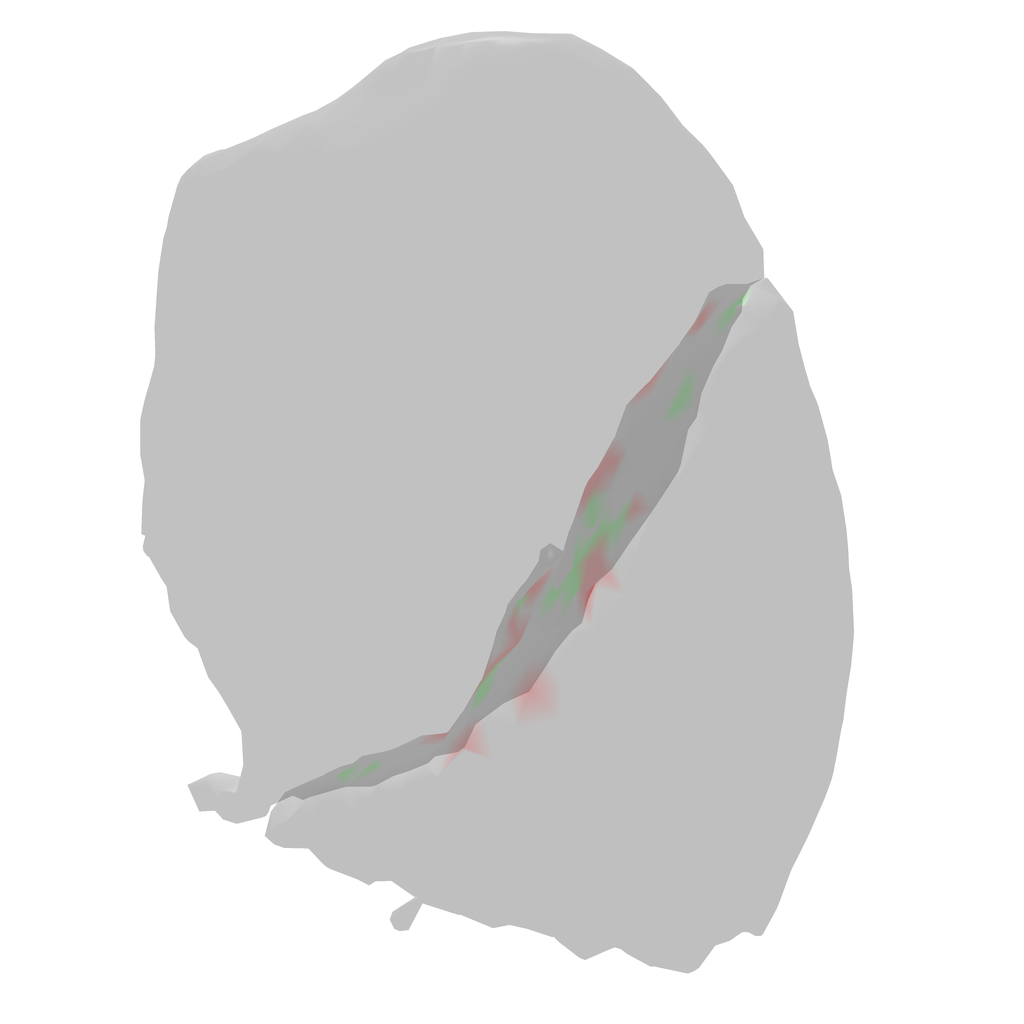} &
        \inciCrop{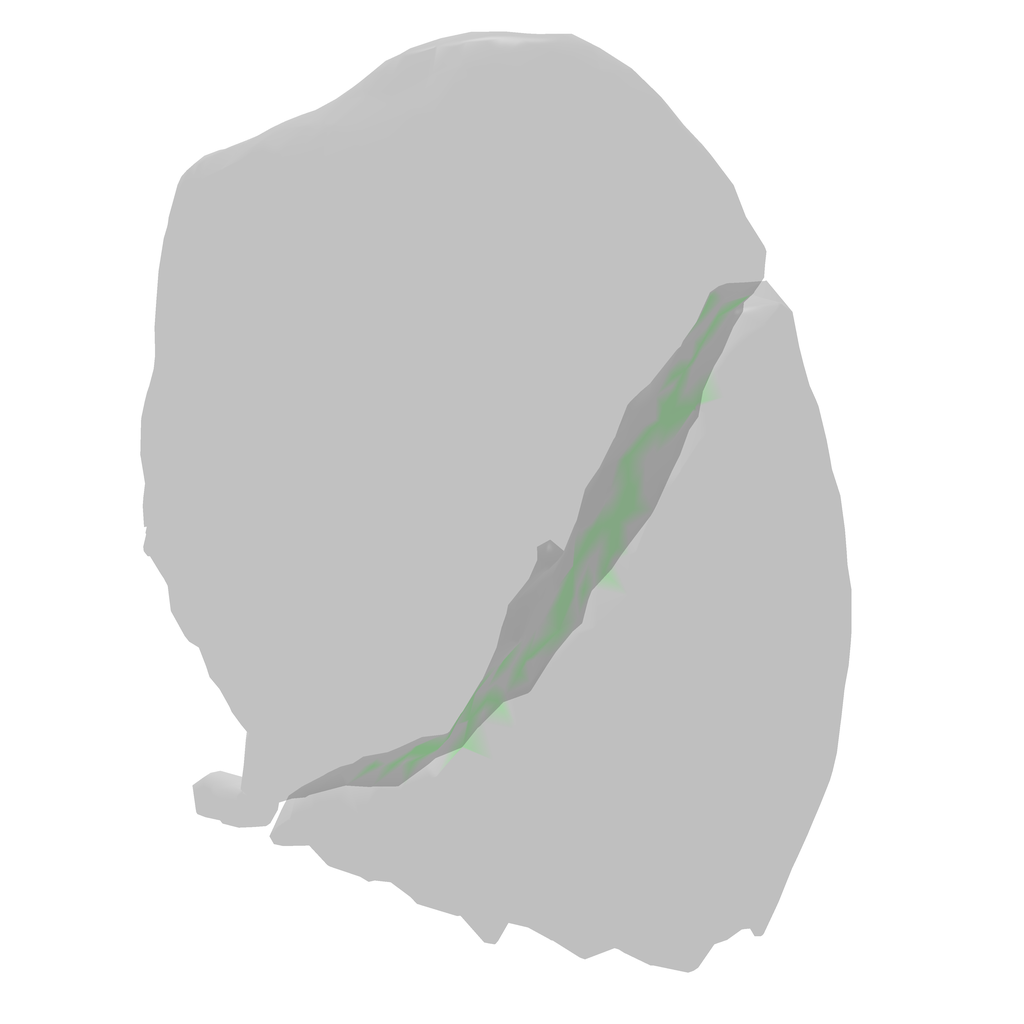} &
        \inciCrop{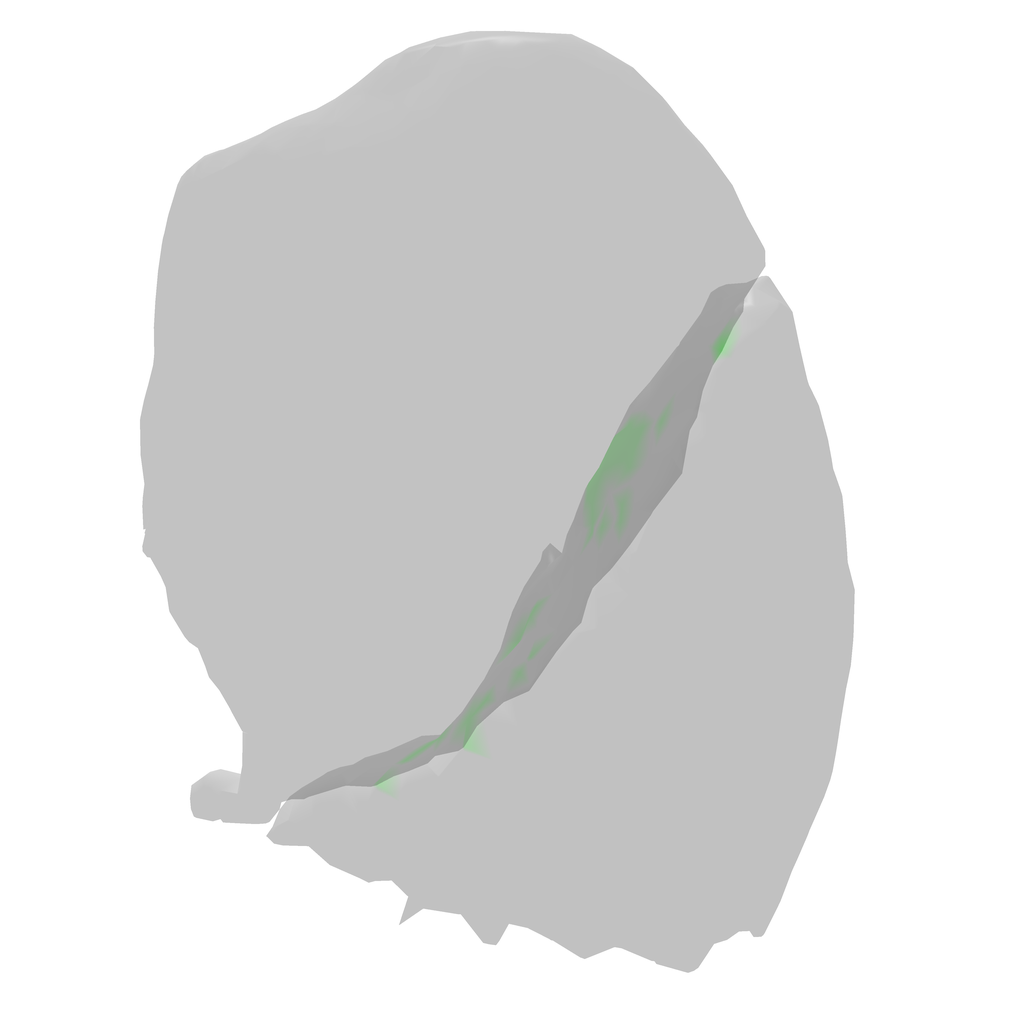}
        
    \end{tabular}
    \caption{Qualitative visualization on left-lung reconstructions. The loss term reduces the intersections compared to vanilla, but is far from completely satisfying non-intersection constraints.}
    \label{fig:abl-q}
  \end{minipage}
  \hspace{1mm}
  \begin{minipage}[b]{0.49\textwidth}
  \centering
  \resizebox{\textwidth}{!}{%
  \begin{tabular}{ccc}
    Model & MCD $(\times 10^{-4})$ $\downarrow$ & IV $\downarrow$ \\
    \midrule
    Vanilla & $6.270 \pm 0.201$ & $2.282 \pm 1.060 \ (\times 10^{-5})$ \\
    \cmidrule(lr){1-3}
    Loss & $6.469 \pm 0.372$ & $2.069 \pm 0.772 \ (\times 10^{-5})$ \\
    \cmidrule(lr){1-3}
    \acro{} (QP) - Train & $6.096 \pm 0.343$ & $1.352 \pm 0.544 \ (\times 10^{-9})$ \\
    \cmidrule(lr){1-3}
    \acro{} (Shift-All) - Train & $6.469 \pm 0.479$ & $7.412 \pm 10.603 \ (\times 10^{-12})$ \\
  \end{tabular}%
  }
    \captionof{table}{Metric comparison between the application of a soft intersection loss and our S2MDF layer during training for MedTet. The soft loss only marginally decreases the intersection volume, while our approach reduces it by several orders of magnitude.}
    \label{tab:abl-q}
  \end{minipage}
  
\end{figure}

Our approach is designed to be generic and usable in conjunction with any signed distance function. To demonstrate this, we use it to impose non-intersection constraints on three modern multi-object modeling techniques~\citep{Wu23b,Le25a,Talabot25}, some of which come with their own approaches to addressing the problem. This allows for comparisons between the different variants of our intersection prevention method and theirs, and shows the effectiveness of our method across different architectures, tasks and domains. We present the methods below.


\subsection{Target Methods}
\paragraph{MedTet~\citep{Chen24f}} is a medical imaging model designed to perform 3D organ reconstruction from CT/MRI scans using the Deep Marching Tetrahedra~\citep{Shen21a} algorithm. The model learns to deform and predict SDF values per-vertex on a tetrahedral grid, and then extracts surfaces using Marching Tetrahedra. Losses can be applied both on SDF predictions and the extracted mesh thanks to the surface extraction being differentiable. \fs{It does not include any intersection-mitigating mechanisms.} We evaluate MedTet in \cref{tab:medtet} and \ref{fig:medtet} on \fs{two} datasets:

MM-WHS~\citep{Zhuang18a}: a dataset for whole heart segmentation, containing 20 annotated CT samples with 7 heart components -- left ventricle, right ventricle, left atrium, right atrium blood cavities, left ventricle myocardium, aorta and pulmonary artery. 

TS-Lung~\citep{Wasserthal23}: the lung component of the TotalSegmentator dataset, containing 318 lung CT scans with 5 parts: 2 lobes of the left lung and 3 lobes of the right lung. While lung segmentation is considered a simple task with many approaches being able to achieve very high Dice scores~\citep{Turk25}, we find this dataset useful due to the large number of samples present and the possibility of multi-way intersections.

\textbf{ObjectSDF++~\citep{Wu23b}} is a neural implicit framework designed for high-fidelity, object-compositional 3D scene reconstruction. It utilises an occlusion-aware opacity rendering to reconstruct a scene as a collection of distinct, individual objects alongside a separate background. It includes an intersection-mitigating loss which substatially degrades the method if not used~\cite{Wu23b}, so we do not ablate it. We evaluate the model for scene reconstruction using the Replica~\citep{Straub19} dataset, adapted from MonoSDF~\citep{Yu2022MonoSDF}. Our evaluation covers 8 distinct scenes and is carried out as in the original paper, with each scene comprising dozens of individual objects, including the background. The ground-truth meshes used to compute our evaluation metrics are extracted with the vMAP framework \citep{Kong23}. We extract the meshes with Marching Cubes at resolution 256. The results are presented in \cref{tab:objectsdfplus_scenes_all} and \cref{fig:objectsdf_plus_render_comparisons}, excluding the IoU metric which cannot be computed in this setting due to non-watertight ground-truth meshes. 

\textbf{PartSDF}~\citep{Talabot25} is an SDF-based method for part-aware shape representation, with applications in shape reconstruction, generation and optimization. It represents shapes as a collection of parts, each represented by a separate SDF, and includes a loss term to mitigate intersections between parts akin to~\cite{Wu23b,Le25a}. We follow the experimental setup of the original paper for shape reconstruction and train one model on 100 chairs from PartNet~\citep{Mo19}, with 8 parts each, and a separate one on 100 mixers from~\citep{Vasu22}, with 4 parts each. We then evaluate the models by freezing the weights and optimizing the latent codes on 100 chairs and 100 mixers from the test set, and we extract them with Marching Cubes at resolution 128. Notice that, while the mixers are made of watertight parts, the chairs are not, which serves as an interesting test for the robustness of our approaches to a scenario in which supervision is missing specifically at the intersection between parts. We report the results in \cref{tab:partsdf} and \cref{fig:chairs}, with additional results in the Appendix.

\subsection{Experimental Protocol}

In all three cases discussed above, we compare the original version of the method against four variant of our \acro{} layer: \fs{QP (PP) applies the QP-based layer only at meshing time, effectively as a post-processing step for the SDF values; QP (Train) appends the QP-based layer to the end of the network, affecting both training and inference; Shift-All (PP) and Shift-All (Train) apply the Shift-All approach in the same way, respectively as a post-processing step and during training.}
%

\subsection{Evaluation Metrics}

To evaluate the results, we rely on metrics commonly used in 3D deep learning, along with \fs{one that computes intersection volume between objects and is key for assessing the validity of our method}. When possible, we base them on point cloud to facet matches rather than point cloud to point cloud for increased accuracy, and thus we add a \textbf{Mesh} prefix to the metric.

\textbf{Mesh Chamfer Distance}. When measuring Chamfer Distance between two meshes $M_1$ and $M_2$, point are uniformly sampled on the surface of the meshes: $P_{1} \sim \mathcal{U}(M_{1})$ and $P_{2} \sim \mathcal{U}(M_{2})$. However, this formulation is biased for a finite number of points (it is an overestimate), and inaccurate compared to finding the distance from a point to a mesh directly~\citep{Stella25}. We therefore define Mesh Chamfer Distance using sampled point to mesh distances rather than point to point, as
\begin{equation}
    d_{MCD}(M_1, M_2) = \frac{1}{|P_1|} \sum_{x \in P_1} \min_{y_m \in M_2} \|x - y_m\|_2^2 + \frac{1}{|P_2|} \sum_{y \in P_2} \min_{x_m \in M_1} \|x_m - y\|_2^2
\end{equation}
Note that $x_m$ and $y_m$ can be anywhere on the mesh: a vertex, edge, or on the surface of any of its constituent triangles. This distance can be calculated efficiently.

\textbf{Mesh Normal Consistency}. Measures agreement between the normals of two point clouds with cosine similarity, using nearest neighbor matches as with the Chamfer Distance. As with the Mesh Chamfer Distance, we find the nearest points directly on the target mesh and use interpolated vertex normals rather than per-triangle flat normals. 

\textbf{F1 score ($\tau$)}. As done in ObjectSDF++~\citep{Wu23b}, measures the ratio of points from $M_1$ at distance less than $\tau$ from $M_2$. Precision is from predicted mesh to ground truth, recall is vice-versa, and F1 score is the harmonic mean of the two, being bidirectional. We once again measure distances with point-to-mesh approaches. 

\textbf{Mesh IOU}. Measures the overall alignment between two meshes, as their volumetric intersection over union: $\frac{\text{vol}(M_1 \cap M_2)}{\text{vol}(M_1 \cup M_2)}$. We measure this approximately using a discrete voxel grid on the union bounding box of the two meshes. 

\textbf{Intersection Volume}. The volume of the intersection between two meshes, $M_1$ and $M_2$. This is calculated between predicted meshes for different parts of the same object or scene. It is always 0 in the ground truth. 


\subsection{Comparative Results}
We execute experiments on single NVIDIA A100 or V100 GPUs, ensuring consistency when measuring times. We refer to the original papers for more details on the setup and the compute requirements.
We report the results of our experiments in \cref{tab:medtet,tab:objectsdfplus_scenes_all,tab:partsdf} and \cref{fig:medtet,fig:objectsdf_plus_render_comparisons,fig:chairs} for MedTet, ObjectSDF++ and PartSDF respectively. We run each experiment 3 times, to compute mean and standard deviations. Across all results, we observe that both QP and Shift-All reduce intersections by several orders of magnitude with respect to the original methods, including PartSDF and ObjectSDF++ which already include a loss term to mitigate them. The reduction in intersections is qualitatively visible in \cref{fig:medtet,fig:objectsdf_plus_render_comparisons,fig:chairs}, where the red color indicates intersecting surfaces and the green color indicates touching surfaces. In terms of reconstruction quality, we observe that the metrics are quite noisy, but overall remain close to the original methods in mean and variance, with some exceptions. In particular, we notice that our S2MDF methods tend to perform slightly better when used during training than only as a post-processing step, as the network can adapt to the constraint during training. We also notice that the Shift-All approach performs as well as the QP-based one during post-processing and slightly better during training, while being much faster. For few objects, like in the case of PartSDF mixers which only have 4 parts, QP does not add significant overhead when used as post-processing, however it increases the training time from 3h30m to 5h40m when used during training. For more complex objects, like the PartSDF chairs with 8 parts, the overhead of the QP-based approach is much more significant, increasing the training time from 4 to 9 hours for 2000 epochs, and the inference time from 20 to 60 seconds per object. On ObjectSDF++, where there are several dozens of objects per scene, the overhead of the QP-based approach makes it infeasible to use it during training. On the other hand, the Shift-All approach did not add any measurable overhead during training or post-processing in any of our tests, across all methods. As such, we recommend the use of \acro{} with the Shift-All approach.

\paragraph{Comparison against loss-based solutions}
\fs{To compare our method against loss-based intersection mitigations we run an experiment on TS-Lung for MedTet using the left lung only}, where we apply an additional loss term to penalize intersections that does not exist in the base model: $\mathcal{L}_{inter} = \frac{1}{K-1} \sum_{k \ne min} \max(0, -(SDF_{min} + SDF_{k}))$. This equation penalizes overlapping geometry by enforcing that if a point is deeply inside one object ($SDF_{min} < 0$), it must be sufficiently outside all other objects ($SDF_{k} > 0$) so that their sum remains strictly positive. This loss is the same as the one applied in \fs{ObjectSDF++ and similar to the one in PartSDF} and \cite{Le25a}. We show that the improvement offered by this loss in terms of intersection volume is marginal in \cref{fig:abl-q} and \cref{tab:abl-q}, whereas our approach reduces it by several orders of magnitude. Our approach remains compatible with non-intersection losses, as they can help regularizing the underlying field, but it still vastly improves the results. We refer to \cref{sec:nonintloss_partsdf} for an ablation on the non-intersection loss in PartSDF, showing similar results.

\paragraph{Bringing intersections to zero}

\fs{To show that our approach is capable of fully eliminating intersections,} we perform a small experiment comparing $\epsilon = 0$ and $\epsilon = 10^{-4}$ for the QP-based solution where we set $d_i + d_j \ge \epsilon$. We note that the first one evaluates to around $1.2 \times 10^{-8}$ in terms of intersection volume, while the second one evaluates to exactly $0$, \fs{showing that setting a small threshold effectively eliminates intersections, which can otherwise appear due to numerical errors.}

\paragraph{Eikonal loss}
Regularization losses such as the eikonal loss are commonly used in SDF-based methods to encourage smoothness and regularity of the learned fields. We perform an ablation study on ObjectSDF++ to evaluate the effect of applying the Shift-All approach during training with and without a skip connection that allows the eikonal loss to be computed on the original network outputs. We observe that our layer performs overall similarly to the ground truth in both cases, but produces a noticeably lower MCD when the eikonal loss is applied to the original network output. We conjecture that regularization losses are better suited for the original field, as they help the network learn a naturally better field before applying the constraint layer. \fs{Thus, we suggest to apply regularization losses to the original field}.





\section{Conclusion}

In this work, we have introduced a constrained SDF formulation to guarantee intersection-free multi-object implicit surface representations, and we have proposed S2MDF, a fast, plug-and-play module that can be applied to any object-compositional SDF-based representation to enforce the constraint, whose output remains compatible with linearly-interpolated meshing techniques. We have shown that our method eliminates them in theory and reduces them to numerical error in practice, without degrading the accuracy of the reconstructions and sometimes even improving it, which existing loss-based solutions cannot do. While we can address the numerical issue by adding a small margin to our constraint, an even more principled solution would be to design meshing algorithms that take into account multiple objects and can handle them correctly, which is a topic for future work.

\clearpage
\bibliographystyle{plain}
\bibliography{bib/graphics,bib/vision,bib/biomed,bib/optim,new}

\clearpage
\appendix
\renewcommand\thefigure{A.\arabic{figure}} 
\renewcommand\thetable{A.\arabic{table}} 
\renewcommand\thesection{A.\arabic{section}}
\setcounter{figure}{0}
\setcounter{table}{0}
\setcounter{section}{0}
\section{Additional Results}
\label{sec:additional_results}
We show here additional results and figures that were not included in the main text. In \cref{fig:mixers}, we show visualizations of the intersections for the PartSDF~\cite{Talabot25} experiment on the Mixers category. Similar conclusions as the main text apply. 
\begin{figure}[]
  \centering
  \setlength{\tabcolsep}{2pt}
  \resizebox{\textwidth}{!}{%
    \begin{tabular}{cccccc}
      \includegraphics[width=0.15\textwidth]{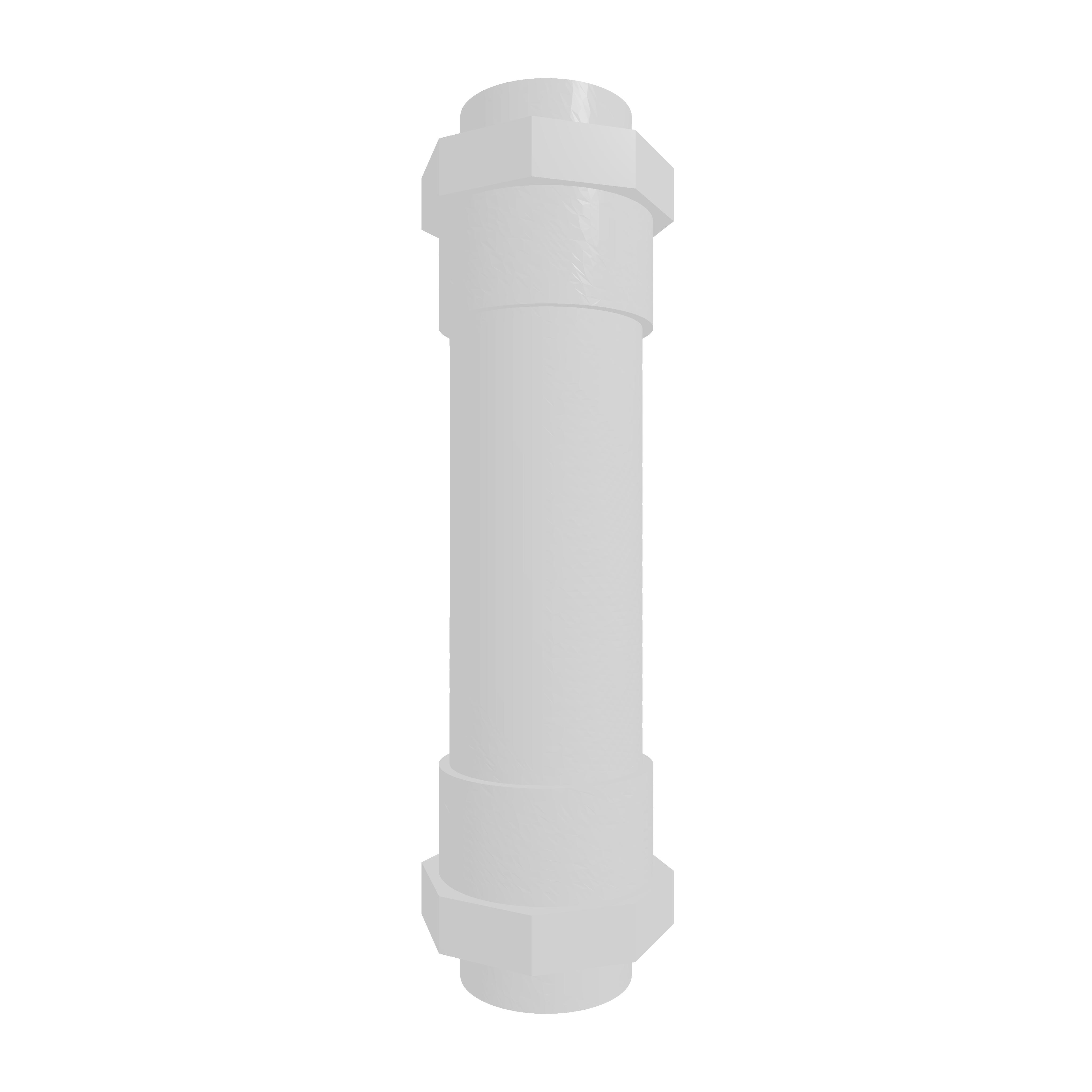} &
      \includegraphics[width=0.15\textwidth]{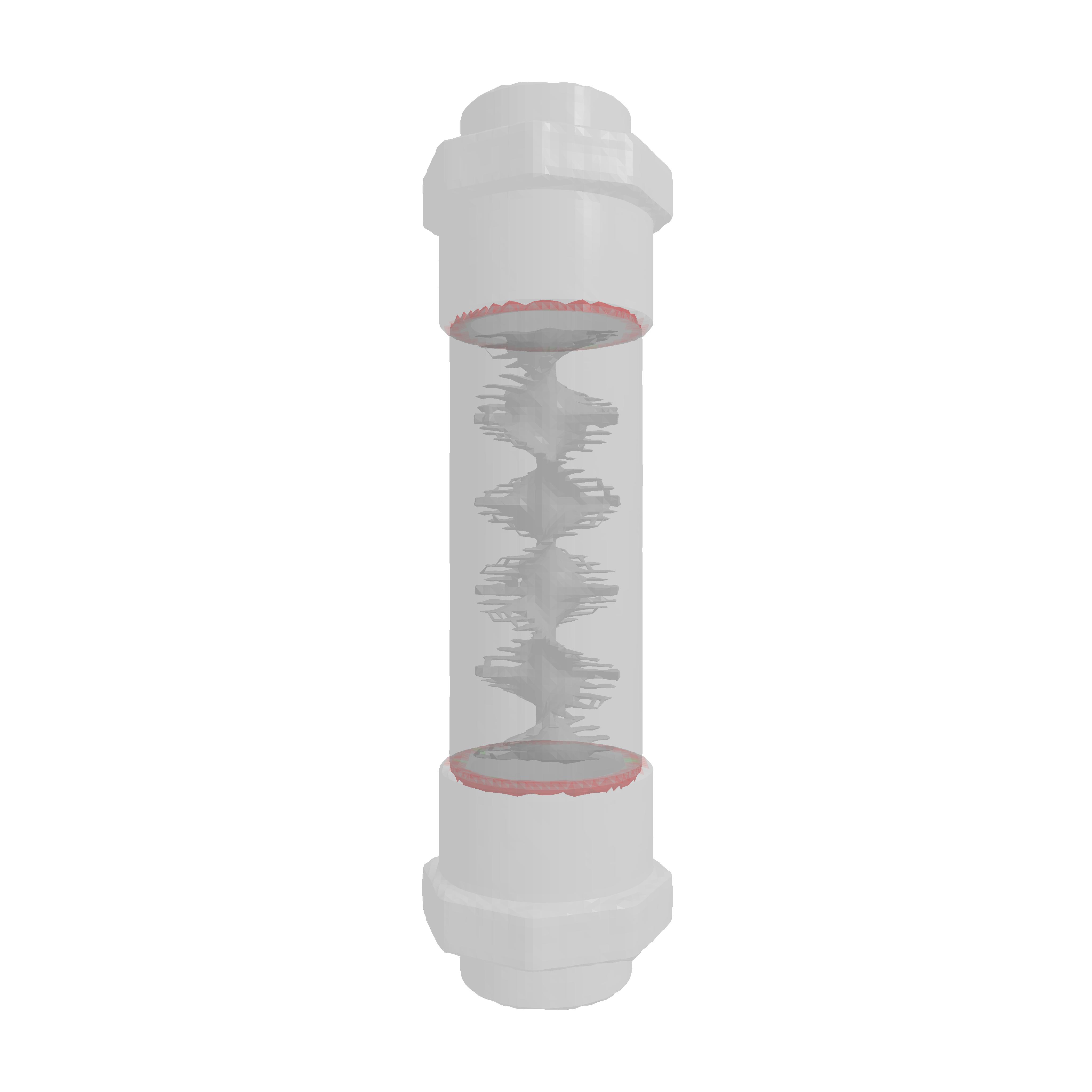} &
      \includegraphics[width=0.15\textwidth]{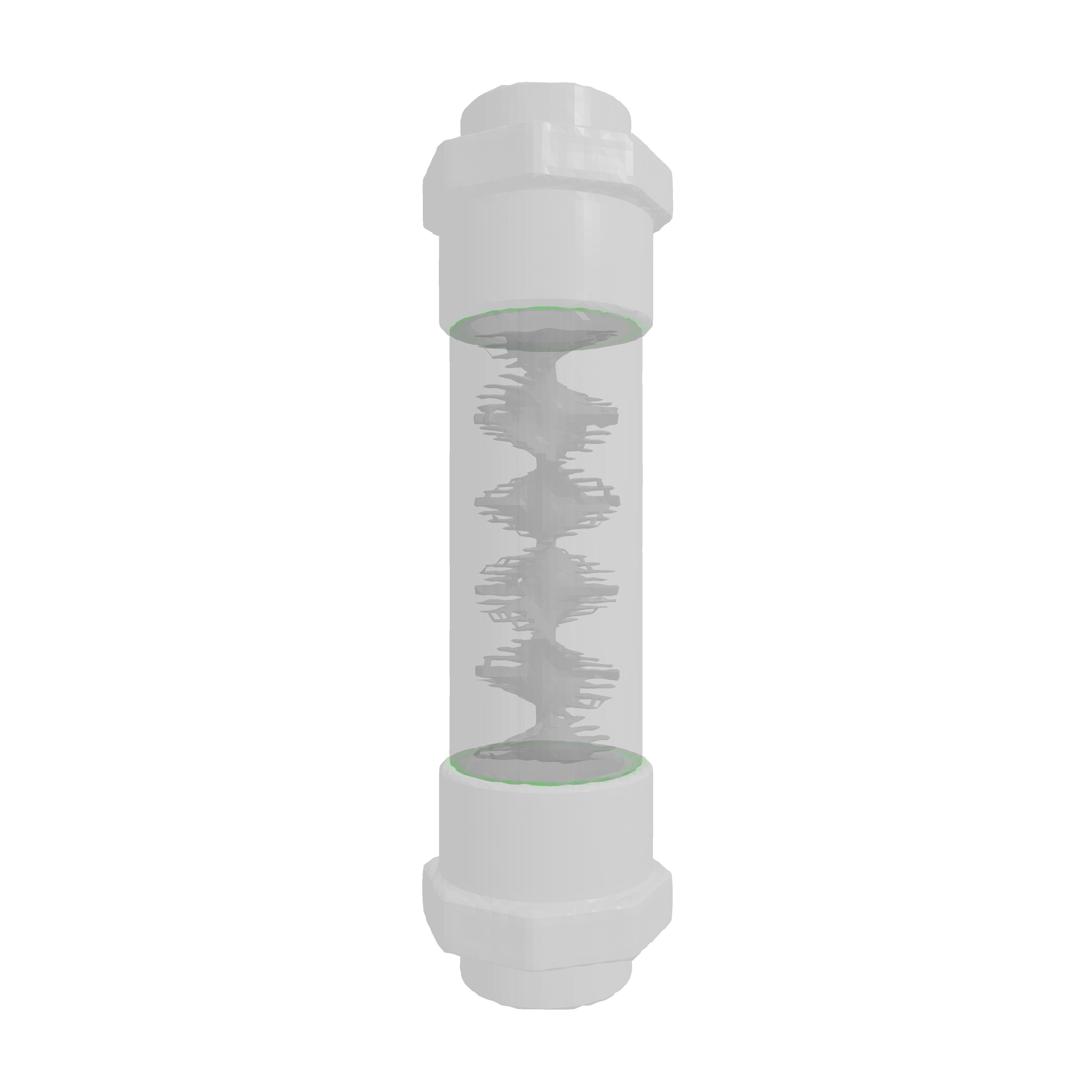} &
      \includegraphics[width=0.15\textwidth]{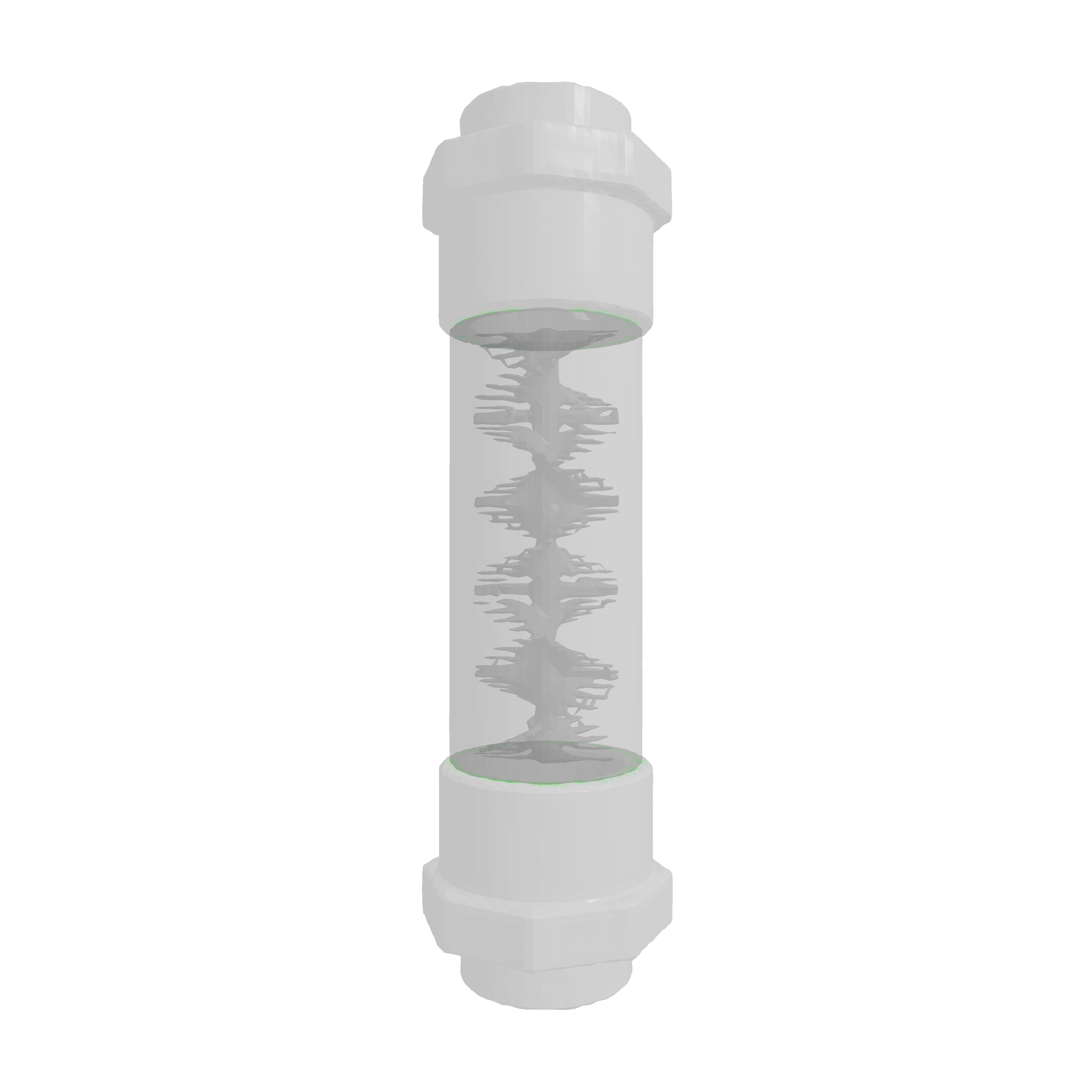} &
      \includegraphics[width=0.15\textwidth]{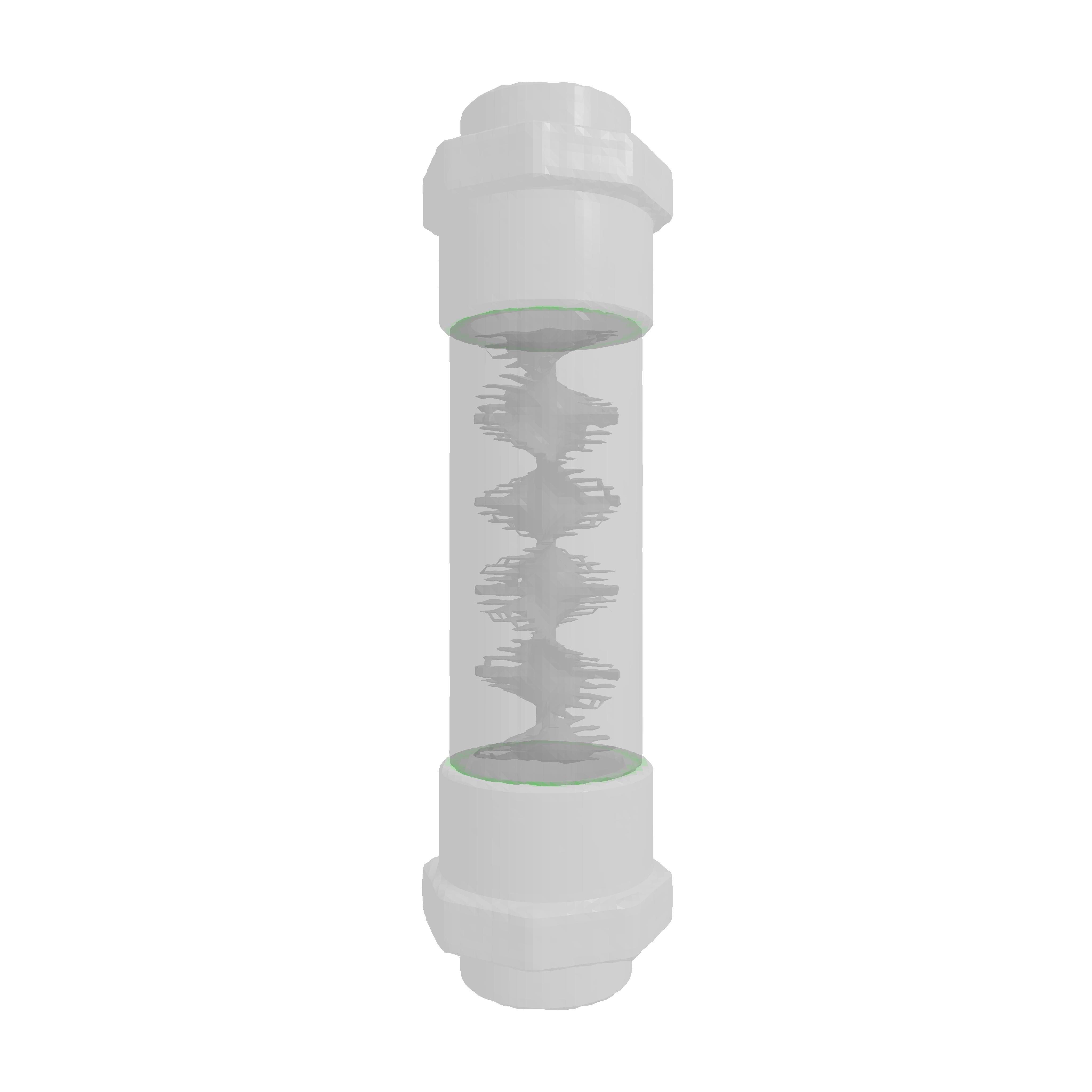} &
      \includegraphics[width=0.15\textwidth]{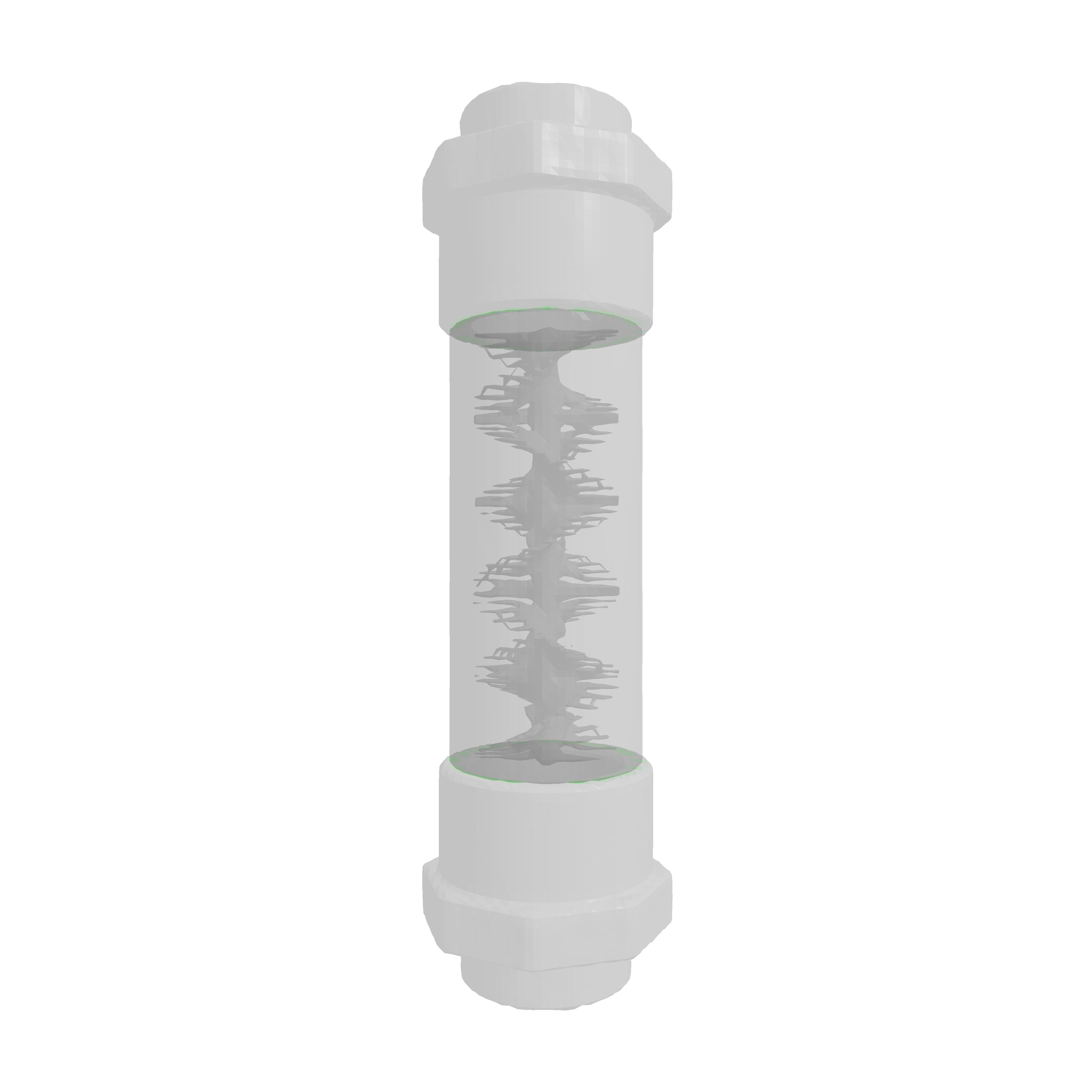} \\
      Ground Truth &
      Vanilla & QP (PP) & QP (Train) & Shift-All (PP) & Shift-All (Train) \\
    \end{tabular}%
  }
  \caption{PartSDF~\cite{Talabot25}: visualization of intersections. Intersecting surfaces are in red, while touching surfaces are in green.}
  \label{fig:mixers}
\end{figure}


\subsection{An alternative solution}
The solutions we presented are not the only possible ways to enforce the MDF constraint. A simple but more heuristic approach consists in always taking the minimum SDF value across all objects whenever the MDF constraint is violated. This can be expressed as a projection operator $\text{Min}(\vu)$, which takes as input the vector of SDF values $\vu \in \mathbb{R}^K$ for $K$ objects at a given point and outputs a projected vector that satisfies the MDF constraint. The projection operator can be defined as follows:
\begin{equation}
\text{Min}(\vu) = 
\begin{dcases}
\vu & \text{if } \mino (\vu) + \mint (\vu) \ge 0 \; ,\\
\mino(\vu) \cdot \mathbf{1}_{\argmino(\vu)} & \text{otherwise.}
\end{dcases}
\end{equation}

Where $\mathbf{1}_{\argmino(\vu)} \in \mathbb{R}^K$ is a vector with 1 at the index of the minimum entry of $\vu$ and -1 elsewhere. This makes the $Min(\vu)$ vector negative only at the index of the minimum entry of $\vu$, and positive (with an equal but opposite value) at all other entries, thus satisfying the MDF constraint. 

This approach works well in practice, but it does not supersede the QP and Shift-All approaches we presented in the main text, which are more principled and provide better results. We include a comparison of this approach with the other methods in \cref{tab:partsdf_chair100,tab:partsdf_mixer100}.

\subsection{Additional results on PartSDF}

\subsubsection{Inference-time optimization with \acro{}}
\fs{
Contrary to the other methods we evaluate, PartSDF~\cite{Talabot25} is not limited to applying our \acro{} module either during training or as a post-processing step for the field at meshing time. Instead, it can also be applied during the inference-time optimization process that PartSDF performs to refine the geometry of each part, freezing the weights and optimizing the latent code. In \cref{tab:partsdf_chair100,tab:partsdf_mixer100}, we present quantitative results for the same PartSDF~\cite{Talabot25} experiment as in \cref{tab:partsdf}, additionally comparing the application of \acro{} during the inference-time optimization ("Optim"). We observe that, while not disrupting the optimization process, applying \acro{} during the optimization does not provide advantages over applying it during training or as a simple post-processing step, and it is slightly outperformed by both in most occasions. This is reasonable, as the network weights have been trained without such layer, and the resulting latent space may not be optimal to allow for a good optimization with the \acro{} module.

\begin{table}[ht]
\centering
\resizebox{\textwidth}{!}{%
  \begin{tabular}{c|ccccc}
    & MCD $(\times 10^{-4})$ $\downarrow$ & NC $\uparrow$ & F1 (0.001) $\uparrow$ & IoU $\uparrow$ & IV $\downarrow$ \\
    \hline
    Vanilla PartSDF & $4.331 \pm 0.690$ & $95.119 \pm 0.015$ & $96.731 \pm 0.044$ & $85.630 \pm 0.101$ & $(1.876 \pm 0.062) \times 10^{-3}$ \\
    \hline
    \acro{} (Min) - PP & $2.017 \pm 0.165$ & $94.728 \pm 0.011$ & $97.933 \pm 0.025$ & $89.172 \pm 0.106$ & $(1.500 \pm 0.032) \times 10^{-7}$ \\
    \acro{} (Shift-All) - PP & $1.973 \pm 0.166$ & $94.824 \pm 0.008$ & $98.005 \pm 0.022$ & $89.173 \pm 0.104$ & $(5.194 \pm 1.040) \times 10^{-9}$ \\
    \acro{} (QP) - PP & $2.126 \pm 0.165$ & $94.773 \pm 0.014$ & $97.902 \pm 0.042$ & $89.151 \pm 0.107$ & $(6.157 \pm 1.394) \times 10^{-9}$ \\
    \hline
    \acro{} (Min) - Optim & $2.200 \pm 0.160$ & $94.687 \pm 0.016$ & $97.846 \pm 0.024$ & $89.013 \pm 0.103$ & $(1.652 \pm 0.064) \times 10^{-7}$ \\
    \acro{} (Shift-All) - Optim & $2.616 \pm 0.576$ & $94.879 \pm 0.039$ & $98.116 \pm 0.021$ & $89.664 \pm 0.048$ & $(3.194 \pm 0.566) \times 10^{-9}$ \\
    \acro{} (QP) - Optim & $2.195 \pm 0.428$ & $94.842 \pm 0.035$ & $98.076 \pm 0.011$ & $89.618 \pm 0.086$ & $(8.740 \pm 7.756) \times 10^{-9}$ \\
    \hline
    \acro{} (Min) - Train & $2.154 \pm 0.137$ & $94.723 \pm 0.009$ & $97.896 \pm 0.035$ & $89.063 \pm 0.155$ & $(1.467 \pm 0.093) \times 10^{-7}$ \\
    \acro{} (Shift-All) - Train & $2.120 \pm 0.117$ & $94.823 \pm 0.032$ & $98.039 \pm 0.017$ & $89.297 \pm 0.262$ & $(5.665 \pm 2.370) \times 10^{-9}$ \\
    \acro{} (QP) - Train & $2.189 \pm 0.364$ & $94.885 \pm 0.078$ & $98.047 \pm 0.031$ & $89.289 \pm 0.126$ & $(7.000 \pm 0.816) \times 10^{-9}$ \\
 \end{tabular}%
}
\caption{Shape reconstruction metrics for PartSDF on 100 chairs from the test set. Mean and standard deviation across 3 runs.}
\label{tab:partsdf_chair100}
\end{table}
}

\begin{table}[ht]
\centering
\resizebox{\textwidth}{!}{%
  \begin{tabular}{c|ccccc}
    & MCD $(\times 10^{-4})$ $\downarrow$ & NC $\uparrow$ & F1 (0.001) $\uparrow$ & IoU $\uparrow$ & IV $\downarrow$ \\
    \hline
    Vanilla PartSDF & $1.386 \pm 0.090$ & $88.013 \pm 0.074$ & $98.441 \pm 0.107$ & $76.667 \pm 0.168$ & $(4.317 \pm 0.221) \times 10^{-5}$ \\
    \hline
    \acro{} (Min) - PP & $1.358 \pm 0.105$ & $88.113 \pm 0.059$ & $98.441 \pm 0.107$ & $76.752 \pm 0.145$ & $(1.251 \pm 0.170) \times 10^{-7}$ \\
    \acro{} (Shift-All) - PP & $1.362 \pm 0.102$ & $88.000 \pm 0.135$ & $98.441 \pm 0.107$ & $75.774 \pm 0.887$ & $(1.020 \pm 0.349) \times 10^{-7}$ \\
    \acro{} (QP) - PP & $1.362 \pm 0.102$ & $88.000 \pm 0.135$ & $98.441 \pm 0.107$ & $75.769 \pm 0.892$ & $(1.018 \pm 0.349) \times 10^{-7}$ \\
    \hline
    \acro{} (Min) - Optim & $1.358 \pm 0.107$ & $88.108 \pm 0.060$ & $98.440 \pm 0.116$ & $76.747 \pm 0.143$ & $(9.715 \pm 2.166) \times 10^{-8}$ \\
    \acro{} (Shift-All) - Optim & $1.395 \pm 0.127$ & $88.066 \pm 0.069$ & $98.407 \pm 0.146$ & $76.524 \pm 0.159$ & $(1.694 \pm 1.634) \times 10^{-9}$ \\
    \acro{} (QP) - Optim & $1.397 \pm 0.122$ & $88.055 \pm 0.077$ & $98.391 \pm 0.130$ & $76.517 \pm 0.163$ & $(1.525 \pm 1.822) \times 10^{-9}$ \\
    \hline
    \acro{} (Min) - Train & $1.243 \pm 0.191$ & $88.190 \pm 0.193$ & $98.535 \pm 0.203$ & $76.749 \pm 0.357$ & $(1.200 \pm 0.604) \times 10^{-8}$ \\
    \acro{} (Shift-All) - Train & $1.234 \pm 0.137$ & $88.175 \pm 0.045$ & $98.593 \pm 0.153$ & $76.595 \pm 0.201$ & $(1.680 \pm 0.851) \times 10^{-8}$ \\
    \acro{} (QP) - Train & $1.247 \pm 0.032$ & $88.147 \pm 0.090$ & $98.595 \pm 0.026$ & $76.648 \pm 0.170$ & $(1.239 \pm 0.831) \times 10^{-8}$ \\
 \end{tabular}%
}
\caption{Shape reconstruction metrics for PartSDF on 100 mixers from the test set. Mean and standard deviation across 3 runs.}
\label{tab:partsdf_mixer100}
\end{table}

\subsubsection{Artifacts introduced by \acro}
\begin{figure*}[t]
  \centering
  \setlength{\tabcolsep}{1pt}
  \newcommand{\chairresimage}[1]{%
    \adjustbox{margin=2pt,valign=c}{\includegraphics[width=0.138\textwidth]{#1}}%
  }
  \begin{tabular}{@{}c@{\hspace{2pt}}ccc@{\hspace{4pt}}ccc@{}}
    & \small 128 & \small 256 & \small 512
    & \small 128 & \small 256 & \small 512 \\[4pt]
    \adjustbox{valign=c}{\shortstack[c]{\scriptsize Vanilla\\[-1pt]\scriptsize PartSDF}} &
    \chairresimage{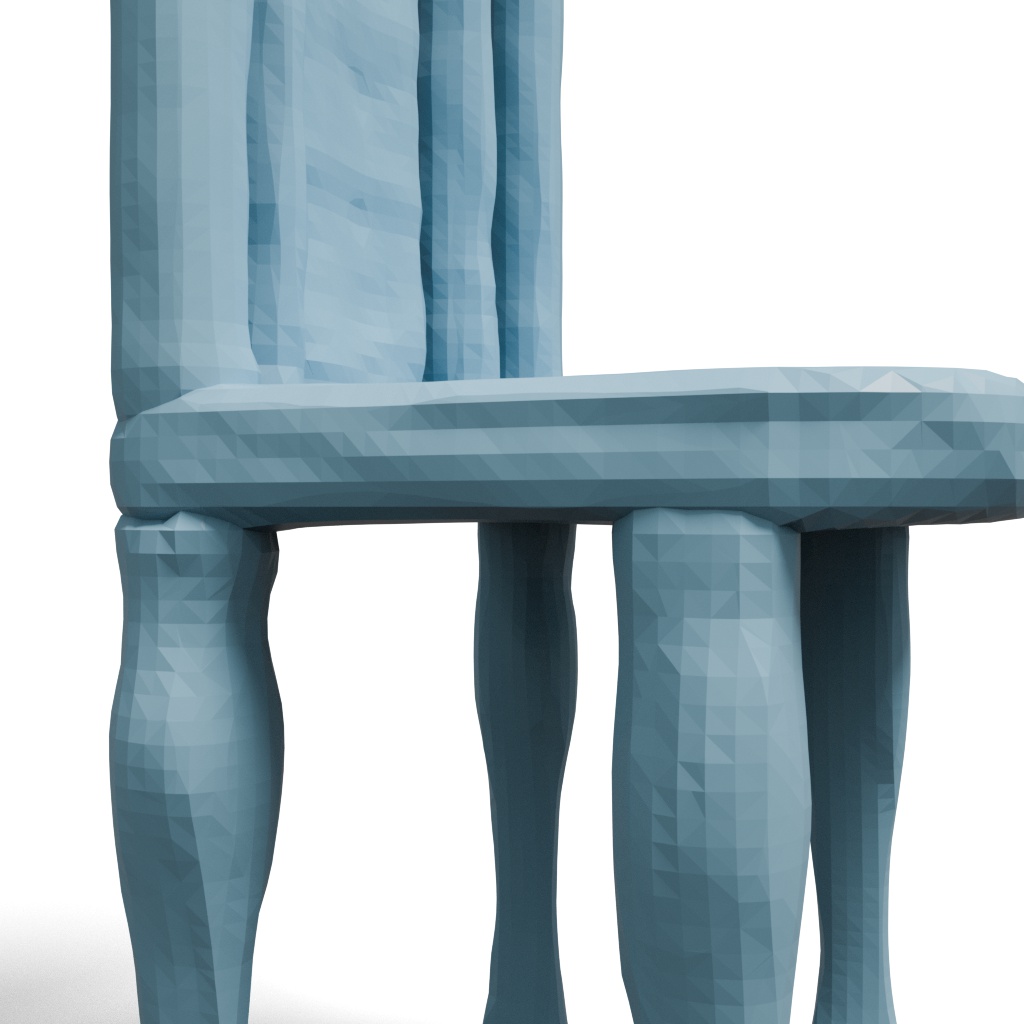} &
    \chairresimage{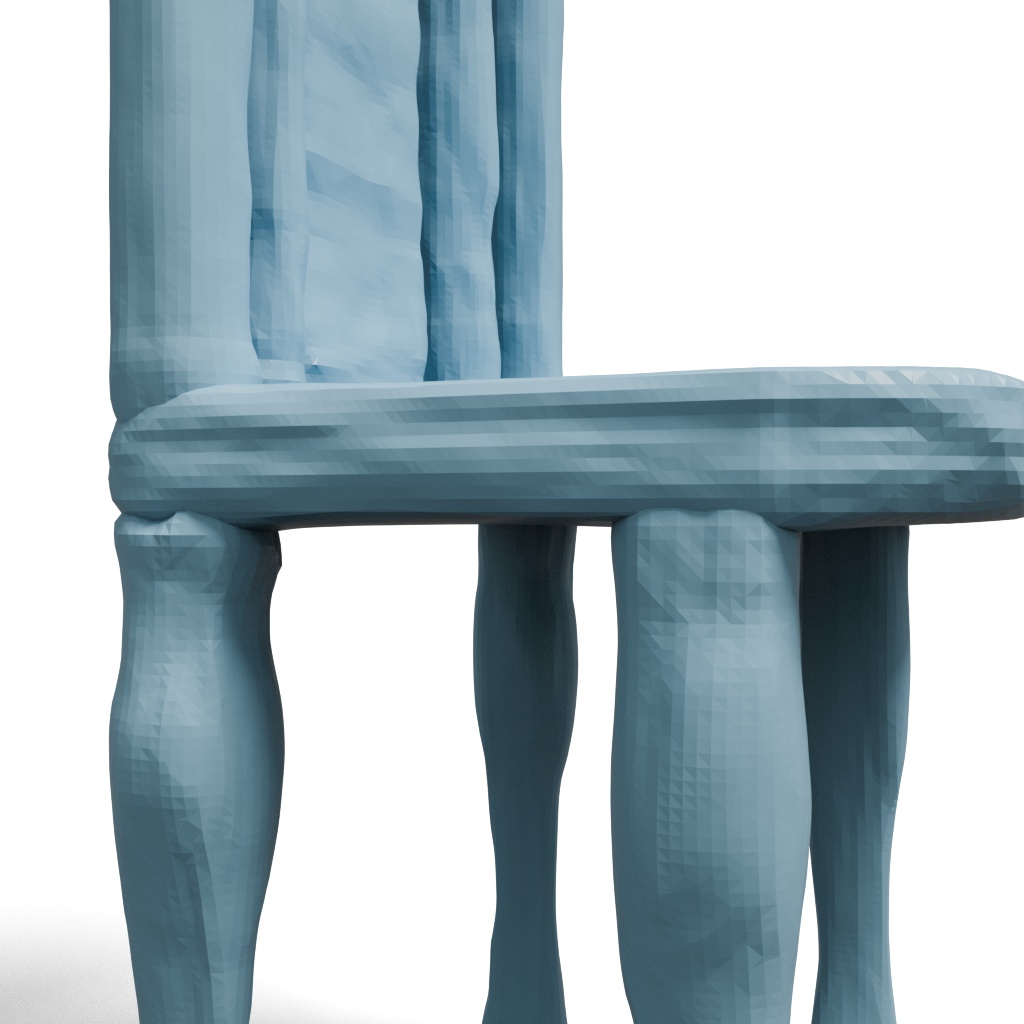} &
    \chairresimage{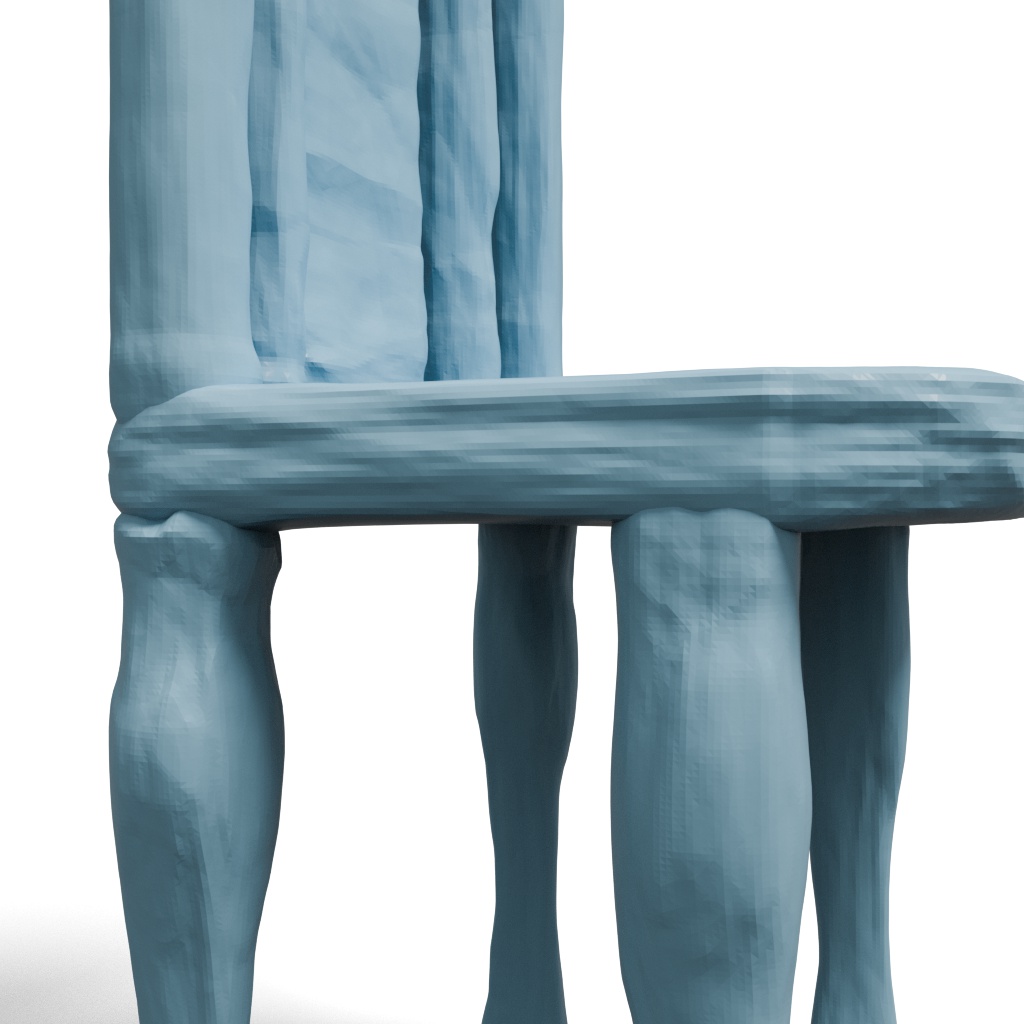} &
    \chairresimage{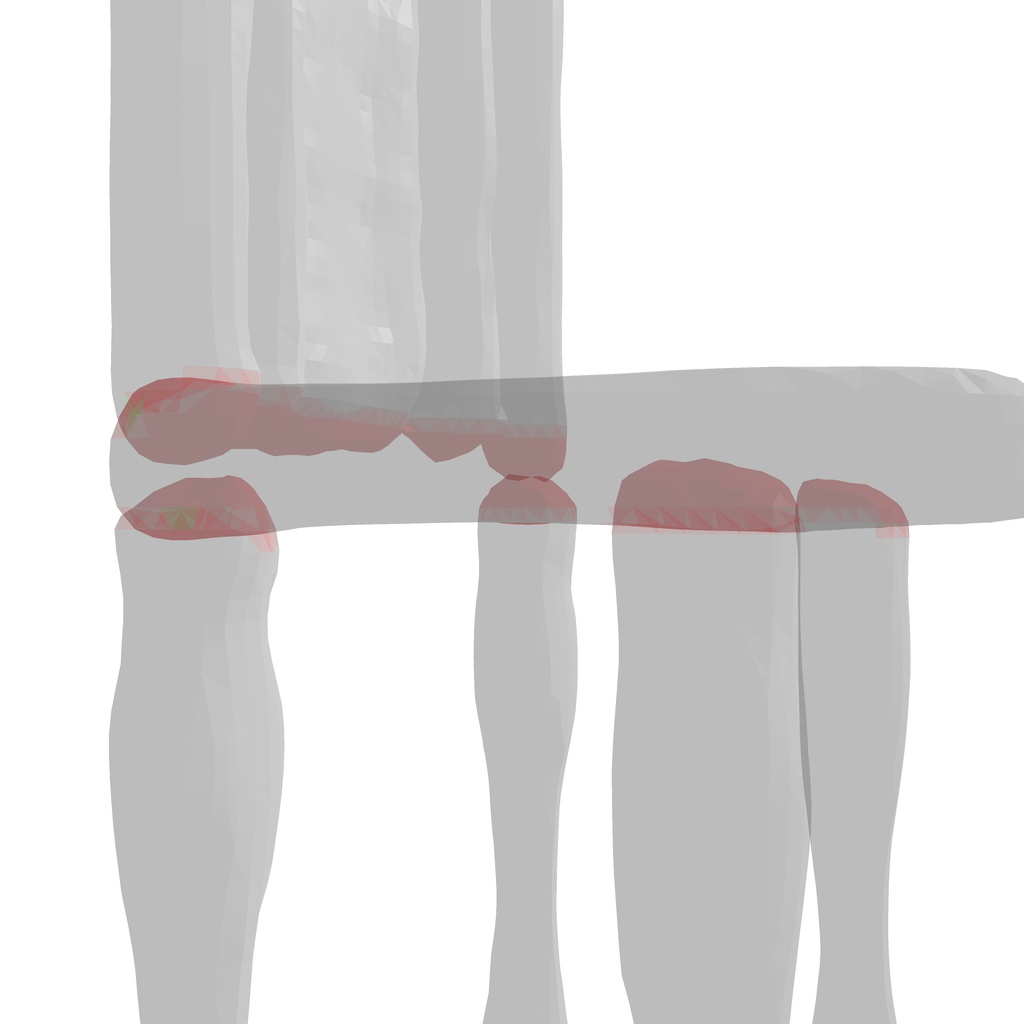} &
    \chairresimage{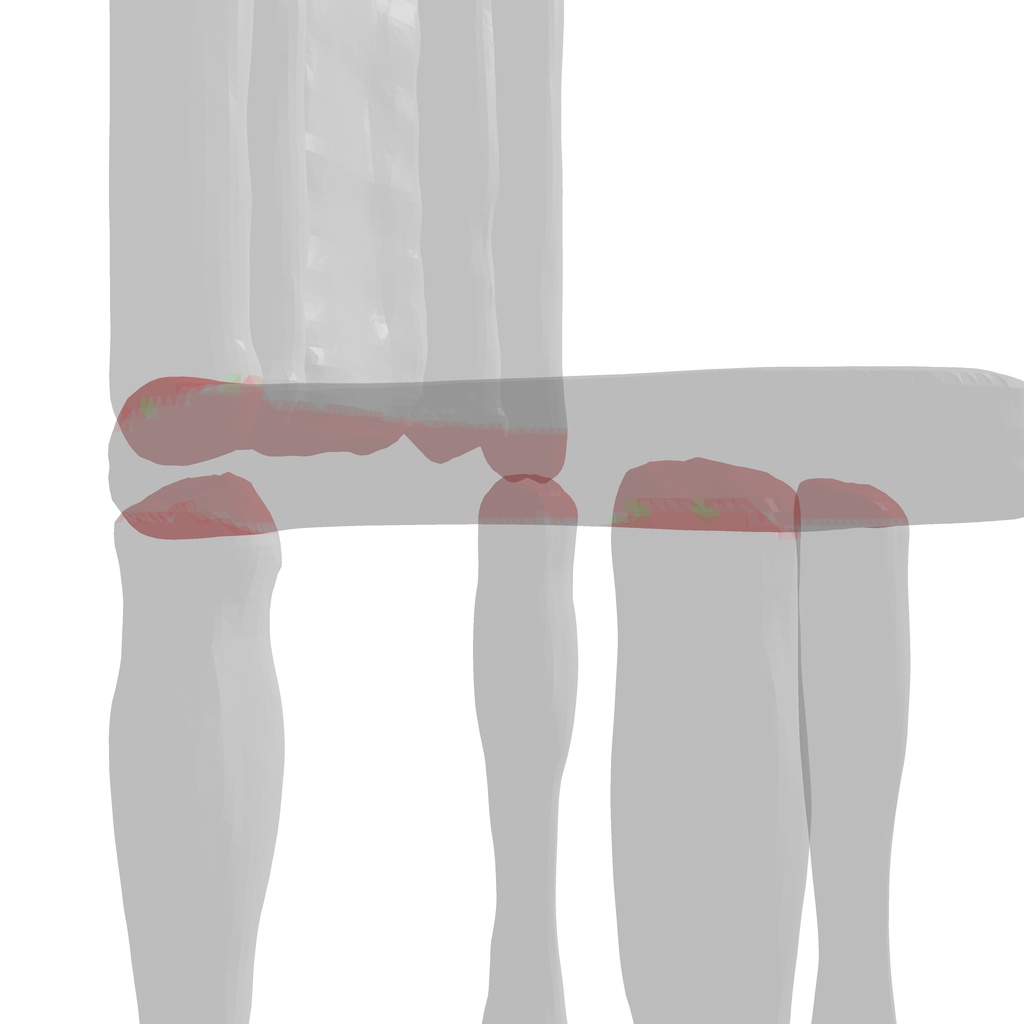} &
    \chairresimage{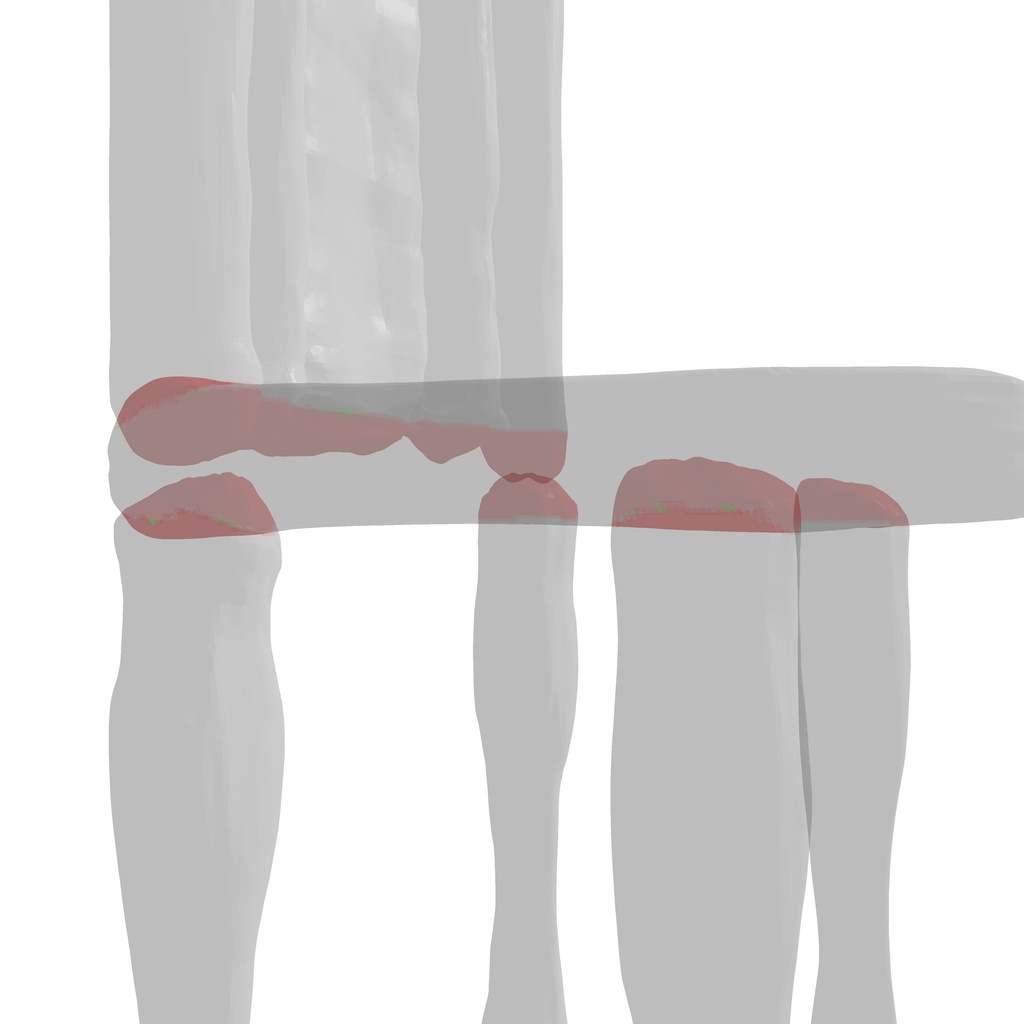} \\
    \adjustbox{valign=c}{\shortstack[c]{\scriptsize S2MDF\\[-1pt]\scriptsize (Shift-All)\\[-1pt]\scriptsize - Train}} &
    \chairresimage{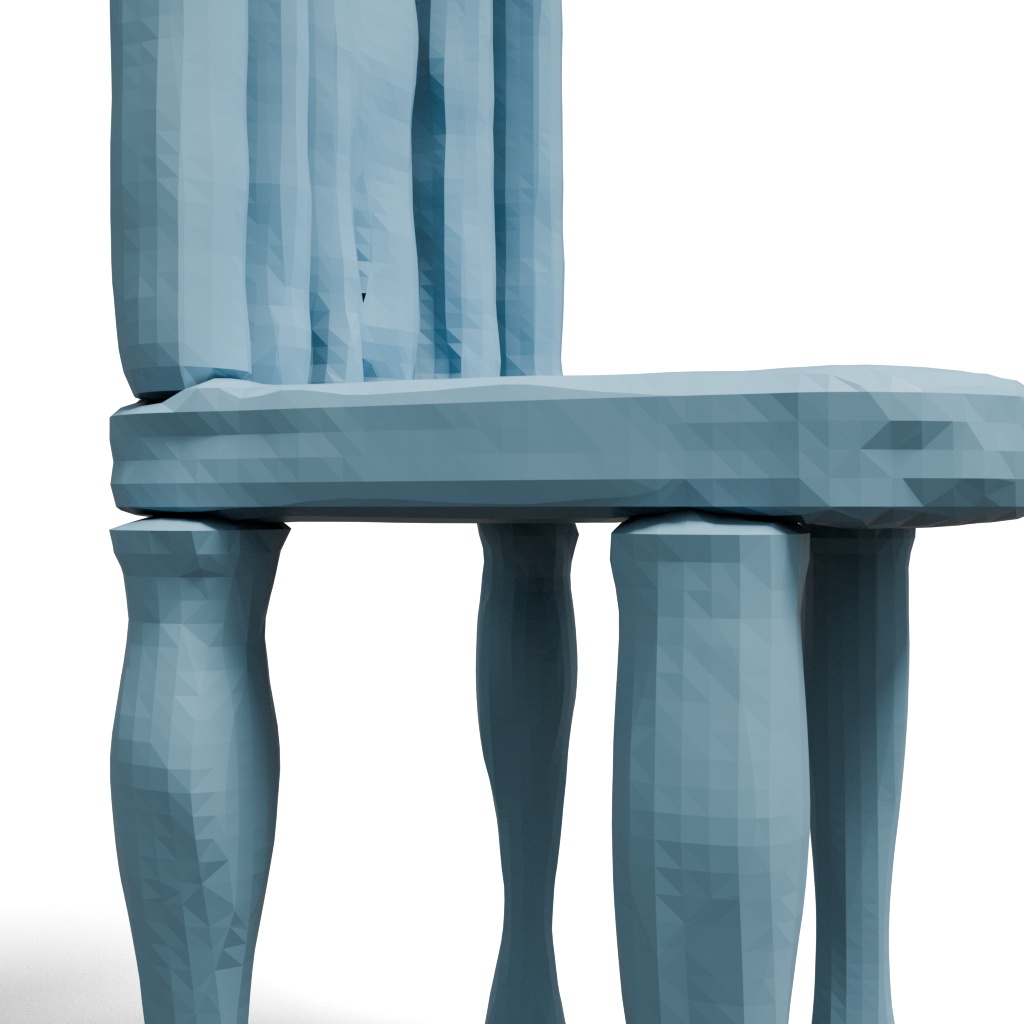} &
    \chairresimage{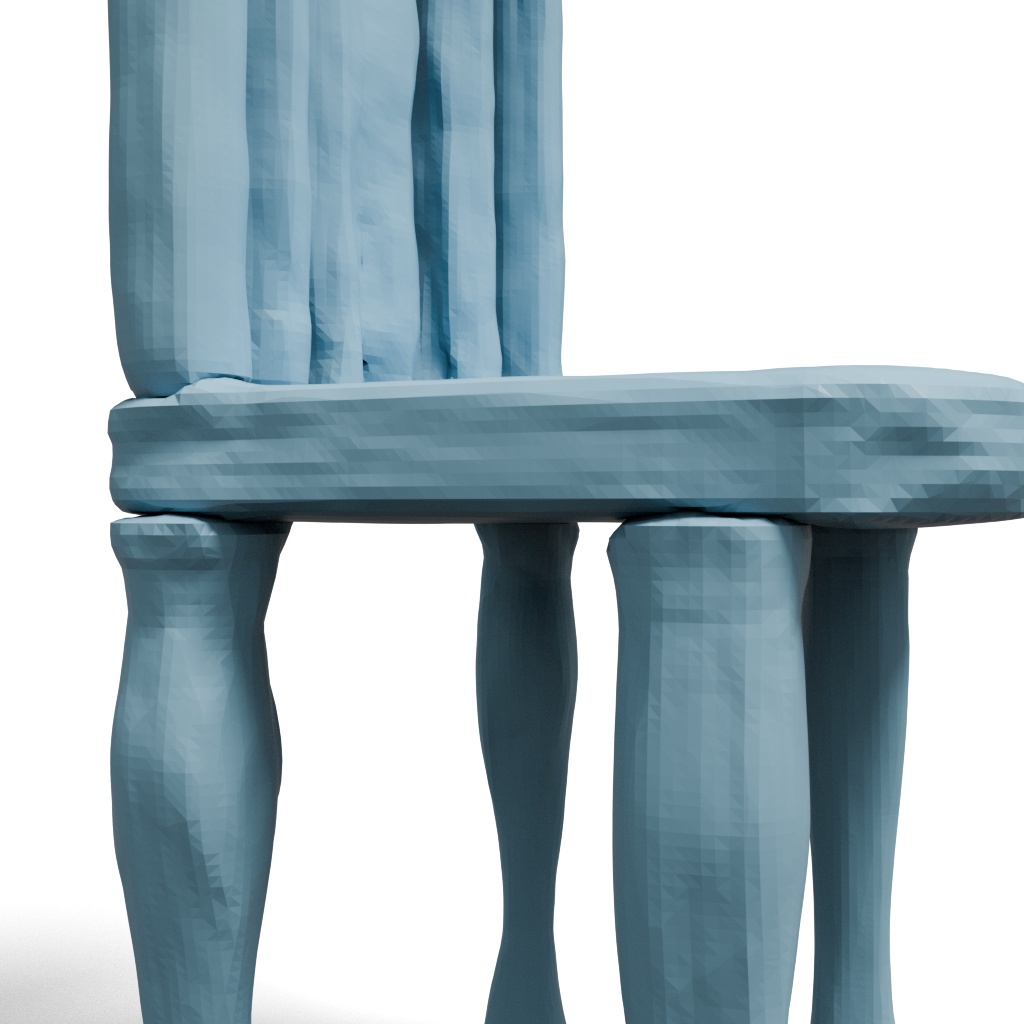} &
    \chairresimage{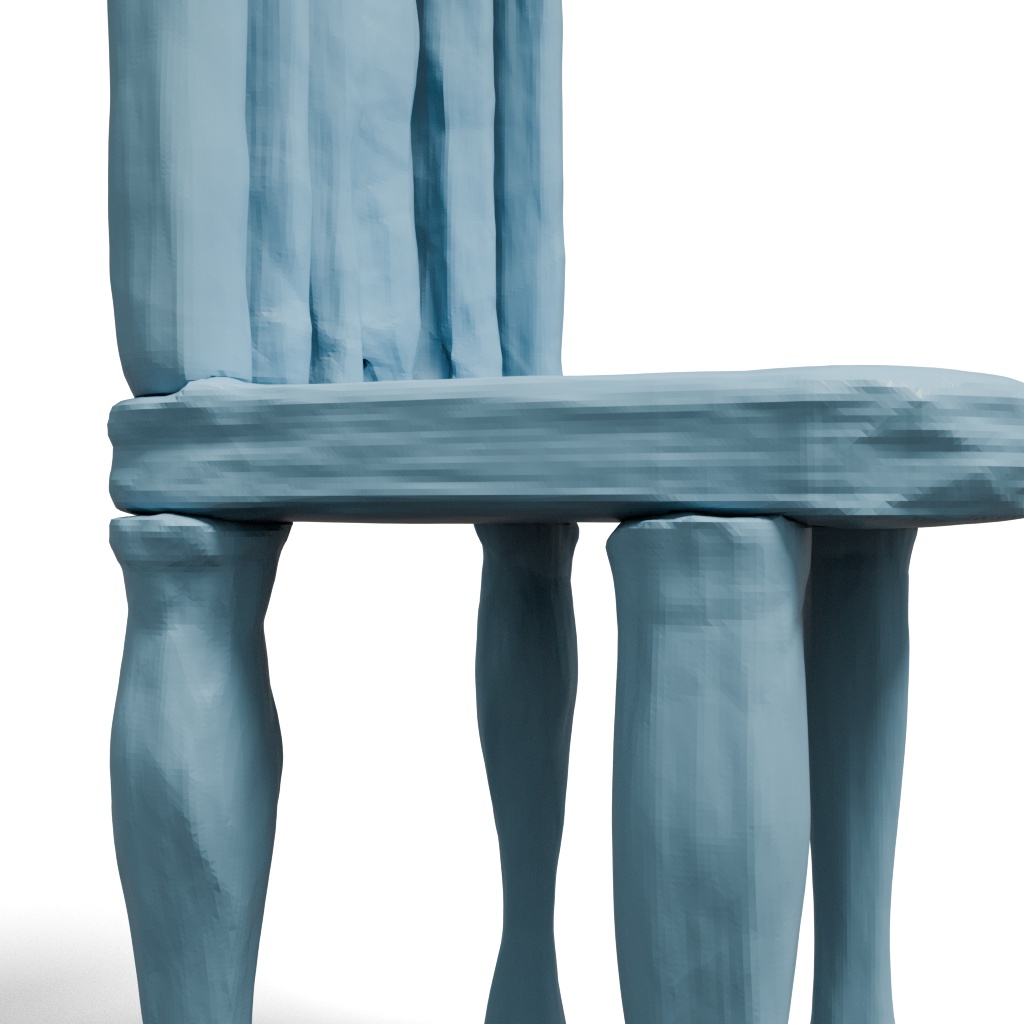} &
    \chairresimage{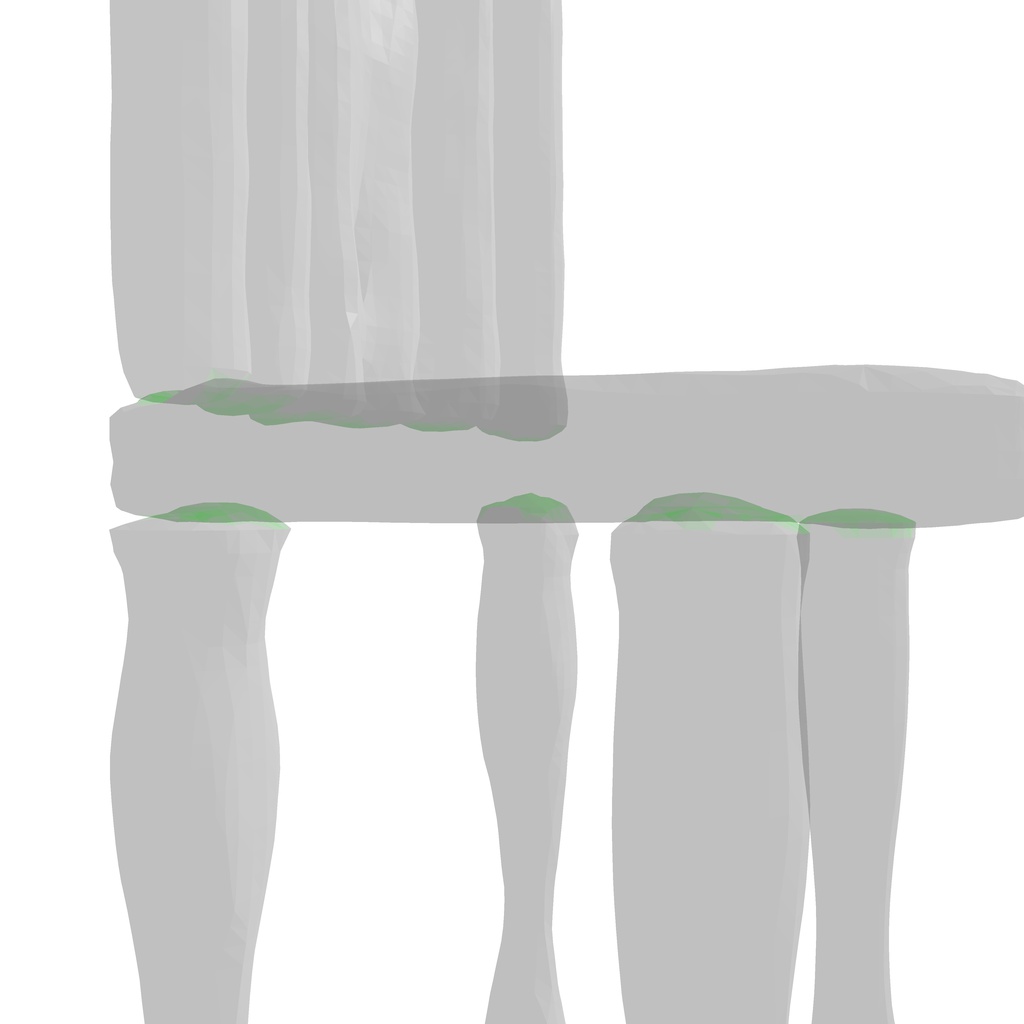} &
    \chairresimage{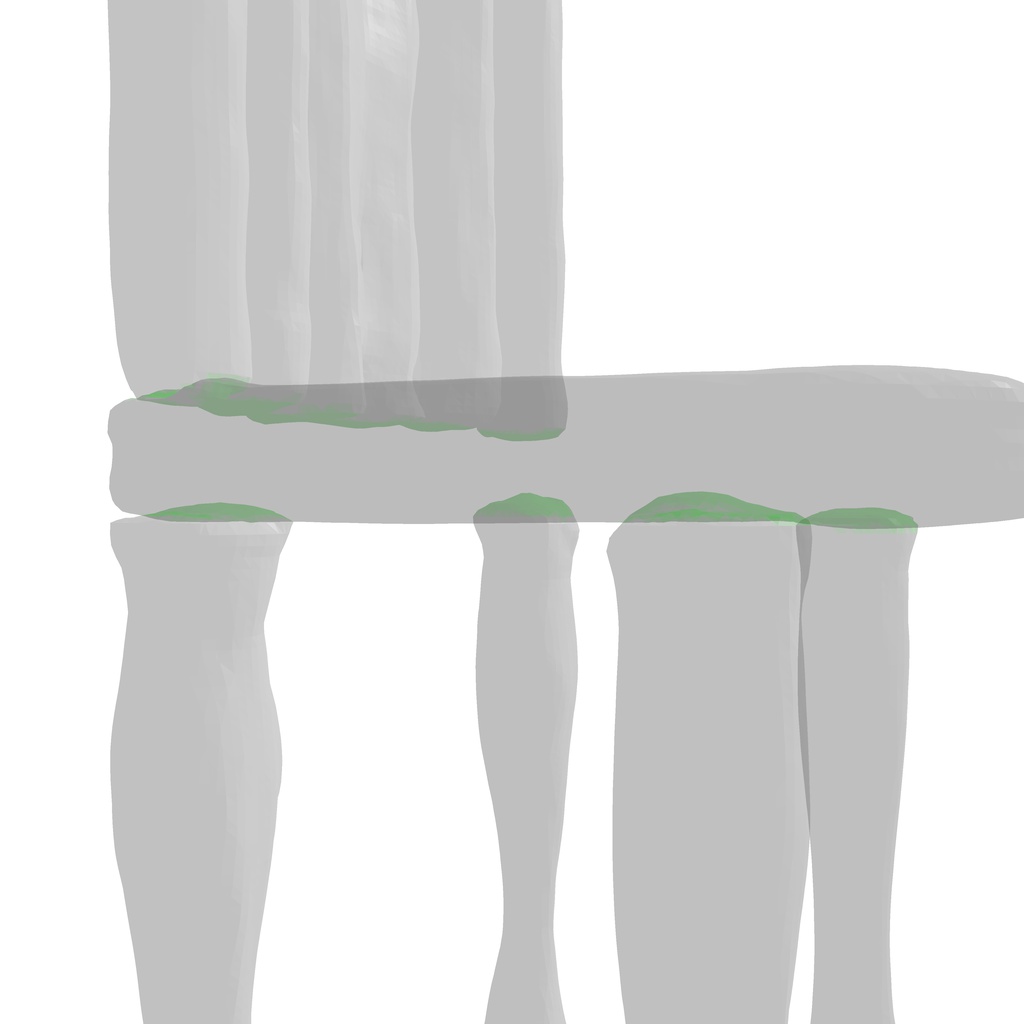} &
    \chairresimage{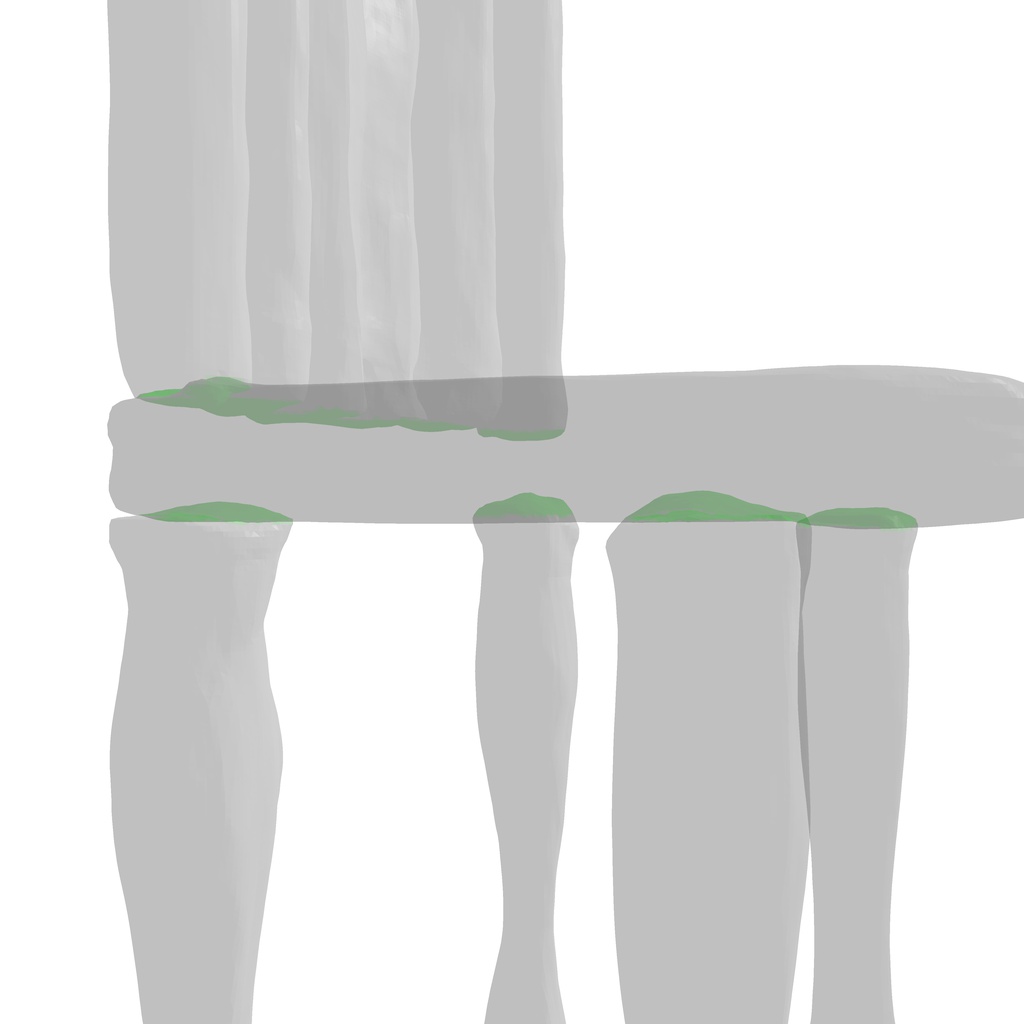} \\[2pt]
    & \multicolumn{3}{c}{\subcaptionbox{Renderings\label{fig:chairs_resolution_comparison_renders}}[0.45\textwidth]{\strut}}
    & \multicolumn{3}{c}{\subcaptionbox{Intersections\label{fig:chairs_resolution_comparison_intersections}}[0.45\textwidth]{\strut}}
  \end{tabular}
  \caption{PartSDF on PartNet chairs at increasing meshing resolutions. Intersecting surfaces are shown in red, while touching surfaces are shown in green.}
  \label{fig:chairs_resolution_comparison}
\end{figure*}

In a few situations, we observe that the application of \acro{} can introduce artifacts in the resulting meshes. This is quantitavely hinted at by the slightly worse NC scores in the PartSDF chairs experiments (\cref{tab:partsdf,tab:partsdf_chair100}). We noticed that, when \acro{} is applied, some of the junctions between parts are not as straight as in the vanilla PartSDF meshes. This effect is visible in the junctions between the chair legs and the seat in \cref{fig:chairs_resolution_comparison}, where we show the meshes of a chair at increasing meshing resolutions. The issue appears because the GT parts of the chairs are not watertight: they are missing the junction surfaces, which means that PartSDF lacks supervision in the most delicate regions. This is not an issue in vanilla PartSDF because the legs become elongated, heavily intersecting with the seat (\cref{fig:chairs_resolution_comparison_intersections}), but it can create small artifacts when \acro{} is applied. We did not observe this issue in the Mixers dataset, as well as in the other experiments in the main paper, where the GT parts are watertight and the junctions are well defined.


\subsubsection{Ablating the non-intersection loss}
\label{sec:nonintloss_partsdf}

\begin{table}[ht]
\centering
\resizebox{\textwidth}{!}{%
  \begin{tabular}{c|ccccc}
    & MCD $(\times 10^{-4})$ $\downarrow$ & NC $\uparrow$ & F1 (0.001) $\uparrow$ & IoU $\uparrow$ & IV $\downarrow$ \\
    \hline
    Vanilla PartSDF & $4.331 \pm 0.690$ & $95.119 \pm 0.015$ & $96.731 \pm 0.044$ & $85.630 \pm 0.101$ & $(1.876 \pm 0.062) \times 10^{-3}$ \\
    Vanilla PartSDF w/o int. loss & $40.891 \pm 28.217$ & $94.473 \pm 0.158$ & $92.493 \pm 0.928$ & $77.759 \pm 0.898$ & $(8.746 \pm 3.431) \times 10^{-3}$ \\
    \hline
    \acro{} (Shift-All) - PP & $1.973 \pm 0.166$ & $94.824 \pm 0.008$ & $98.005 \pm 0.022$ & $89.173 \pm 0.104$ & $(5.194 \pm 1.040) \times 10^{-9}$ \\
    \acro{} (Shift-All) - PP w/o int. loss & $3.735 \pm 0.459$ & $94.235 \pm 0.069$ & $96.579 \pm 0.187$ & $86.752 \pm 0.151$ & $(2.868 \pm 0.407) \times 10^{-8}$ \\
    \hline
    \acro{} (Shift-All) - Optim & $2.616 \pm 0.576$ & $94.879 \pm 0.039$ & $98.116 \pm 0.021$ & $89.664 \pm 0.048$ & $(3.194 \pm 0.566) \times 10^{-9}$ \\
    \acro{} (Shift-All) - Optim w/o int. loss & $3.683 \pm 2.270$ & $94.813 \pm 0.028$ & $98.023 \pm 0.084$ & $89.381 \pm 0.156$ & $(3.304 \pm 0.208) \times 10^{-9}$ \\
    \hline
    \acro{} (Shift-All) - Train & $2.120 \pm 0.117$ & $94.823 \pm 0.032$ & $98.039 \pm 0.017$ & $89.297 \pm 0.262$ & $(5.665 \pm 2.370) \times 10^{-9}$ \\
    \acro{} (Shift-All) - Train w/o int. loss & $13.040 \pm 1.498$ & $94.737 \pm 0.100$ & $97.939 \pm 0.046$ & $88.957 \pm 0.156$ & $(8.697 \pm 0.612) \times 10^{-9}$ \\
 \end{tabular}%
}
\caption{Non-intersection loss ablation: shape reconstruction metrics for PartSDF on 100 chairs from the test set. Mean and standard deviation across 3 runs.}
\label{tab:partsdf_chair100_nonintloss}
\end{table}
PartSDF includes a non-intersection loss term in its training objective, which is designed to penalize intersections between parts. In \cref{tab:partsdf_chair100_nonintloss}, we present quantitative results for the same PartSDF~\cite{Talabot25} experiment as in \cref{tab:partsdf}, on PartNet~\cite{Mo19} chairs, additionally comparing the application of \acro{} with and without the non-intersection loss term. We observe that, while the non-intersection loss term does improve the results of PartSDF, it is not sufficient to prevent intersections between parts, and applying \acro{} provides a significant improvement over both cases. We also observe that, due to the non-watertight GT parts of the chairs dataset, the MCD scores become significantly worse when the non-intersection loss is removed. Our method is able to remove intersections nonetheless, but it can benefit from the non-intersection loss to improve the overall quality of the reconstruction.


\newtcolorbox{claimbox}{
    colback=white,      
    colframe=black,     
    boxrule=0.5pt,      
    rounded corners,
    arc=2pt,            
    boxsep=5pt,
    left=10pt,
    right=10pt,
    top=5pt,
    bottom=5pt
}

\section{Proofs}

\subsection{\cref{thm:constraint_equivalence} (Constraint Equivalence for SDFs)}
\label{app:constraint-equivalence}
\nt{
\begin{claimbox}
	A vector-valued $SDF = [SDF_1, \ldots , SDF_K] : \mathbb{R}^3 \to \mathbb{R}^K$ satisfies constraint $\mint(\bu) \ge 0$ at every point $\mathbf{p} \in \mathbb{R}^3$, with $\bu = SDF(\bp)$, if and only if it satisfies $\mino(\bu) + \mint(\bu) \ge 0$ at every point $\mathbf{p} \in \mathbb{R}^3$.
\end{claimbox}

\begin{proof}
	Through the second constraint and the definition of $\min^{(n)}$, we have both $\mint(\bu) \ge -\mino(\bu)$ and $\mint(\bu) \ge \mino(\bu)$  and, therefore, $\mint(\bu) \ge 0$ at every point $\bp\in\real^3$.
	
	For the inverse implication, we consider an arbitrary point $\bp\in\real^3$. If $\bp$ lies outside all objects, the second condition is trivially satisfied as all SDFs are positive. If it is inside an object, we can use the Lipschitz continuity of SDFs to show that the second constraint must also be satisfied.
	
	For this, we first define some notation for point $\bp$ with SDF vector $\bu$. Let:
	\begin{itemize}
		\item $a$ and $b$ be the indices of the objects with the smallest and second smallest SDF, respectively.
		\item $\by_{a}$ and $\by_{b}$ be points on the surfaces of these objects that are closest to $\bp$.
		\item $s_{a} : \real^3\to\real$ and $s_{b} : \real^3\to\real$ be the signed distance functions of these objects.
	\end{itemize}
	
	Based on the definition of SDFs of \Cref{eq:sdf} and using the first constraint, we can write
	\begin{align*}
		\mino(\bu) &= s_{a}\left(\bp\right) = - \left\lVert \bp - \by_{a} \right\rVert \; \mathrm{and} \\
		\mint(\bu) &= s_{b}\left(\bp\right) = \left\lVert \bp - \by_{b} \right\rVert \; .
	\end{align*}
	
	Using that SDFs are $1$-Lipschitz continuous,\NT{cite?} we have
	\begin{align*}
		&s_a(\by_b) - s_a(\bp) \le \left|s_a(\by_b) - s_a(\bp)\right| \le \left\lVert \by_b - \bp \right\rVert \\
		\Leftrightarrow \;\; &s_a(\by_b) \le s_a(\bp) + \left\lVert \by_b - \bp \right\rVert\\
		&\phantom{s_a(\by_b)} = s_a(\bp)+ s_b(\bp)\\
		&\phantom{s_a(\by_b)} = \mino(\bu) + \mint(\bu)\\
	\end{align*}
	Suppose now that the second constraint is not satisfied at $\bp$. Then
	$$
	s_a(\by_b) \le \mino(\bu) + \mint(\bu) < 0 \; ,
	$$
	which means that $\by_p$, a point on the surface of object $b$, is strictly inside $a$. Thus, in its vicinity, there must be a point that is both within object $a$ and $b$, which is impossible as it violates the first constraint.
	Therefore, there cannot be any point $\bp$ such that the second constraint is not satisfied.
\end{proof}
}


\subsection{Sum of two mins is equivalent to pairwise sums}
\label{app:pairwise-proof}

\begin{claimbox}
Consider an arbitrary vector $\vd \in \mathbb{R}^K$ and its elements $d_i,\ 1 \le i \le K$. The inequality $\mint(\vd) + \mino(\vd) \ge 0$ is satisfied if and only if $d_i + d_j \ge 0,\ 1 \le i < j \le K$.  
\end{claimbox}

\begin{proof}
Because the $\mint$, $\mino$ and the set of all pairwise sums are invariant under permutation of elements, we can assume without loss of generality that the elements of $\vd$ are sorted in ascending order $\vd = (d_1, d_2, \dots, d_K)$, $d_1 \le d_2 \le \dots \le d_K$. 

\textbf{Forward Proof}. We are given that $\mino(\vd) + \mint(\vd) \ge 0$, i.e. $d_1 + d_2 \ge 0$. 
\begin{itemize}
    \item Given $i \ge 1$, it follows that $d_i \ge d_1$. 
    \item As $j > i \ge 1$, $j \ge 2$. It follows that $d_j \ge d_2$. 
\end{itemize}
We can combine the two inequalities to get $d_i + d_j \ge d_1 + d_2$. This completes the forward proof. 

\textbf{Backward Proof}. We are given that $d_i + d_j \ge 0,\ 1 \le i < j \ge K$. We can set $i = 1$ and $j = 2$, meaning $d_1 + d_2 \ge 0$. Because $\mino(\vd) = d_1$ and $\mint(\vd) = d_2$, it follows that $\mint(\vd) + \mino(\vd) \ge 0$. This completes the backward proof.
\end{proof}

\subsection{\cref{th:linearity}: Linear Interpolation under MDF constraint}
\label{app:linterp-proof}

\begin{claimbox}
Given two points $\mathbf{p_1}, \mathbf{p_2} \in \mathbb{R}^3$ and an arbitrary vector-valued function $f : \mathbb{R}^3 \to \mathbb{R}^K$ evaluated at those points, $\vd_1 = f(\mathbf{p_1})$ and $\vd_2 = f(\mathbf{p_2})$ that satisfy constraint \ref{eq:mdf_constraint}, i.e. $\mino(\vd) + \mint(\vd) \ge 0$, then the constraint is also satisfied for every point along the line segment connecting $\mathbf{p_1}$ and $\mathbf{p_2}$, where intermediate vectors are obtained via linear interpolation as $\vd_{\alpha} = \alpha \vd_1 + (1 - \alpha) \vd_2$ for $\alpha \in [0, 1]$.
\end{claimbox}

\begin{proof}
For notational convenience, let $S_2(\vd) = \mino(\vd) + \mint(\vd)$, allowing us to state the constraint as $S_2(\vd) \ge 0$. Example 3.6, p. 80 in \cite{Boyd04} proves that the function $L_r(\vd)$, representing the sum of the $r$ largest components of a vector, is convex. The sum of the $r$ smallest components can be expressed as $S_r(\vd) = -L_r(-\vd)$. Since $L_r$ is convex, $L_r(-\vd)$ is convex, and its negation $-L_r(-\vd)$ is concave. Because $S_2$ is a special case of $S_r$, it is also concave. Using the definition of concavity for $\alpha \in [0, 1]$:
\begin{equation}
S_2(\vd_\alpha) = S_2(\alpha \vd_1 + (1 - \alpha) \vd_2) \ge \alpha S_2(\vd_1) + (1 - \alpha) S_2(\vd_2)
\end{equation}
Since $S_2(\vd_1) \ge 0$ and $S_2(\vd_2) \ge 0$, and both $\alpha$ and $(1 - \alpha)$ are non-negative, it follows that $\alpha S_2(\vd_1) + (1 - \alpha) S_2(\vd_2) \ge 0$. Thus, $S_2(\vd_\alpha) \ge 0$, guaranteeing that $\vd_\alpha$ always satisfies the MDF constraint \cref{eq:mdf_constraint}.
\end{proof}

\subsection{Constraint \ref{eq:og_constraint} does not allow linear MDF interpolation}
\label{app:og_constraint_weaker}

\begin{claimbox}
Given two points $\mathbf{p_1}, \mathbf{p_2} \in \mathbb{R}^3$ and an arbitrary vector-valued function $f : \mathbb{R}^3 \to \mathbb{R}^K$ evaluated at those points, $\vd_1 = f(\mathbf{p_1})$ and $\vd_2 = f(\mathbf{p_2})$ that satisfy constraint \ref{eq:og_constraint}, i.e. $\mint(\vd) \ge 0$, then the constraint is not guaranteed to be satisfied for every point along the line segment connecting $\mathbf{p_1}$ and $\mathbf{p_2}$, where intermediate vectors are obtained via linear interpolation as $\vd_{\alpha} = \alpha \vd_1 + (1 - \alpha) \vd_2$ for $\alpha \in [0, 1]$.
\end{claimbox}

\begin{proof}
\begin{figure}[htbp]
    \centering
    \begin{tikzpicture}[>=stealth, font=\Large, scale=0.68, transform shape]
    \path[use as bounding box] (-3, -3) rectangle (7.05, 4);

    \draw[line width=0.3pt, cyan!60!black, fill=cyan!5!white] (0,0) circle (3);


    \coordinate (p0) at (0, 0);
    \node[circle, draw=gray, line width=1pt, fill=white, inner sep=3pt, label=below:{$\vp_1$}] at (p0) {};
    \node at (0, -1.0) {$\vd_1 = [-3, 2]$};
    \coordinate (p_alpha) at (1.61, 1.61);
    \coordinate (p2) at ({3/sqrt(2)}, {3/sqrt(2)});

    
    \draw[line width=1.2pt, dashed, red!80!black] 
        (-1, 4) 
        to[out=45, in=160]  (0.5, 3)       
        to[out=-20, in=135] (1.41, 1.41)   
        to[out=-45, in=135] (4, 1);        
    \node[text=red!80!black] at (4.6, 0.75) {implied};
    \node[text=red!80!black] at (4.6, 0.4) {surface};
    \node[circle, draw=gray, line width=1pt, fill=white, inner sep=3pt, label={[yshift=-14pt]above right:{$\vp_\alpha$}}] at (p_alpha) {};
    \node at (5.01, 2.61) {$\vd_2 = [0, -1]$};
    \node at (4.71, 1.81) {$\vd_{\alpha} = [-0.9, -0.1]$};

    
    \filldraw[black] (0, 0) circle (1.5pt) ;
    \filldraw[black] (p_alpha) circle (1.5pt) ;
    \draw[black] (0,0) -- node[above, sloped, text=black, font=\small] {$d=2.1$} (p_alpha);

    \node[circle, draw=gray, line width=1pt, fill=white, inner sep=3pt, label=above right:{$\vp_2$}] at (p2) {};

    \filldraw[black] (p2) circle (1.5pt) ;
    \draw[black] (p_alpha) -- (p2);

    \coordinate (v0) at (6,1.5);
    \coordinate (vprime_circle) at (7.5,1.5);


    \end{tikzpicture}
    \caption{When dealing with general functions (e.g. imperfect SDFs), the constraint \ref{eq:og_constraint} does not guarantee that a linear interpolation of two points $\vp_1$ and $\vp_2$ that satisfy the constraint will also satisfy the constraint. In fact, there exists a point $\vp_\alpha$ on the line segment between $\vp_1$ and $\vp_2$ that violates the constraint.}
    \label{fig:supp_min2_counterexample}
\end{figure}

We prove it by counterexample. Consider the two points $\vp_1$ and $\vp_2$ in \cref{fig:supp_min2_counterexample} with $\vd_1 = [-3, 2]$ and $\vd_2 = [0, -1]$. Both of these vectors satisfy the constraint $\mint(\vd) \ge 0$. However, the linearly interpolated point $\vp_\alpha$ between them has $\vd_\alpha = [-0.9, -0.1]$, which violates the constraint $\mint(\vd_\alpha) \ge 0$. This counterexample demonstrates that the constraint \ref{eq:og_constraint} does not guarantee that a linear interpolation of two points satisfying the constraint will also satisfy the constraint when dealing with general functions (e.g. imperfect SDFs).
\end{proof}

\subsection{Line Crossing}
\label{app:crossing-proof}

\begin{claimbox}
    Given two points $\mathbf{p_1}, \mathbf{p_2} \in \mathbb{R}^3$ and an arbitrary vector-valued function $f : \mathbb{R}^3 \to \mathbb{R}^K$ evaluated at those points, $\vd_1 = f(\mathbf{p_1})$ and $\vd_2 = f(\mathbf{p_2})$, if they both satisfy the MDF constraint (\cref{eq:mdf_constraint}), i.e. $\mino (\vd) + \mint (\vd) \ge 0$, and an edge between them is linearly interpolated by a meshing algorithm for surface reconstruction, we claim that there are at most two object surfaces between these points and that these surfaces do not cross.
\end{claimbox}

\begin{proof}
We already know from \cref{thm:constraint_equivalence}\DM{Ref min2 proof? Also needs a small figure.} that $\mino (\vd) + \mint (\vd) \ge 0$ implies $\mint(\vd) \ge 0$, which implies a vector $\vd$ satisfying this constraint can have at most one negative one value. 

\fs{Let us assume for now that neither of the two vectors contain zero values.} Let us perform a case analysis on the possibilities of the signs of the values of the two vectors:
\begin{itemize}
    \item Neither $\vd_1$ nor $\vd_2$ contain negative values. In this case, there are no object surfaces between them. 
    \item One of them has a negative value. In this case, there is only one object surface, and we are outside every other object. 
    \item Both of them have a negative value, and these negative values are in the same index $i$. In this case, both points are inside the same object and there will be no surface between them. 
    \item Both of them have a negative value, and these are in different indices, $i$ and $j$. Then, there will be two zero-crossings and two object surfaces. 
\end{itemize}

The first three cases have no risk of having two surfaces intersect, so our interest is in the final case. Let $\vd_1 = \vp$ and $\vd_2 = \vq$ for notational convenience, and $i$, $j$ be the minimum indices in $\vp$ and $\vq$, respectively. Then, $p_i < 0$ and $q_j < 0$. We know from Appendix~\ref{app:pairwise-proof} that satisfaction of the MDF constraint implies all pairwise sums in a vector are positive. Thus, we have that:
\begin{align}
    p_i < 0,\ p_i + p_j \ge 0 &\implies p_j > 0 \label{eq:supplc-i1} \\
    q_j < 0,\ q_i + q_j \ge 0 &\implies q_i > 0 \label{eq:supplc-i2}
\end{align}
Surfaces along the line segment will appear at zero-crossings. If we parameterize points along this segment with $\alpha \in [0, 1]$, assuming $\alpha = 0$ at the point where $\vp$ is evaluated, and $\alpha = 1$ for $\vq$, we can find the zero-crossing for both object $i$ and $j$:
\begin{align}
    \alpha_{i} &= \frac{-p_i}{q_i - p_i} \\
    \alpha_{j} &= \frac{-p_j}{q_j - p_j}
\end{align}
We will now derive the relationship between $\alpha_i$ and $\alpha_j$ based on Eqs. \cref{eq:supplc-i1} and \cref{eq:supplc-i2}. 
Given $p_j > 0$, we divide $p_i + p_j \ge 0$ to obtain$\dfrac{p_i}{p_j} + 1 \ge 0$. Rearranging this, and with the knowledge that $p_i < 0$, we obtain the bounds:
\begin{equation}
    1 \ge -\frac{p_i}{p_j} > 0
\end{equation}
With a similar approach, dividing by $q_i + q_j \ge 0$ by $q_i > 0$, we can also obtain:
\begin{equation}
    1 \ge -\frac{q_j}{q_i} > 0
\end{equation}
Multiplying both inequalities:
\begin{equation}
    1 \ge \frac{p_i q_j}{p_j q_i} > 0
\end{equation}
Since $p_j > 0$ and $q_i > 0$, $p_j q_i > 0$ and we can multiply both sides with it, dropping the lower bound:
\begin{equation}
    p_j q_i \ge p_i q_j
\end{equation}
Now, subtract $p_i p_j$ from both sides and factor:
\begin{equation}
    p_j (q_i - p_i) \ge p_i (q_j - p_j) 
\end{equation}
We know $p_i < 0$ and $q_i > 0$, so $q_i - p_i > 0$, we divide both sides:
\begin{equation}
    p_j \ge (q_j - p_j) \frac{p_i}{q_i - p_i} 
\end{equation}
Similarly, $p_j > 0$ and $q_j < 0$, so $q_j - p_j < 0$, divide both sides and flip the inequality:
\begin{equation}
    \frac{p_j}{q_j - p_j} \le \frac{p_i}{q_i - p_i}
\end{equation}
Multiply both sides by -1:
\begin{equation}
    -\frac{p_j}{q_j - p_j} \ge -\frac{p_i}{q_i - p_i}
\end{equation}
And we finally obtain $\alpha_i$ and $\alpha_j$: 
\begin{equation}
    \alpha_j \ge \alpha_i \implies \alpha_i \le \alpha_j
\end{equation}
This proves that the surface of the object $j$ will always be closer to the point where $\vq$ is evaluated than object $i$. The surfaces can at most touch each other, or will be disjoint. \fs{This completes our proof in the case where neither of the vectors contain zero values. If they do contain zero values, the following cases can happen:
\begin{itemize}
    \item One of the vectors $\vd_1, \vd_2$ has one or more zero values, let us assume $\vd_1$, and the other vector does not. A surface point will be placed exactly at the point where $\vd_1$ is evaluated. Due to \cref{eq:mdf_constraint}, the zero value in $\vd_1$ implies the absense of negative values in $\vd_1$, which in turn implies that its point is on the surface of one or more objects but outside every other object.
    \item Both vectors have one or more zero values in the same indices. This implies that one or more surfaces lie exactly on the line segment, which does not cause a crossing.
    \item Both vectors have one or more zero values, and these zero values are potentially in different indices. This implies the presence of multiple surface points at the points where the vectors are evaluated, however neither of the vectors can have negative values, due to \cref{eq:mdf_constraint}, which implies that both points are on the surface of one or more objects but outside every other object.
\end{itemize}
}
\end{proof}

\subsection{QP analytical solution for the two-object case}
\label{app:analytical-2obj-proof}

\begin{claimbox}
Given the optimization problem:
\begin{equation}
{\rm minimize} \| \vd - \vu \|^2  \quad \mbox{subject to} \quad {\mino (\vd) + \mint (\vd)}{\ge 0 } \; . 
\end{equation}
in two dimensions, i.e. $\vd \in \mathbb{R}^2$, its optimal solution is equivalent to Shift-All, i.e. not changing $\vu$ in the satisfied case, and shifting both variables by $-(\mino (\vu) + \mint (\vu))/2$ in the unsatisfied case.
\end{claimbox}

\begin{proof}
Let us state the common way to formulate an optimization problem with inequality constraints:
\begin{equation}
{\rm minimize}\ f(\vd)  \quad \mbox{subject to} \quad g(\vd) \le 0 \; .
\end{equation}

Let $\vu = (u_1, u_2)$ and $\vd = (d_1, d_2)$. As there are only two elements, $\mino (\vd) + \mint(\vd) = d_1 + d_2 \ge 0 \implies -d_1 - d_2 \le 0$. 

Then, for our case, $f(\vd) = \| \vd - \vu \|^2 = (d_1 - u_1)^2 + (d_2 - u_2)^2$ and $g(\vd) = -d_1 - d_2$. Let's define the Lagrangian function:

\begin{equation}
\mathcal{L}(\vd, \lambda) = f(\vd) + \lambda g(\vd) = (d_1 - u_1)^2 + (d_2 - u_2)^2 - \lambda (d_1 + d_2)
\end{equation}

We now write down the KKT conditions:
\begin{enumerate}
\item \textbf{Stationarity:} $\nabla f(\vd^*) + \lambda \nabla g(\vd^*) = 0$.
\item \textbf{Primal feasibility:} $g(\vd^*) \le 0$.
\item \textbf{Dual feasibility:} $\lambda \ge 0$.
\item \textbf{Complementary slackness:} $\lambda g(\vd^*) = 0$.
\end{enumerate}

Before continuing our solution, we derive the gradients:
\begin{equation}
\nabla f(\vd) = 2
\begin{bmatrix}
d_1 - u_1 \\
d_2 - u_2
\end{bmatrix},
\quad
\nabla g(\vd) =
\begin{bmatrix}
-1 \\
-1
\end{bmatrix}
\end{equation}

\textbf{Inactive case}. Assume $g(\vd^*) < 0$, meaning the constraint is already satisfied. Then, by stationarity, $\nabla f(\vd^*) = 0$:
\begin{equation}
\nabla f(\vd^*) = 2
\begin{bmatrix}
d_1^* - u_1 \\
d_2^* - u_2
\end{bmatrix}
=
\begin{bmatrix}
    0 \\ 
    0
\end{bmatrix}
\implies
\begin{matrix}
d_1^* = u_1 \\
d_2^* = u_2 
\end{matrix},
\vd^* = \vu
\end{equation}
By complementary slackness, $\lambda = 0$. Let us check the feasibility:
\begin{itemize}
    \item \textbf{Primal:} $g(\vd^*) = g(\vu) = -u_1 -u_2 < 0 \implies u_1 + u_2 > 0$.
    \item \textbf{Dual:} $\lambda = 0 \ge 0$, satisfied. 
\end{itemize}
Feasibilities are satisfied and the inactive solution is valid when $u_1 + u_2 > 0$. The solution intuitively makes sense, as our projection does not need to modify the initial value $\vu$ in case the constraint is already satisfied. 

\textbf{Active case}. Assume $g(\vd^*) = 0$, meaning the the solution is at the constraint boundary. By complementary slackness: 
\begin{equation}
    \lambda g(\vd^*) = -\lambda (d_1^* + d_2^*) = 0 \implies d_2^* = -d_1^*
\end{equation}

By stationarity, $\nabla f(\vd^*) + \lambda \nabla g(\vd^*) = 0$, and substituting the above $d_2^* = -d_1^*$:
\begin{equation}
2
\begin{bmatrix}
d_1^* - u_1 \\
-d_1^* - u_2
\end{bmatrix}
-
\lambda
\begin{bmatrix}
1 \\
1
\end{bmatrix}
=
\begin{bmatrix}
    0 \\ 
    0
\end{bmatrix}
\implies
\begin{matrix}
d_1^* - u_1 = \lambda / 2 \\
-d_1^* - u_2 = \lambda / 2
\end{matrix}
\end{equation}
Summing the two equations, we get $\lambda = -(u_1 + u_2)$. 
We can plug this result back into the equations to derive $\vd^*$:
\begin{align}
    d_1^* &= u_1 + \frac{\lambda}{2} = u_1 - \frac{u_1 + u_2}{2} \\
    d_2^* &= u_2 + \frac{\lambda}{2} = u_2 - \frac{u_1 + u_2}{2}
\end{align}
Let's consider the feasibilities:
\begin{itemize}
    \item \textbf{Primal:} $g(\vd^*) = -d_1^* -d_2^* = -u_1 + \frac{u_1 + u_2}{2} - u_2 + \frac{u_1 + u_2}{2} = 0 \le 0$, satisfied.
    \item \textbf{Dual:} $\lambda = - u_1 - u_2 \ge 0 \implies u_1 + u_2 \le 0$. 
\end{itemize}
Feasibilities are satisfied and the inactive solution is valid when $u_1 + u_2 \le 0$, meaning both values need to be shifted by $-\dfrac{u_1 + u_2}{2}$ when the constraint is not satisfied on $\vu$. 

Because our objective function $f(\mathbf{d}) = \|\mathbf{d} - \mathbf{u}\|^2$ is strictly convex and the inequality constraint $g(\mathbf{d}) = -d_1 -d_2 \le 0$ is affine, the optimization problem is strictly convex. Under these conditions, the KKT conditions are both necessary and sufficient for the $\mathbf{d}^*$ we derived to be the \textbf{global optimum} \citep[Section 5.5.3, p. 244]{Boyd04}.

This analytical solution leads to the exact same formulations as our Shift-All. 
\end{proof}

\fs{
\subsection{Solution to the modified QP problem}
\label{app:analytical-qpmod-proof}

\begin{claimbox}
Let $\vu \in \mathbb{R}^K$ be the $K$-dimensional SDF vector at a point in space, the following optimization problem:
iven the optimization problem:
\begin{mini!}|l|
    {\vd \in \mathbb{R}^K}{\| \vd - \vu \|^2}
    {\label{eq:qp-nonlin}}{}
    \addConstraint{\mino (\vd) + \mint (\vd)}{> 0, }
    \addConstraint{\vd_i - \vd_j = \vu_i - \vu_j,}{\forall i, j \in [K], i < j}
\end{mini!}
has Shift-All as its optimal solution, i.e. not changing $\vu$ in the satisfied case, and shifting all variables by $-(\mino (\vu) + \mint (\vu))/2$ in the unsatisfied case.
\end{claimbox}
\begin{proof}
    The unconstrained minimizer of the objective function is $\vd = \vu$. If $\vu$ satisfies the MDF constraint, then $\vd = \vu$ is also the optimal solution to the constrained problem.

    If $\vu$ does not satisfy the MDF constraint, we notice that the second constraint forces solutions to be of the form $\vd = \vu + c \mathbf{1}$, where $\mathbf{1}$ is the vector of all ones. For simplicity, we define $\vd_m = \mino (\vd)$ and $\vd_n = \mint (\vd)$. Then, we have that:
\begin{align}
    \vd_m &= \vu_m + c \\
    \vd_n &= \vu_n + c
\end{align}
From the first constraint of the optimization problem, it follows that:
\begin{align}
    \vd_m + \vd_n = \vu_m + \vu_n + 2c \ge 0 \implies c \ge -\frac{\vu_m + \vu_n}{2}
\end{align}
The objective function is convex so, if its global minimum is not feasible, the optimal solution $\vd^*$ will be at the boundary of the feasible region. In fact, if it wasn't at the boundary, then it would be in the interior of the feasible region. In such case, $\nabla\| \vd - \vu \|^2 = 2(\vd - \vu) = 0$ at $\vd^*$, otherwise it would not be optimal, but this implies that $\vd^* = \vu$, which is not feasible. Therefore, $\vd^*$ must be at the boundary, meaning $c = -\frac{\vu_m + \vu_n}{2}$.

In short, if $\vu$ satisfies the MDF constraint, then $\vd^* = \vu$ is the optimal solution. If $\vu$ does not satisfy the MDF constraint, then $\vd^* = \vu - \frac{\mino (\vu) + \mint (\vu)}{2} \mathbf{1}$ is the optimal solution. This is exactly the Shift-All solution as defined in the main text.

\end{proof}
}

\end{document}